\newcommand{\printfnsymbol}[1]{%
  \textsuperscript{\@fnsymbol{#1}}%
}
\colorlet{shadecolor}{pink}
\theoremstyle{plain}
\newtheorem{theorem}{Theorem}[]
\newtheorem{lemma}[]{Lemma}
\newtheorem{proposition}{Proposition}
\newtheorem{definition}{Definition}[]
\newtheorem{assumption}{Assumption}[]
\newtheorem{fact}{Fact}
\newtheorem{remark}{Remark}[]
\newcommand{\rudrajit}[1]{\todo[color=red!25, inline]{Rudrajit: #1}}
\begin{document}
\title{\bf On the Convergence of Differentially Private Federated Learning on Non-Lipschitz Objectives, and with Normalized Client Updates}
\date{}
\author[$\bm{\dagger}$]{Rudrajit Das}
\author[$\bm{\ddagger}$]{Abolfazl Hashemi}
\author[$\bm{\dagger}$]{Sujay Sanghavi} \author[$\bm{\dagger}$]{Inderjit S. Dhillon}
\affil[$\bm{\dagger}$]{University of Texas at Austin}
\affil[$\bm{\ddagger}$]{Purdue University}
\maketitle

\begin{abstract}
\vspace{0.1 cm}
\noindent There is a dearth of convergence results for differentially private federated learning (FL) with non-Lipschitz objective functions (i.e., when gradient norms are not bounded). The primary reason for this is that the clipping operation (i.e., projection onto an $\ell_2$ ball of a fixed radius called the clipping threshold) for bounding the sensitivity of the average update to each client's update introduces bias depending on the clipping threshold and the number of local steps in FL, and analyzing this is not easy. For Lipschitz functions, the Lipschitz constant serves as a trivial clipping threshold with zero bias. 
However, Lipschitzness does not hold in many practical settings; moreover, verifying it and computing the Lipschitz constant is hard. Thus, the choice of the clipping threshold is non-trivial and requires a lot of tuning in practice. 
In this paper, we provide the first convergence result for private FL 
on smooth \textit{convex} objectives \textit{for a general clipping threshold} -- \textit{without assuming Lipschitzness}. We also look at a simpler alternative to clipping (for bounding sensitivity) which is \textit{normalization} -- where we use only a scaled version of the unit vector along the client updates, completely discarding the magnitude information. {The resulting normalization-based private FL algorithm is theoretically shown to have better convergence than its clipping-based counterpart on smooth convex functions. 
We corroborate our theory with synthetic experiments as well as experiments on benchmarking datasets.}
\end{abstract}

\section{Introduction}
\label{sec:intro}
Collaborative machine learning (ML) schemes such as federated learning (FL) \cite{mcmahan2017communication} are growing at an unprecedented rate. In contrast to the conventional centralized paradigm of training, wherein all the data is stored in a central database, FL (and in general, a collaborative ML scheme) enables training ML models from \textit{decentralized} and \textit{heterogeneous} data through collaboration of many participants, e.g., mobile devices, each with different data and capabilities. 
In a standard FL setting, there are $n$ clients (e.g., mobile phones or sensors), each with their own decentralized data, and a central server that is trying to train a model, parameterized by $\bm{w} \in \mathbb{R}^d$, using the clients' data.
Suppose the $i^{\text{th}}$ client has $m$ training examples/samples \footnote{In general, each client may have different number of training examples. We consider the case of equal number of examples per client for ease of exposition.} $\{\bm{x}_1^{(i)},\ldots,\bm{x}_{m}^{(i)}\} := \mathcal{D}_i$, drawn from some distribution $\mathcal{P}_i$. Then the $i^{\text{th}}$ client has an objective function $f_i(\bm{w})$ which is the average loss, w.r.t. some loss function $\ell$, over its $m$ samples, and the central server tries to optimize the average \footnote{In general, this average is a weighted one with the weight of a client being proportional to the number of samples in that client.} loss $f(\bm{w})$, over the $n$ clients, i.e.,
\begin{equation}
    \label{eq:fl-def}
    f(\bm{w}) := \frac{1}{n} \sum_{i=1}^{n} {f_i}(\bm{w}), \text{ where } f_i(\bm{w}) := \frac{1}{m}\sum_{j=1}^{m} \ell(\bm{x}_j^{(i)},\bm{w}).
\end{equation}
The setting where the data distributions of all the clients are identical, i.e. $\mathcal{P}_1 = \ldots = \mathcal{P}_n$, is known as the \enquote{homogeneous} setting.
Other settings are known as \enquote{heterogeneous} settings. We quantify heterogeneity in more detail in \Cref{prelim} (see \Cref{def:het}).

The key algorithmic idea of FL is Federated Averaging commonly abbreviated as \texttt{FedAvg} \cite{mcmahan2017communication}.
In \texttt{FedAvg}, at every round, the server randomly chooses a subset of the clients and sends them the latest global model. These clients then undertake \textit{multiple} steps of local updates (on the global model received from the server) with their respective data based on (stochastic) gradient descent, and then communicate back their respective updated local models to the server. The server then averages the clients' local models to update the global model (hence the name \texttt{FedAvg}). \texttt{FedAvg} forms the basis of more advanced federated optimization algorithms. For the sake of completeness, we state \texttt{FedAvg} in \Cref{alg:fedavg} (\Cref{fedavg-sec}). The convergence of \texttt{FedAvg} as well as other FL algorithms depends heavily on the number of local updates as well as the degree of data heterogeneity -- specifically, for the same number of local updates, the convergence worsens as the amount of heterogeneity increases.

Despite the locality of data storage in FL, information-sharing opens the door to the possibility of sabotaging the security of personal data through communication. Hence, it is crucial to devise effective, privacy-preserving communication strategies that ensure the integrity and confidentiality of user data. Differential privacy (DP) \cite{dwork2006calibrating} is a popular privacy-quantifying framework that is being incorporated in the training of ML models. In particular, DP focuses on a learning algorithm's sensitivity to an individual's data; a less sensitive algorithm is less likely to leak individuals' private details through its output. This idea has laid the foundation for designing a simple strategy to ensure privacy by adding random Gaussian or Laplacian noise to the output, where the noise is scaled according to the algorithm's sensitivity to an individual's data. We talk about DP in more detail in \Cref{prelim}.

There has been a lot of work on differentially private optimization in order to enable private training of ML models. In this regard, DP-SGD \cite{abadi2016deep} is the most widely used private optimization algorithm in the centralized setting. 
It is essentially the same as regular SGD, except that Gaussian noise is added to the average of the \enquote{clipped} per-sample gradients (or updates) for privacy. 
There is a natural extension of DP-SGD to the federated setting based on \texttt{FedAvg}, wherein the server receives a noise-perturbed average of the \enquote{clipped} client updates \cite{geyer2017differentially,thakkar2019differentially}; this is called \texttt{DP-FedAvg} (with clipping) and it is stated in \Cref{alg:dp-fedavg}.
Specifically, if the original update is $\bm{u}$, then its clipped version is $\bm{u}\min(1, \frac{C}{\|\bm{u}\|_2})$, for some threshold $C$; notice that this is the projection of $\bm{u}$ onto an $\ell_2$ ball of radius $C$ centered at the origin. Clipping is performed to bound the sensitivity of the average update to each individual 
update, which is required to set the variance of the added Gaussian noise; specifically, the noise variance is proportional to $C^2$.

While the privacy aspect of DP-SGD and its variants, both in the centralized and federated setting, is typically the main consideration, the optimization aspect -- particularly due to clipping -- is not given that much attention. Specifically, the average of the clipped updates is biased and the amount of bias depends on the clipping threshold $C$ -- the higher the value of $C$, the lower is the bias, and vice-versa. But as mentioned before, the noise variance is proportional to $C^2$. Thus, the choice of the clipping threshold $C$ is associated with an intrinsic tension between the bias and variance of the noise-perturbed average of the clipped updates, which impacts the rate of convergence. 

To provide convergence guarantees for DP-SGD, most prior works assume that the per-sample losses are Lipschitz (i.e., they have bounded gradients); under this assumption, setting $C$ equal to the Lipschitz constant results in zero bias, making the convergence analysis trivial.  But in practice, we cannot ascertain the Lipschitzness property, let alone figuring out the Lipschitz constant, due to which the choice of the clipping threshold is not trivial and requires a lot of tuning. So ideally, we would like to have convergence results for non-Lipschitz functions. However, there aren't too many in the literature, primarily because analyzing the clipping bias is not easy, and more so in the federated setting due to \textit{multiple local updates}. A few works in the centralized setting do provide some results for the non-Lipschitz case by making more relaxed assumptions \cite{chen2020understanding,wang2020differentially,kamath2021improved,bu2021convergence}; we discuss these in \Cref{sec:rel_wrk}. However, in the more challenging federated setting with multiple local update steps, there is no result even for the convex non-Lipschitz case. In this work, we provide the \textit{first convergence result for differentially private federated convex optimization with a general clipping
threshold, while not assuming Lipschitzness} or making any other relaxed assumption; see Theorems \ref{thm-clip-cvx-short} and \ref{cor-1}. Moreover, prior works do not consider whether performing multiple local update steps is indeed beneficial (or not) for private optimization; we make the first attempt to analyze this theoretically. Informally, under an extra assumption, our result indicates that multiple local update steps are beneficial if the degree of heterogeneity of the data is dominated by poor choice of initialization (of the model parameters) for training. See (c) in the discussion after \Cref{thm-clip-cvx-short}.

Further, we also propose a simpler alternative (compared to clipping) for bounding the sensitivity which is to always \textit{normalize} the individual client updates; specifically, if the original update is $\bm{u}$, then its normalized version is $\bm{u} (\frac{C}{\|\bm{u}\|_2})$, for some appropriate scaling factor $C$. Surprisingly, this simpler option has not been considered by prior works on private optimization. The resultant private FL algorithm, where we replace clipping by normalization, is summarized in \Cref{alg:dp-fedavg-2} and we call it \texttt{DP-NormFedAvg}. 
We explain why/how/when the simpler alternative of normalization will offer better convergence than clipping in private optimization both theoretically (see \Cref{sec:comp-1}) as well as intuitively (see \Cref{sec:dp-normfed}); we also elaborate on this while summarizing our contributions next. 
\\
\\
\noindent Our main \textbf{contributions} are summarized next:
\\
\noindent \textbf{(a)} In \Cref{thm-clip-cvx-short}, we provide a convergence result for \texttt{DP-FedAvg} with clipping (\Cref{alg:dp-fedavg}) which is the \textit{first convergence result for differentially private federated convex optimization with a general clipping threshold and without assuming Lipschitzness}, followed by a simplified (but less tight) convergence result in \Cref{cor-1}. Based on our derived result, we also attempt to quantify the benefit/harm of performing multiple local update steps in private optimization. Informally, under an extra assumption (\Cref{local_steps_asmp}), we show that \textit{multiple local updates are beneficial if the effect of poor initialization (of the model parameters) outweighs the effect of data heterogeneity by a factor depending on the privacy level}; see (c) in the discussion after \Cref{thm-clip-cvx-short}. 
\\
\\
\noindent \textbf{(b)} In \Cref{norm-sec}, we present \texttt{DP-NormFedAvg} (\Cref{alg:dp-fedavg-2}) where we replace update clipping by the simpler alternative of update \textit{normalization} (i.e., sending a scaled version of the \textit{unit vector} along the update) for bounding the sensitivity. We provide a convergence result for \texttt{DP-NormFedAvg} in \Cref{thm-new-clip-cvx-short} and compare it against the result of \texttt{DP-FedAvg} with clipping (i.e., \Cref{thm-clip-cvx-short}), showing that when the effect of poor initialization of the model parameters is more severe than the degree of data heterogeneity and/or if we can train for a large number of rounds, we expect the simpler alternative of normalization to offer better convergence than clipping in private optimization; see \Cref{rmk-cmp} and \Cref{sec:comp-1} for details. Intuitively, this happens because \textit{normalization has a higher signal (i.e., update norm) to noise ratio than clipping}; this aspect is discussed in detail in \Cref{sec:dp-normfed}.
\\
\\
\noindent \textbf{(c)} We demonstrate the superiority of normalization over clipping via experiments on a synthetic quadratic problem in \Cref{sec:norm-vs-clip-expt} as well as on three benchmarking datasets, viz., Fashion MNIST \cite{xiao2017fashion}, CIFAR-10 and CIFAR-100 in \Cref{sec:expts}. For our synthetic experiment, we show that normalization has a higher signal to noise ratio than clipping (as mentioned above) in \Cref{fig:2}, and that the trajectory of normalization (projected in 2D space) reaches closer to the optimum of the function than the trajectory of clipping in \Cref{fig:traj}. In the experiments on benchmarking datasets, for $\varepsilon = 5$, the improvement offered by normalization over clipping w.r.t. the test accuracy is more than $2.8$\% for CIFAR-100, $2.1$\% for Fashion MNIST and $1.5$\% for CIFAR-10; see \Cref{tab1}.

\section{Preliminaries}
\label{prelim}
In this work, we are able to naturally quantify the effect of heterogeneity on convergence as follows.
\begin{definition}[\textbf{Heterogeneity}]
\label{def:het}
Let $\bm{w}^{*} \in \arg \min_{\bm{w}' \in \mathbb{R}^d} f(\bm{w}')$ and $\Delta_i^{*} := f_i(\bm{w}^{*}) - \min_{\bm{w}' \in \mathbb{R}^d} f_i(\bm{w}') \geq 0$. 
Then the heterogeneity of the system is quantified by some increasing function of the $\Delta_i^{*}$'s.
\end{definition}
\noindent The above way of quantifying heterogeneity shows up naturally in our convergence results for private FL assuming that the $f_i$'s are convex and smooth. The exact function of $\Delta_i^{*}$'s (quantifying heterogeneity) depends on the algorithm as well as data, and this will become clear when we present the convergence results. Also note that if the per-client data distributions (i.e, $\mathcal{P}_i$'s) are similar, then we expect the $\Delta_i^{*}$'s to be small indicating smaller heterogeneity.
\\
\\
\noindent \textbf{Differential Privacy (DP):} Suppose we have a collection of datasets ${D}_c$ and a query function $h:{D}_c \xrightarrow{} \mathcal{X}$. Two datasets $\mathcal{D} \in {D}_c$ and $\mathcal{D}' \in {D}_c$ are said to be neighboring if they differ in exactly one sample, and we denote this by $|\mathcal{D} - \mathcal{D}'| = 1$. A randomized mechanism $\mathcal{M}:\mathcal{X} \xrightarrow{} \mathcal{Y}$ is said to be $(\varepsilon,\delta)$-DP, if for any two neighboring datasets $\mathcal{D},\mathcal{D}' \in {D}_c$ and for any measurable subset of outputs $\mathcal{R} \in \mathcal{Y}$,
\begin{equation}
    \label{eq:dp-1}
    \mathbb{P}(\mathcal{M}(h(\mathcal{D})) \in \mathcal{R}) \leq e^{\varepsilon} \mathbb{P}(\mathcal{M}(h(\mathcal{D}')) \in \mathcal{R}) + \delta.
\end{equation}
When $\delta = 0$, it is commonly known as pure DP. Otherwise, it is known as approximate DP.

Adding random Gaussian noise to the output of $h(.)$ above is the customary approach to provide DP; this is known as the Gaussian mechanism and we formally define it below.
\begin{definition}[\textbf{Gaussian mechanism \cite{dwork2014algorithmic}}]
Suppose $\mathcal{X}$ (i.e., the range of the query function $h$ above) is $\mathbb{R}^p$. Let $\Delta_2 := \sup_{\mathcal{D}, \mathcal{D}' \in {D}_c: |\mathcal{D} - \mathcal{D}'| = 1}\|h(\mathcal{D}) - h(\mathcal{D}')\|_2$. If we set
\begin{equation*}
    \mathcal{M}(h(\mathcal{D})) = h(\mathcal{D}) + \bm{Z},
\end{equation*}
where $\bm{Z} \sim \mathcal{N}\Big(\vec{0}_p, \frac{2 \log(1.25/\delta) \Delta_2^2}{\varepsilon^2} \textup{I}_p\Big)$, 
then the mechanism $\mathcal{M}$ is $(\varepsilon,\delta)$-DP. 
\end{definition}
\noindent The Gaussian mechanism is also employed in private optimization \cite{abadi2016deep}.

\begin{definition}[\textbf{Lipschitz}]
\label{def-lip}
A function $g:\Theta \xrightarrow{} \mathbb{R}$ is to said to be $G$-Lipschitz if $\sup_{\bm{\theta} \in \Theta}\|\nabla g(\bm{\theta})\|_2 \leq G$.
\end{definition}

\begin{definition}[\textbf{Smoothness}]
A function $g:\Theta \xrightarrow{} \mathbb{R}$ is to said to be $L$-smooth if for all $\bm{\theta}, \bm{\theta}' \in \Theta$, $\|\nabla g(\bm{\theta}) - \nabla g(\bm{\theta}')\|_2 \leq L\|\bm{\theta} - \bm{\theta}'\|_2$. If $g$ is twice differentiable, then for all $\bm{\theta}, \bm{\theta}' \in \Theta$:
\begin{equation*}
    g(\bm{\theta}') \leq g(\bm{\theta}) + \langle \nabla g(\bm{\theta}), \bm{\theta}' - \bm{\theta} \rangle + \frac{L}{2}\|\bm{\theta}' - \bm{\theta}\|_2^2.
\end{equation*}
\end{definition}
\begin{definition}[\textbf{A Key Quantity}]
\label{key_qty}
All the theoretical results in this paper are expressed in terms of the following key quantity:
\begin{equation}
    \rho := \frac{\sqrt{q d \log({1}/{\delta})}}{n\varepsilon},
\end{equation}
where $(\varepsilon,\delta)$-DP is the desired privacy level, $n$ is the number of samples, $d$ is the parameter dimension and $q$ is the absolute constant in \Cref{thm-dp}. Further, all our results are for the non-vacuous privacy regime, i.e., when $\varepsilon$ is finite and $\delta < 1$, where $\rho > 0$. 
Finally, we also assume that $n$ is sufficiently large so that $\rho < 1$.
\end{definition}
\noindent Note that $\rho$ increases as the level of privacy increases (i.e., $\varepsilon$ and $\delta$ decrease), and vice versa.
\\
\\
\noindent \textbf{Notation:} Throughout the rest of this paper, we denote the $\ell_2$ norm simply by $\|.\|$ (omitting the subscript 2). Vectors and matrices are written in boldface. 
We denote the uniform distribution over the integers $\{0,\ldots,a\}$ (where $a \in \mathbb{N}$) by $\text{unif}[0,a]$.
The function $\text{clip}: \mathbb{R}^d \times \mathbb{R}^{+} \xrightarrow{} \mathbb{R}^d$ is defined as:
\begin{equation}
    \label{eq:clip-def}
    \text{clip}(\bm{z},c) := \bm{z}\min\Big(1, \frac{c}{\|\bm{z}\|}\Big). 
\end{equation}
$K$ is the number of communication rounds or the number of global updates, $E$ is the number of local updates per round, and $r$ is the number of clients that the server accesses in each round.
\\
\\
{The proofs of all theoretical results are in the Appendix.}


\section{Related Work}
\label{sec:rel_wrk}
\textbf{Differentially private optimization:} Most differentially private optimization algorithms for training ML models (both in the centralized and federated settings) are based off of DP-SGD, wherein the optimizer receives a Gaussian noise-perturbed average of the \textit{clipped} per-sample gradients (to guarantee DP), and the moments accountant method \cite{abadi2016deep}. 
Similar to and/or related to DP-SGD, there are several papers on private optimization algorithms in the {centralized} setting \cite{chaudhuri2008privacy, chaudhuri2011differentially,kifer2012private,song2013stochastic,duchi2013local,bassily2014private,talwar2015nearly,wu2017bolt,zhang2017efficient,wang2018differentially,iyengar2019towards,bassily2019private,feldman2020private,asi2021private} as well as in the federated and distributed (without multiple local updates) setting \cite{geyer2017differentially,agarwal2018cpsgd,thakkar2019differentially,li2019privacy,peterson2019private,nguyen2021flguard,girgis2021shuffled}. \texttt{DP-FedAvg} with clipping \cite{geyer2017differentially,thakkar2019differentially} (stated in \Cref{alg:dp-fedavg}) is the most standard private algorithm in the federated setting. Among these previously mentioned works in the centralized setting, the ones that do provide convergence guarantees assume Lipschitzness and they set the clipping threshold equal to the Lipschitz constant, obtaining a suboptimality gap (i.e., $\mathbb{E}[f(\bm{w}_\text{priv})] - \min_{\bm{w}} f(\bm{w})$, where $\bm{w}_\text{priv}$ is the output) in the convex case of $\mathcal{O}(\rho)$, where $\rho = \mathcal{O}\Big(\frac{\sqrt{d \log({1}/{\delta})}}{n\varepsilon}\Big)$ is the key quantity defined in \Cref{key_qty}.
In fact, \cite{bassily2014private} show that in the convex Lipschitz case, the $\mathcal{O}(\rho)$ suboptimality gap is tight. 
However, as mentioned in \Cref{sec:intro}, Lipschitzness is not a very practical assumption, due to which it is important to obtain convergence guarantees under weaker assumptions where there is no trivial clipping threshold. To that end, there a few results in the \textit{centralized} setting that do not make the simplistic Lipschitzness assumption, but instead make more relaxed assumptions such as gradients having bounded moments \cite{wang2020differentially,kamath2021improved} or the stochastic gradient noise having a symmetric probability distribution function \cite{chen2020understanding}. Also, \cite{bu2021convergence} analyze full-batch DP-GD from the NTK perspective for deep learning models. In comparison, there are hardly any convergence results for \textit{private federated optimization} (which is harder to analyze due to multiple local steps) of non-Lipschitz objectives; \cite{zhang2021understanding} provide a complicated result for the nonconvex case, but surprisingly there is no result for the convex case. In addition, the role of multiple local steps in \textit{private} FL has not been theoretically studied.
\\
\\
\noindent \textbf{Normalized gradient descent (GD) and related methods:} In the centralized setting, \cite{hazan2015beyond} propose (Stochastic) Normalized GD. This is based on a similar idea as \texttt{DP-NormFedAvg} -- instead of using the (stochastic) gradient, use the \textit{unit vector} along the (stochastic) gradient for the update. Extensions of this method incorporating momentum \cite{you2017large,you2019large,cutkosky2020momentum} have been shown to significantly improve the training time of very large models such as BERT in the centralized setting. 
In the FL setting, \cite{charles2021large} propose Normalized \texttt{FedAvg}, where the server uses a normalized version of the average of client updates (and \textit{not} the average of normalized client updates, which is what we do) to improve training.
However, it must be noted here that these works perform (some kind of) normalization to accelerate \textit{non-private} training, whereas we are proposing normalization as an alternative \textit{sensitivity bounding} mechanism to improve \textit{private} training compared to the usual mechanism of clipping. 

\section{Convergence of Vanilla \texttt{DP-FedAvg} with Client-Update Clipping}
First, we focus on the most standard version of \texttt{DP-FedAvg} involving client-update \textit{clipping}, which is summarized in \Cref{alg:dp-fedavg}.
The primary difference from \texttt{FedAvg} is in {lines 9, 10 and 12} of \Cref{alg:dp-fedavg}.
Each client in the selected subset of clients sends its \textit{clipped} update plus zero-mean Gaussian noise (for differential privacy) to the server; since Gaussian noise is additive, we can add it at the clients itself.
The server then computes the mean of the noisy \textit{clipped} client updates that it received (i.e., $\bm{a}_k$) and then uses it to update the global model similar to \texttt{FedAvg}, except with a potentially different global learning rate ($\beta_k$) than the local learning rate ($\eta_k$). Since each $\bm{\zeta}^{(i)}_k$ (i.e., noise added at client $i$) is $\mathcal{N}(\vec{0}_d, r \sigma^2 \bm{\textup{I}}_d)$, the average noise at the server is $\mathcal{N}(\vec{0}_d, \sigma^2 \bm{\textup{I}}_d)$.
Using the moments accountant method of \cite{abadi2016deep}, we now specify the value of $\sigma^2$ required to make \Cref{alg:dp-fedavg} $(\varepsilon, \delta)$-DP.
\begin{theorem}[\textbf{\cite{abadi2016deep}}]
\label{thm-dp}
For any $0<\varepsilon < \mathcal{O}\Big(\frac{r^2 K}{n^2}\Big)$, \Cref{alg:dp-fedavg} will be $(\varepsilon, \delta)$-DP as long as
\begin{equation}
    \label{eq:dp-noise}
    \sigma^2 = q K C^2 \frac{\log({1}/{\delta})}{n^2 \varepsilon^2},
\end{equation}
where $q>0$ is an absolute constant.
\end{theorem}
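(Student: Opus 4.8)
The plan is to recognize the per-round release of \Cref{alg:dp-fedavg} as a subsampled Gaussian mechanism and then invoke the moments accountant of \cite{abadi2016deep} to compose privacy across the $K$ rounds. First I would pin down exactly what is released. Within a single round $k$, each selected client performs its $E$ local update steps purely on its own data and reports only $\text{clip}(\bm{u}^{(i)}_k, C) + \bm{\zeta}^{(i)}_k$; the $E$ local steps are a deterministic function of that client's data applied \emph{before} clipping, so by the post-processing property they do not enlarge the sensitivity and the multiplicity of local steps is irrelevant to the privacy argument. The only quantity whose sensitivity matters is the aggregate $\sum_{i \in S_k}\text{clip}(\bm{u}^{(i)}_k, C)$ over the set $S_k$ of the $r$ sampled clients (the server's $1/r$ averaging to form $\bm{a}_k$ is again post-processing). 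Since $\|\text{clip}(\bm{z}, C)\| \le C$ for every $\bm{z}$ by \eqref{eq:clip-def}, replacing one client's data changes exactly one summand, each of norm at most $C$, so the $\ell_2$-sensitivity of this aggregate is $\mathcal{O}(C)$, which I would absorb into the absolute constant $q$.

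Second, I would identify the randomness. The server touches only $r$ of the $n$ clients per round, chosen uniformly at random, so each client participates with probability $r/n$; this is precisely the subsampling that yields privacy amplification. The noise injected into the aggregate is $\sum_{i \in S_k}\bm{\zeta}^{(i)}_k \sim \mathcal{N}(\vec{0}_d, r^2\sigma^2 \bm{\textup{I}}_d)$, so the effective noise multiplier (noise standard deviation divided by sensitivity) is proportional to $r\sigma/C$. With sampling ratio $r/n$, sensitivity $\mathcal{O}(C)$, and noise standard deviation $r\sigma$, the round-$k$ mechanism is exactly the subsampled Gaussian mechanism analyzed in \cite{abadi2016deep}.

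Third, I would run the accountant. For each round the $\alpha$-th log moment of the privacy-loss random variable is bounded by a term of order $(r/n)^2\,\alpha(\alpha+1)/\sigma_{\mathrm{mult}}^2$ for all moment orders $\alpha$ in an admissible range, with $\sigma_{\mathrm{mult}}\propto r\sigma/C$. Because log moments add under composition, after $K$ rounds this bound simply scales by $K$. I would then convert the composed moment bound to an $(\varepsilon,\delta)$ guarantee via the tail inequality $\delta \ge \exp(\text{log-moment} - \alpha\varepsilon)$, optimizing over $\alpha$; the admissibility of the minimizing $\alpha$ is exactly what forces the upper restriction $\varepsilon < \mathcal{O}(r^2 K/n^2) = \mathcal{O}((r/n)^2 K)$ stated in the theorem. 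Solving the resulting requirement on the noise multiplier gives $\sigma_{\mathrm{mult}}^2 = (r\sigma/C)^2 \gtrsim (r/n)^2 K \log(1/\delta)/\varepsilon^2$, i.e. $\sigma^2 \gtrsim K C^2 \log(1/\delta)/(n^2\varepsilon^2)$, which is the claimed expression once the hidden constant is named $q$.

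The main obstacle is the per-round moment bound for the subsampled Gaussian mechanism, whose proof requires a delicate binomial expansion of the subsampled mixture distribution and careful control of the higher moments. Since \Cref{thm-dp} is quoted directly from \cite{abadi2016deep}, I would cite that bound rather than reprove it, so the only genuinely new work is the bookkeeping that maps the federated quantities -- the $r$ sampled clients, the per-client noise variance $r\sigma^2$, and the clip-induced sensitivity $C$ -- onto their centralized counterparts, together with the observation that multiple local steps are privacy-neutral post-processing.
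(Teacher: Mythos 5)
Your proposal is correct and takes essentially the same route as the paper: both reduce \Cref{alg:dp-fedavg} to the subsampled-Gaussian moments accountant of \cite{abadi2016deep} with sampling ratio $r/n$, $K$ compositions, and clip-induced sensitivity $C$, deferring the per-round moment bound to that paper, and your bookkeeping recovers exactly the stated $\sigma^2$ and the restriction $\varepsilon < \mathcal{O}(r^2K/n^2)$. The only cosmetic difference is how the random-iterate output $\bm{w}_\text{priv}$ is covered: you get it implicitly because your composition bounds the privacy of the full transcript $(\bm{a}_0,\ldots,\bm{a}_{K-1})$ (so any iterate follows by post-processing), whereas the paper argues that if the last iterate is $(\varepsilon,\delta)$-DP then so is any earlier one by additivity of the privacy cost.
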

Note that the original DP-SGD algorithm of \cite{abadi2016deep} returns the last iterate (i.e., $\bm{w}_K$) as the output, and Theorem 1 in their paper guarantees that the last iterate is $(\varepsilon, \delta)$-DP by setting $\sigma^2$ as per \cref{eq:dp-noise}. But if the last iterate is $(\varepsilon, \delta)$-DP, then so is \textit{any} other iterate (due to additivity of the privacy cost), from which \Cref{thm-dp} follows.

\begin{algorithm}
\caption{\texttt{DP-FedAvg} (with clipping)}
\label{alg:dp-fedavg}
\begin{algorithmic}[1]
\STATE {\bfseries Input:}
Initial point $\bm{w}_0$, number of rounds of communication $K$, number of local updates per round $E$, local learning rates $\{\eta_k\}_{k=0}^{K-1}$, global learning rates $\{\beta_k\}_{k=0}^{K-1}$, clipping threshold $C$, number of participating clients in each round $r$ and noise variance $\sigma^2$.
\vspace{0.1 cm}
\FOR{$k=0,\dots,K-1$}
\vspace{0.1 cm}
\STATE 
Server sends $\bm{w}_k$ to a random set $\mathcal{S}_k$ of clients, formed by sampling each client $\in [n]$ with probability $r/n$.
\vspace{0.1 cm}
\FOR{client $i \in \mathcal{S}_k$}
\vspace{0.1 cm}
\STATE Set $\bm{w}_{k,0}^{(i)} = \bm{w}_k$.
\vspace{0.1 cm}
\FOR{$\tau = 0,\dots,E-1$}
\vspace{0.1 cm}
\STATE Update $\bm{w}_{k,\tau+1}^{(i)} \xleftarrow{} \bm{w}_{k,\tau}^{(i)} - \eta_k \nabla f_i(\bm{w}_{k,\tau}^{(i)})$. 
\vspace{0.1 cm}
\ENDFOR
\vspace{0.1 cm}
\STATE Let $\bm{u}^{(i)}_k = \frac{\bm{w}_{k} - \bm{w}_{k,E}^{(i)}}{\eta_k}$ and $\bm{g}^{(i)}_k = \text{clip}\big(\bm{u}^{(i)}_k, C \big) = \bm{u}^{(i)}_k \min\Big(1, \frac{C}{\|\bm{u}^{(i)}_k\|}\Big)$. // \texttt{($\bm{u}^{(i)}_k$ is client $i$'s update.)}
\vspace{0.1 cm}
\STATE Send $(\bm{g}^{(i)}_k + \bm{\zeta}^{(i)}_k)$ to the server, where $\bm{\zeta}^{(i)}_k \sim \mathcal{N}(\vec{0}_d, r \sigma^2 \bm{\textup{I}}_d)$.
\vspace{0.1 cm}
\ENDFOR
\vspace{0.1 cm}
\STATE Update $\bm{w}_{k+1} \xleftarrow{} \bm{w}_k - \beta_k \bm{a}_k$, where $\bm{a}_k = \frac{1}{r}\sum_{i \in \mathcal{S}_k}(\bm{g}^{(i)}_k + \bm{\zeta}^{(i)}_k)$.
\vspace{0.1 cm}
\STATE Return $\bm{w}_\text{priv} = \bm{w}_{\tilde{k}}$, where $\tilde{k} \sim \text{unif}[0,K-1]$.
\vspace{0.1 cm}
\ENDFOR
\end{algorithmic}
\end{algorithm}
We now present the abridged convergence result for \Cref{alg:dp-fedavg}; the full version and proof can be found in \Cref{thm1-details}. 
\begin{theorem}[\textbf{Convergence of \texttt{DP-FedAvg} with Clipping: Convex Case}]
\label{thm-clip-cvx-short}
Suppose each $f_i$ is convex and $L$-smooth over $\mathbb{R}^d$. Let $\hat{C} := \frac{C}{E}$, where $C$ is the clipping threshold used in \Cref{alg:dp-fedavg}. For any $\bm{w}^{*} \in \arg \min_{\bm{w}' \in \mathbb{R}^d} f(\bm{w}')$ and $\Delta_i^{*} := f_i(\bm{w}^{*}) - \min_{\bm{w}' \in \mathbb{R}^d} f_i(\bm{w}') \geq 0$, \Cref{alg:dp-fedavg} with $\hat{C} \geq 4 \sqrt{L \max_{j \in [n]} \Delta_j^{*}}$, $\beta_k = \eta_k = \eta = \frac{\rho}{2 \alpha L}$, $K = \big(\frac{2 \alpha \gamma}{\hat{C} E}\big) \frac{1}{\rho^2}$ and $E \leq \frac{\alpha}{2 \rho}$, where $\gamma > 0$ and $\alpha \geq 1$ are constants of our choice, has the following convergence guarantee:
\begin{multline}
        \label{eq:thm1-1}
        \mathbb{E}\Bigg[\frac{1}{n}\sum_{i=1}^n \Bigg(\mathbbm{1}(\|\bm{u}_{\tilde{k}}^{(i)}\| \leq  \hat{C}E) \Big(2 - \frac{\rho E}{\alpha} - \frac{\rho^2 E^2}{\alpha^2}\Big)(f_i(\bm{w}_{\tilde{k}}) - f_i(\bm{w}^{*})) +  \mathbbm{1}(\|\bm{u}_{\tilde{k}}^{(i)}\| > \hat{C}E) \Big(\frac{3 \hat{C}}{8 L E} \|\bm{u}_{\tilde{k}}^{(i)}\|\Big)\Bigg)\Bigg]
        \\
        \leq \underbrace{\hat{C}\Big(\frac{L \|\bm{w}_{0} - \bm{w}^{*}\|^2}{\gamma} + \frac{\gamma}{L}\Big)\rho}_{:= \textup{A(effect of \textbf{initialization})}} + \underbrace{\Big(\frac{3 E}{2 \alpha}\Big) \mathbb{E}\Bigg[\frac{1}{n}\sum_{i=1}^n {\mathbbm{1}(\|\bm{u}_{\tilde{k}}^{(i)}\| \leq  \hat{C}E)} \Delta_i^{*}\Bigg] \rho}_{:= \textup{B(effect of \textbf{heterogeneity})}},
\end{multline}
with $\tilde{k} \sim \textup{unif}[0,K-1]$.
\end{theorem}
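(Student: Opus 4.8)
The plan is to run a distance-to-optimum (potential $\|\bm{w}_k-\bm{w}^*\|^2$) analysis, but carefully splitting each client's contribution according to whether its update is clipped, and controlling the local drift purely through convexity, smoothness, and the smallness condition $\eta E \le \tfrac{1}{4L}$ that the stated choices $\eta=\frac{\rho}{2\alpha L}$ and $E\le\frac{\alpha}{2\rho}$ guarantee. First I would expand one round: writing $\bm{w}_{k+1}=\bm{w}_k-\eta\bm{a}_k$ and taking expectation over the client sampling and the Gaussian noise conditional on $\bm{w}_k$, I would use $\mathbb{E}[\bm{a}_k\mid\bm{w}_k]=\frac1n\sum_i\bm{g}_k^{(i)}$ (each client sampled w.p.\ $r/n$, noise mean zero) to get $\mathbb{E}\|\bm{w}_{k+1}-\bm{w}^*\|^2=\|\bm{w}_k-\bm{w}^*\|^2-\frac{2\eta}{n}\sum_i\langle\bm{g}_k^{(i)},\bm{w}_k-\bm{w}^*\rangle+\eta^2\mathbb{E}\|\bm{a}_k\|^2$. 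For the second-moment term I would separate noise from signal: the averaged noise contributes $\sigma^2 d = K C^2\rho^2$ (via \Cref{thm-dp}, using $\rho^2=qd\log(1/\delta)/(n^2\varepsilon^2)$), and the sampled signal obeys $\mathbb{E}_{\mathrm{samp}}\|\frac1r\sum_{i\in\mathcal{S}_k}\bm{g}_k^{(i)}\|^2\le(1+\tfrac1r)\frac1n\sum_i\|\bm{g}_k^{(i)}\|^2$ since each $\|\bm{g}_k^{(i)}\|\le C$. Crucially, rather than bounding the signal second moment in isolation, I would keep it attached to the linear term and analyze, per client, the combined quantity $-2\eta\langle\bm{g}_k^{(i)},\bm{w}_k-\bm{w}^*\rangle+\eta^2(1+\tfrac1r)\|\bm{g}_k^{(i)}\|^2$; this pairing is what produces the coefficient $2-\frac{\rho E}{\alpha}-\frac{\rho^2E^2}{\alpha^2}$ (note $2L\eta E=\rho E/\alpha$).

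Next I would do the per-client case split. When $\|\bm{u}_k^{(i)}\|\le C=\hat{C}E$ (no clipping) we have $\bm{g}_k^{(i)}=\bm{u}_k^{(i)}=\sum_{\tau=0}^{E-1}\nabla f_i(\bm{w}_{k,\tau}^{(i)})$; I would decompose each $\langle\nabla f_i(\bm{w}_{k,\tau}^{(i)}),\bm{w}_k-\bm{w}^*\rangle=\langle\nabla f_i(\bm{w}_{k,\tau}^{(i)}),\bm{w}_{k,\tau}^{(i)}-\bm{w}^*\rangle+\langle\nabla f_i(\bm{w}_{k,\tau}^{(i)}),\bm{w}_k-\bm{w}_{k,\tau}^{(i)}\rangle$, lower-bound the first summand by convexity ($\ge f_i(\bm{w}_{k,\tau}^{(i)})-f_i(\bm{w}^*)$), and control the drift $\bm{w}_k-\bm{w}_{k,\tau}^{(i)}=\eta\sum_{s<\tau}\nabla f_i(\bm{w}_{k,s}^{(i)})$ through $L$-smoothness together with the self-bounding inequality $\|\nabla f_i(\bm{\theta})\|^2\le 2L(f_i(\bm{\theta})-\min f_i)$, so that all residual terms collapse onto $f_i(\bm{w}_k)-f_i(\bm{w}^*)$ and the heterogeneity constants $\Delta_i^*$ (via $\min f_i=f_i(\bm{w}^*)-\Delta_i^*$). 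When $\|\bm{u}_k^{(i)}\|>\hat{C}E$ (clipping) we have $\bm{g}_k^{(i)}=\frac{C}{\|\bm{u}_k^{(i)}\|}\bm{u}_k^{(i)}$; here I would lower-bound $\langle\bm{u}_k^{(i)},\bm{w}_k-\bm{w}^*\rangle\ge\frac{\|\bm{u}_k^{(i)}\|^2}{2LE}-E\Delta_i^*-(\mathrm{drift})$ using Cauchy--Schwarz ($\|\bm{u}_k^{(i)}\|^2\le E\sum_\tau\|\nabla f_i(\bm{w}_{k,\tau}^{(i)})\|^2$), then use $\|\bm{u}_k^{(i)}\|>\hat{C}E$ once to convert $\frac{\|\bm{u}_k^{(i)}\|^2}{2LE}$ into $\frac{\hat{C}\|\bm{u}_k^{(i)}\|}{2L}$, while $\hat{C}\ge 4\sqrt{L\max_j\Delta_j^*}$ (i.e.\ $E\Delta_i^*\le\frac{\hat{C}\|\bm{u}_k^{(i)}\|}{16L}$) ensures the $\Delta_i^*$ and drift corrections only shave the constant from $\tfrac12$ to $\tfrac38$, yielding the clipped contribution $\frac{3\hat{C}}{8LE}\|\bm{u}_k^{(i)}\|$ after multiplying by $\frac{C}{\|\bm{u}_k^{(i)}\|}$ and normalizing.

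Finally I would assemble the one-round inequality as $\frac{2\eta}{n}\sum_i(\text{per-client signal})\le\|\bm{w}_k-\bm{w}^*\|^2-\mathbb{E}\|\bm{w}_{k+1}-\bm{w}^*\|^2+\eta^2\sigma^2 d+(\text{absorbed }\eta^2C^2\text{ terms})$, take total expectation, telescope over $k=0,\dots,K-1$ (intermediate distances cancel, only $\|\bm{w}_0-\bm{w}^*\|^2$ survives), and divide by $2\eta E K$; averaging over rounds is exactly averaging over $\tilde{k}\sim\mathrm{unif}[0,K-1]$, producing the indicator-weighted left-hand side. Substituting $\eta=\frac{\rho}{2\alpha L}$ and $K=\frac{2\alpha\gamma}{\hat{C}E\rho^2}$ then turns $\frac{\|\bm{w}_0-\bm{w}^*\|^2}{2\eta E K}$ into $\hat{C}\frac{L\|\bm{w}_0-\bm{w}^*\|^2}{\gamma}\rho$ and the noise term $\frac{\eta\sigma^2 d}{E}$ into $\hat{C}\frac{\gamma}{L}\rho$, giving term A, while the aggregated $\Delta_i^*$ corrections restricted to the non-clipping clients give term B.

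The hardest part, I expect, is the drift control without any Lipschitz/bounded-gradient assumption: every stray term generated by $\bm{w}_{k,\tau}^{(i)}\ne\bm{w}_k$ must be re-expressed through $L$-smoothness and $\|\nabla f_i\|^2\le 2L(f_i-\min f_i)$ so that it is dominated by a small multiple of the function gap plus $\Delta_i^*$, and this has to be done uniformly over both the clipped and unclipped regimes so the two cases fuse into a single telescopable bound. Making the clip-case lower bound come out proportional to $\|\bm{u}_k^{(i)}\|$ --- which is what lets the indicator term survive on the left without an a priori bound on the update norm --- is the delicate step, and it is precisely where the threshold condition $\hat{C}\ge 4\sqrt{L\max_j\Delta_j^*}$ and the smallness condition $\eta E\le\tfrac{1}{4L}$ are consumed.
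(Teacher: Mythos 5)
Your proposal is correct and follows essentially the same route as the paper's proof: the same $\|\bm{w}_k-\bm{w}^{*}\|^2$ potential with a per-client split on whether clipping occurs, the same convexity-plus-smoothness drift control through $\|\nabla f_i\|^2\le 2L(f_i-\min f_i)$, the same use of $\|\bm{u}_k^{(i)}\|>\hat{C}E$ together with $\hat{C}\ge 4\sqrt{L\max_j\Delta_j^{*}}$ to turn the quadratic progress into the linear term $\frac{3\hat{C}}{8LE}\|\bm{u}_k^{(i)}\|$, and the same telescoping and substitution $\eta=\frac{\rho}{2\alpha L}$, $K=\frac{2\alpha\gamma}{\hat{C}E\rho^2}$. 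The one discrepancy is your $\bigl(1+\tfrac{1}{r}\bigr)$ factor on the sampled signal's second moment: the paper instead bounds $\mathbb{E}\bigl\|\tfrac{1}{r}\sum_{i\in\mathcal{S}_k}\bm{g}_k^{(i)}\bigr\|^2$ by $\tfrac{1}{n}\sum_i\|\bm{g}_k^{(i)}\|^2$ (Jensen, coefficient $1$), and that coefficient-$1$ bound is what yields the exact constants $2-\frac{\rho E}{\alpha}-\frac{\rho^2E^2}{\alpha^2}$ and $\frac{3E}{2\alpha}$ in the statement, so carrying your extra $\tfrac{1}{r}$ through would slightly degrade those constants (though not the order of the result).
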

In the above result, we remind the reader that $\bm{u}_{\tilde{k}}^{(i)}$ is the $i^{\text{th}}$ client's update at a random round number $\tilde{k}$ (see line 9 in \Cref{alg:dp-fedavg}). Also, this result is for the non-vacuous privacy regime, where $\rho := \frac{\sqrt{q d  \log({1}/{\delta})}}{n\varepsilon} > 0$\footnote{If $\rho = 0$, \Cref{alg:dp-fedavg} reduces to vanilla \textit{non-private} \texttt{FedAvg}. One can recover the convergence result of vanilla \texttt{FedAvg} by just changing the learning rates appropriately in the proofs of our theorems.}.

We also present a simplified, but less tight, convergence result based on \Cref{thm-clip-cvx-short}; its proof is in \Cref{cor-1-pf}.
\begin{theorem}[\textbf{Simplified version of \Cref{thm-clip-cvx-short}}]
\label{cor-1}
With $\gamma = \mathcal{O}(L \|\bm{w}_{0} - \bm{w}^{*}\|)$ and $\alpha = \mathcal{O}(1)$, the convergence result of \Cref{thm-clip-cvx-short} can be simplified to:
\begin{equation}
     \mathbb{E}\Bigg[\frac{1}{n}\sum_{i=1}^n \min\Bigg(f_i(\bm{w}_{\tilde{k}}) - f_i(\bm{w}^{*}), \mathcal{O}\Big(\frac{\hat{C}}{L} \|\nabla f_i(\bm{w}_{\tilde{k}})\|\Big)\Bigg)\Bigg]
     \leq \mathcal{O}\Bigg(\hat{C} \|\bm{w}_{0} - \bm{w}^{*}\| + E \Big(\frac{1}{n}\sum_{i=1}^n \Delta_i^{*}\Big)\Bigg)\rho.
\end{equation}
\end{theorem}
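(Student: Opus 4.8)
The plan is to derive \Cref{cor-1} directly from \Cref{thm-clip-cvx-short} by a sandwiching argument: I will (i) simplify the right-hand side of \eqref{eq:thm1-1} into the two advertised big-$\mathcal{O}$ terms using the freedom in $\gamma$ and $\alpha$, and (ii) lower-bound the left-hand side of \eqref{eq:thm1-1} by the averaged $\min$ expression of \Cref{cor-1}. Chaining these two reductions through the inequality \eqref{eq:thm1-1} then yields the claim.

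For the right-hand side, the effect-of-initialization term $\textup{A}$ carries the factor $\frac{L\|\bm{w}_0 - \bm{w}^*\|^2}{\gamma} + \frac{\gamma}{L}$, which by AM–GM is at least $2\|\bm{w}_0 - \bm{w}^*\|$ and is of exactly this order when $\gamma = \Theta(L\|\bm{w}_0 - \bm{w}^*\|)$; substituting this choice collapses $\textup{A}$ to $\mathcal{O}(\hat{C}\|\bm{w}_0 - \bm{w}^*\|)\rho$. For the effect-of-heterogeneity term $\textup{B}$, I take $\alpha = \Theta(1)$ so that $\frac{3E}{2\alpha} = \Theta(E)$, and then bound the indicator by $1$ (using $\Delta_i^* \geq 0$) to replace $\mathbb{E}[\frac{1}{n}\sum_i \mathbbm{1}(\cdot)\Delta_i^*]$ by $\frac{1}{n}\sum_i \Delta_i^*$, giving $\textup{B} = \mathcal{O}(E \cdot \frac{1}{n}\sum_i \Delta_i^*)\rho$. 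Adding the two reproduces the right-hand side of \Cref{cor-1}.

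For the left-hand side I will argue term-by-term inside $\frac{1}{n}\sum_i \mathbb{E}[\cdot]$ that each summand of \eqref{eq:thm1-1} dominates $\min(f_i(\bm{w}_{\tilde{k}}) - f_i(\bm{w}^*),\, \mathcal{O}(\frac{\hat{C}}{L}\|\nabla f_i(\bm{w}_{\tilde{k}})\|))$. On the event $\{\|\bm{u}_{\tilde{k}}^{(i)}\| \leq \hat{C}E\}$, the constraint $E \leq \frac{\alpha}{2\rho}$ forces $\frac{\rho E}{\alpha} \leq \frac{1}{2}$ and $\frac{\rho^2 E^2}{\alpha^2} \leq \frac{1}{4}$, so the coefficient $2 - \frac{\rho E}{\alpha} - \frac{\rho^2 E^2}{\alpha^2} \geq 1$; hence the summand is at least $f_i(\bm{w}_{\tilde{k}}) - f_i(\bm{w}^*)$, which is at least the $\min$. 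On the complementary event $\{\|\bm{u}_{\tilde{k}}^{(i)}\| > \hat{C}E\}$ it suffices to show $\frac{3\hat{C}}{8LE}\|\bm{u}_{\tilde{k}}^{(i)}\| \geq \Omega(\frac{\hat{C}}{L}\|\nabla f_i(\bm{w}_{\tilde{k}})\|)$, since the $\min$ is bounded above by its second argument; this reduces to the norm comparison $\|\bm{u}_{\tilde{k}}^{(i)}\| \gtrsim E\|\nabla f_i(\bm{w}_{\tilde{k}})\|$.

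The crux, and the step I expect to be the main obstacle, is this last norm comparison. Writing $\bm{u}_k^{(i)} = \sum_{\tau=0}^{E-1}\nabla f_i(\bm{w}_{k,\tau}^{(i)})$ (the unrolled local gradients), I would invoke $L$-smoothness together with the chosen step size to control the drift of the local iterates: with $\eta = \frac{\rho}{2\alpha L}$ and $E \leq \frac{\alpha}{2\rho}$ one obtains $\eta E L \leq \frac{1}{4}$, so each $\bm{w}_{k,\tau}^{(i)}$ stays in a small ball about $\bm{w}_k$ and every $\nabla f_i(\bm{w}_{k,\tau}^{(i)})$ remains close to $\nabla f_i(\bm{w}_k)$. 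This prevents cancellation in the sum and yields $\|\bm{u}_k^{(i)}\| \geq c\,E\|\nabla f_i(\bm{w}_k)\|$ for an absolute constant $c$, exactly what is needed. A secondary delicate point is the sign of $f_i(\bm{w}_{\tilde{k}}) - f_i(\bm{w}^*)$: since $\bm{w}^*$ minimizes $f$ but not necessarily each $f_i$, individual gaps may be negative, and the term-by-term domination on $\{\|\bm{u}_{\tilde{k}}^{(i)}\| \leq \hat{C}E\}$ must be handled carefully there (e.g.\ by retaining the coefficient rather than crudely replacing it by $1$). Combining the two reductions with \eqref{eq:thm1-1} finally gives: left-hand side of \Cref{cor-1} $\leq$ left-hand side of \eqref{eq:thm1-1} $\leq$ right-hand side of \eqref{eq:thm1-1} $=$ right-hand side of \Cref{cor-1}.
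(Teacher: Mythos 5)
Your proposal is correct and takes essentially the same route as the paper's proof: the same simplification of the right-hand side via the choices of $\gamma$ and $\alpha$ and bounding the indicator by $1$, the same pointwise domination of the averaged $\min$ expression by the left-hand side of \cref{eq:thm1-1}, and the same key norm comparison $\|\bm{u}_{\tilde{k}}^{(i)}\| \geq \Omega(E\,\|\nabla f_i(\bm{w}_{\tilde{k}})\|)$, which the paper establishes by exactly the mechanism you describe (smoothness plus drift control of the local iterates under $\eta L E \leq \tfrac{1}{4}$, via \Cref{local_steps_lem} and \Cref{lem-apr13-1}). The two delicate points you flag are indeed the genuine ones, and your suggested treatment of possibly negative per-client gaps $f_i(\bm{w}_{\tilde{k}}) - f_i(\bm{w}^{*})$ --- retaining the $\Theta(1)$ coefficient inside the $\min$, e.g.\ using the exact identity $\tfrac{1}{c}\min(c\,a,\,b) = \min(a,\,b/c)$ --- is a sound repair of a step that the paper itself passes over silently.
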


\noindent We now delineate the key implications of \Cref{thm-clip-cvx-short}.
\\
\\
\noindent \textbf{(a) Convergence without assuming Lipschitzness:} Note that \Cref{thm-clip-cvx-short} does not assume any $f_i$ to be Lipschitz. To our knowledge, this is the \textit{first convergence result for private federated convex optimization with a general clipping threshold, and without assuming Lipschitzness}.

Let us also see what happens in the Lipschitz case. For that, suppose each $f_i$ is $G$-Lipschitz over $\mathbb{R}^d$. So if we set $\hat{C} = G$, then $\|\bm{u}_k^{(i)}\| \leq \hat{C} E$ for all $k$. Now, using the fact that $(2 - \frac{\rho E}{\alpha} - \frac{\rho^2 E^2}{\alpha^2}) \geq \frac{5}{4}$ for $E \leq \frac{\alpha}{2 \rho}$, the convergence result in \Cref{thm-clip-cvx-short} (i.e., \cref{eq:thm1-1}) reduces to:
\begin{equation}
        \label{eq:thm1-2}
        \mathbb{E}[f(\bm{w}_{\tilde{k}})] - f(\bm{w}^{*}) \leq \frac{4}{5}\Bigg(G\Big(\frac{L \|\bm{w}_{0} - \bm{w}^{*}\|^2}{\gamma} + \frac{\gamma}{L}\Big) + \frac{3 E}{2\alpha} \Big(\frac{1}{n}\sum_{i=1}^n \Delta_i^{*}\Big)\Bigg)\rho.
\end{equation}
Thus with $E = \mathcal{O}(1)$, our bound matches the lower bound of \cite{bassily2014private} for the centralized convex and Lipschitz case with respect to the dependence on $\rho$. 
\\
\\
\noindent \textbf{(b) Effect of initialization and heterogeneity:}
Observe that our convergence result depends on two things: (i) term $\textup{A}$ in \cref{eq:thm1-1}, i.e. the distance of the \textbf{initialization} $\bm{w}_0$ from the optimum $\bm{w}^{*}$, and (ii) term $\textup{B}$ in \cref{eq:thm1-1}, i.e. the degree of \textbf{heterogeneity} which itself depends on the $\Delta_i^{*}$'s (as per \Cref{def:het}). A high (respectively, low) degree of heterogeneity implies high (respectively, low) values of $\Delta_i^{*}$'s, which leads to worse (respectively, better) convergence. Also, as we increase $\alpha$ in \Cref{thm-clip-cvx-short}, i.e. increase the number of rounds $K$, the effect of the heterogeneity term (B) dies down. However, the effect of the initialization term (A) cannot be diminished by increasing $\alpha$.
\\
\\
{\noindent \textbf{(c) Effect of multiple local steps:} 
Characterizing whether having multiple local steps, i.e. $E > 1$, is beneficial or detrimental is not obvious from \Cref{thm-clip-cvx-short}. The RHS of \cref{eq:thm1-1} seems to suggest that the convergence result gets worse as we increase $E$ -- but this is whilst keeping $\hat{C}$ fixed. The intricacy here is that for the \enquote{same amount of clipping}, the required value of $\hat{C}$ is a non-increasing function of $E$.
To make this more precise, let us consider two values of $E$, say $E_1$ and $E_2$ where $E_1 < E_2$. Suppose the corresponding clipping thresholds that we use are $C_1 = \hat{C}_1 E_1$ and $C_2 = \hat{C}_2 E_2$, respectively.
Now if we wish to have $\mathbbm{1}(\|\bm{u}_{\tilde{k}}^{(i)}\| \leq \hat{C}_1 E_1) = \mathbbm{1}(\|\bm{u}_{\tilde{k}}^{(i)}\| \leq \hat{C}_2 E_2)$, i.e. the \enquote{same amount of clipping} with $E_1$ and $E_2$, then $\hat{C_2} \leq \hat{C_1}$. This is because we are doing gradient descent on convex functions locally, due to which $\frac{\big\|\bm{u}_{\tilde{k}}^{(i)}\big\|}{E} = \frac{\big\|\sum_{\tau=0}^{E-1}\nabla f_i(\bm{w}_{\tilde{k}, \tau}^{(i)})\big\|}{E}$ is a non-increasing function of $E$. 
However, quantifying the extent of \enquote{non-increasingness} of ${\|\bm{u}_{\tilde{k}}^{(i)}\|}/{E}$ -- which allows us to provide choices of $\hat{C}$ as a function of $E$ -- is not easy (and perhaps not possible) without making more assumptions other than convexity and smoothness. So, we now make a couple of extra assumptions (one of which is the standard Lipschitzness assumption) in \Cref{local_steps_asmp}, which then allows us to illustrate and quantify the \enquote{non-increasingness} of $\hat{C}$ as a function of $E$ in \Cref{local_steps}.
\begin{assumption}
\label{local_steps_asmp}
\textbf{\textup{(i)}} For any $\bm{w} \in \mathbb{R}^d$ and each $i \in [n]$, we have that:
\begin{equation}
    \label{asmp-extra}
    \|\nabla f_i(\bm{w} - \eta \nabla f_i(\bm{w})) - \nabla f_i(\bm{w})\| \geq \eta \lambda \|\nabla f_i(\bm{w})\|,
\end{equation}
for some $0 < \lambda \leq L$ and $\eta \leq \frac{\rho}{2L}$. 
\\
\\
\textbf{\textup{(ii)}} Additionally, each $f_i$ is $G$-Lipschitz over $\mathbb{R}^d$. 
\end{assumption}
{\noindent \Cref{local_steps_asmp} (i) can be also interpreted as a lower bound on the norm of the product of the Hessian matrix and the gradient vector. This is because for small enough $\eta$, we have:
\begin{equation}
    \|\nabla f_i(\bm{w} - \eta \nabla f_i(\bm{w})) - \nabla f_i(\bm{w})\| = \Theta(\eta \|\nabla^2 f_i(\bm{w}) \nabla f_i(\bm{w})\|).
\end{equation}
So basically for \Cref{local_steps_asmp} (i) to hold, we are assuming $\|\nabla^2 f_i(\bm{w}) \nabla f_i(\bm{w})\| \geq \Omega(\lambda \|\nabla f_i(\bm{w})\|)$; note that a similar assumption has been made in \cite{daneshmand2018escaping}. Also, \Cref{local_steps_asmp} (i) is a weaker assumption than strong convexity. This is because strong convexity would imply that $\|\nabla^2 f_i(\bm{w}) \bm{v}\| \geq \mu \|\bm{v}\|$ for \textit{any} $\bm{v} \in \mathbb{R}^d$ and some $\mu > 0$, while we assume this to hold only for $\bm{v} = \nabla f_i(\bm{w})$ (with $\mu = \Omega(\lambda)$). 
}
\rudrajit{Can we improve the convergence of DP-FedAvg (i.e., either the dependence on $\rho$ or $K/E$) with this assumption? -- Probably NOT}
\begin{proposition} [\textbf{$\hat{C}$ under \Cref{local_steps_asmp}}]
\label{local_steps}
Set $\alpha = 1$ in \Cref{thm-clip-cvx-short}; this imposes the constraint $E \leq \frac{1}{2 \rho}$. Then, under \Cref{local_steps_asmp}, choosing $\hat{C} = G \Big(1 - \frac{11(E-1){\rho}}{64}\big(\frac{\lambda^2}{L^2}\big)\Big)$ in \Cref{thm-clip-cvx-short} ensures that $\mathbbm{1}(\|\bm{u}_{{k}}^{(i)}\| > \hat{C} E) = 0$ $\forall$ $k$, i.e. no clipping happens for all $E$.
\end{proposition}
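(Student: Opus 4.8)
The goal is to certify that the chosen $\hat{C}$ is large enough that clipping never triggers, i.e. $\|\bm{u}_{k}^{(i)}\| \le \hat{C}E$ for every round $k$ and client $i$. The plan is to exploit the explicit form of the update. Telescoping the local gradient steps in line 7 of \Cref{alg:dp-fedavg} gives $\bm{u}_{k}^{(i)} = \sum_{\tau=0}^{E-1} \nabla f_i(\bm{w}_{k,\tau}^{(i)})$, so by the triangle inequality $\|\bm{u}_{k}^{(i)}\| \le \sum_{\tau=0}^{E-1} \|\bm{g}_\tau\|$ where $\bm{g}_\tau := \nabla f_i(\bm{w}_{k,\tau}^{(i)})$. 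The crude Lipschitz bound $\|\bm{g}_\tau\| \le G$ already yields $\|\bm{u}_{k}^{(i)}\| \le GE$, recovering the $E=1$ case with $\hat{C}=G$. The whole content of the proposition is to show that the local gradient norms strictly decay in $\tau$, and to quantify this decay precisely enough to shave off the factor $\big(1 - \tfrac{11(E-1)\rho}{64}\tfrac{\lambda^2}{L^2}\big)$.

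First I would establish a per-step contraction of $\|\bm{g}_\tau\|$. Writing one local step as $\bm{w}_{k,\tau+1}^{(i)} = \bm{w}_{k,\tau}^{(i)} - \eta \bm{g}_\tau$ with $\eta = \tfrac{\rho}{2L}$ (from $\alpha=1$ in \Cref{thm-clip-cvx-short}), I would apply the co-coercivity inequality for $L$-smooth convex functions, $\langle \nabla f_i(\bm{x}) - \nabla f_i(\bm{y}), \bm{x}-\bm{y}\rangle \ge \tfrac1L\|\nabla f_i(\bm{x}) - \nabla f_i(\bm{y})\|^2$, with $\bm{x} = \bm{w}_{k,\tau}^{(i)}$ and $\bm{y} = \bm{w}_{k,\tau+1}^{(i)}$, noting $\bm{x}-\bm{y} = \eta\bm{g}_\tau$. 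Expanding $\|\bm{g}_{\tau+1}\|^2 = \|\bm{g}_\tau\|^2 - 2\langle \bm{g}_\tau, \bm{g}_\tau - \bm{g}_{\tau+1}\rangle + \|\bm{g}_\tau - \bm{g}_{\tau+1}\|^2$ and bounding the inner product below by co-coercivity gives $\|\bm{g}_{\tau+1}\|^2 \le \|\bm{g}_\tau\|^2 - \big(\tfrac{2}{\eta L} - 1\big)\|\bm{g}_\tau - \bm{g}_{\tau+1}\|^2$, where $\tfrac{2}{\eta L} - 1 = \tfrac{4}{\rho}-1 > 0$ since $\rho<1$. Now \Cref{local_steps_asmp}(i) is exactly the lower bound $\|\bm{g}_{\tau+1} - \bm{g}_\tau\| \ge \eta\lambda\|\bm{g}_\tau\|$ on the gradient increment, which converts the above into a multiplicative contraction $\|\bm{g}_{\tau+1}\|^2 \le (1-a)\|\bm{g}_\tau\|^2$ with $a = \tfrac{(4-\rho)\rho}{4}\tfrac{\lambda^2}{L^2} \ge \tfrac{3\rho}{4}\tfrac{\lambda^2}{L^2}$. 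Iterating from $\|\bm{g}_0\| = \|\nabla f_i(\bm{w}_k)\| \le G$ yields $\|\bm{g}_\tau\| \le G(1-a)^{\tau/2}$.

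It then remains to sum the geometric bound and match the target constant. With $\theta := \rho\tfrac{\lambda^2}{L^2}$ and $a \ge \tfrac34\theta$, the claim $\|\bm{u}_{k}^{(i)}\| \le \hat{C}E$ is equivalent to $\sum_{\tau=0}^{E-1} \big[1 - (1-\tfrac34\theta)^{\tau/2}\big] \ge \tfrac{11 E(E-1)}{64}\theta$. To lower-bound each summand I would use $(1-\tfrac34\theta)^{\tau/2} \le e^{-3\theta\tau/8}$ together with $1 - e^{-y} \ge y - \tfrac{y^2}{2}$, giving $1 - (1-\tfrac34\theta)^{\tau/2} \ge \tfrac{3\theta\tau}{8} - \tfrac{9\theta^2\tau^2}{128}$. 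Using $\sum_{\tau=0}^{E-1}\tau = \tfrac{E(E-1)}{2}$ and $\sum_{\tau=0}^{E-1}\tau^2 = \tfrac{(E-1)E(2E-1)}{6}$, the leading term contributes $\tfrac{3}{16} = \tfrac{12}{64}$ times $E(E-1)\theta$, and after dividing through by $E(E-1)\theta$ the whole inequality reduces to the clean requirement $\theta(2E-1) \le \tfrac43$. This closes using both hypotheses exactly: $\lambda \le L$ gives $\theta \le \rho$, and the constraint $E \le \tfrac{1}{2\rho}$ forced by $\alpha=1$ gives $2E-1 \le \tfrac1\rho$, so $\theta(2E-1) \le 1 \le \tfrac43$.

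The main obstacle is not any single inequality but the constant bookkeeping in the final step: the gap between the ideal first-order decay coefficient $\tfrac{12}{64}$ and the claimed $\tfrac{11}{64}$ is precisely the budget that must absorb the quadratic correction $\tfrac{9\theta^2\tau^2}{128}$, and it is only the combination $E \le \tfrac1{2\rho}$ with $\lambda \le L$ that keeps $\theta(2E-1)$ small enough for this to go through. I would also be careful with the square root incurred by working with $\|\bm{g}_\tau\|^2$ — handled cleanly by $(1-a)^{\tau/2} \le e^{-a\tau/2}$ rather than a Bernoulli-type expansion, which would point the wrong way — and I would verify that $a \in (0,1)$ throughout, which again relies on $\rho < 1$.
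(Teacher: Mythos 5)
Your proof is correct, and its core coincides with the paper's own argument: both start from the triangle inequality $\|\bm{u}_k^{(i)}\| \le \sum_{\tau=0}^{E-1}\|\nabla f_i(\bm{w}_{k,\tau}^{(i)})\|$, and both obtain the per-step contraction of the local gradient norms by combining the co-coercivity inequality with \Cref{local_steps_asmp}(i) --- your squared-norm recursion is exactly the paper's \Cref{local_steps_lem}, and your constant $a = \tfrac{(4-\rho)\rho}{4}\tfrac{\lambda^2}{L^2}$ is precisely the paper's $\tfrac{2\eta\lambda^2}{L}\big(1-\tfrac{\eta L}{2}\big)$ evaluated at $\eta = \tfrac{\rho}{2L}$. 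Where you genuinely diverge is in how the sum of decaying gradient norms is bounded. The paper takes a square root at every step, linearizes via $\sqrt{1-x}\le 1-\tfrac{x}{2}$ and $1-\tfrac{\rho}{4}\ge\tfrac34$ to get a per-step factor $(1-\widehat{\rho})$ with $\widehat{\rho}=\tfrac{3\lambda^2}{8L^2}\rho$, sums the geometric series exactly to $\big(\tfrac{1-(1-\widehat{\rho})^E}{\widehat{\rho}}\big)G$, and then invokes the binomial-expansion bound of \Cref{new-fact-1} (valid because $E\widehat{\rho}\le\tfrac14$) to produce the $\tfrac{11}{24}$, hence $\tfrac{11}{64}$, constant. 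You instead keep the half power $(1-a)^{\tau/2}$, pass to exponentials via $1-x\le e^{-x}$, lower-bound $1-e^{-y}\ge y-\tfrac{y^2}{2}$, and evaluate $\sum_\tau \tau$ and $\sum_\tau \tau^2$ explicitly, reducing the claim to $\theta(2E-1)\le\tfrac43$ with $\theta=\rho\tfrac{\lambda^2}{L^2}$, which indeed follows from $\lambda\le L$ and the constraint $E\le\tfrac{1}{2\rho}$; I verified the bookkeeping and the $\tfrac{1}{64}$ slack does absorb the quadratic correction. The two finishes are of comparable difficulty and land on the same constant: yours avoids proving the separate combinatorial fact and makes transparent exactly which hypotheses are consumed where ($\lambda \le L$ and $E\le\tfrac{1}{2\rho}$ enter only in the last line), while the paper's route stays entirely polynomial and isolates the bookkeeping into a reusable standalone fact.
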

\noindent The proof of \Cref{local_steps} is in \Cref{local_steps_pf}. Notice that $\hat{C}$, which is set so that the same (= zero) amount of clipping happens for all $E$ ($\leq \frac{1}{2\rho}$), is a non-increasing (more specifically, a decreasing) function of $E$ as mentioned before. It is worth mentioning here that the value of $\hat{C}$ in \Cref{local_steps} is not the tightest possible value, but even for the tightest value, the non-increasingness will hold.


To summarize, there is a tradeoff involved as far as the number of local steps $E$ is concerned. Increasing $E$ allows us to reduce $\hat{C}$ which mitigates the effect of initialization, i.e. term A in \cref{eq:thm1-1}, but at the cost of increasing the effect of heterogeneity, i.e. term B in \cref{eq:thm1-1}. 
To illustrate this tradeoff, let us plug in our choice of $\hat{C}$ derived in \Cref{local_steps} in the convergence result of \Cref{thm-clip-cvx-short} with $\gamma = L \|\bm{w}_{0} - \bm{w}^{*}\|$ (this choice is just for simplicity). After a bit of simplification, this yields:
\begin{equation}
        \label{eq:eg}
        \mathbb{E}[f(\bm{w}_{\tilde{k}})] - f(\bm{w}^{*}) \leq
        \Bigg(2 G {\|\bm{w}_{0} - \bm{w}^{*}\|} + \frac{6 E}{5} \Bigg\{\frac{1}{n}\sum_{i=1}^n \Delta_i^{*} - \frac{11 G \|\bm{w}_{0} - \bm{w}^{*}\| \rho}{48}\Big(\frac{\lambda^2}{L^2}\Big)\Bigg\}\Bigg)\rho.
\end{equation}
\Cref{eq:eg} tells us that if $\frac{1}{n}\sum_{i=1}^n \Delta_i^{*} < \mathcal{O}\Big({G} \big(\frac{\lambda^2}{L^2}\big)\Big) \|\bm{w}_{0} - \bm{w}^{*}\| \rho $, which in plain English basically means that if the degree of heterogeneity is less than the product of the distance of the initialization from the optimum and $\rho = \mathcal{O}\Big(\frac{\sqrt{d \log(1/\delta)}}{n \varepsilon}\Big)$ (and some other data-dependent constants), then having a large value of $E$ is beneficial; in particular, setting the maximum permissible value of $E$, which is $\frac{1}{2 \rho}$, is the best (in terms of smallest suboptimality gap). Otherwise, having a small value of $E$ is better; specifically, setting $E=1$ is the best. From \Cref{thm-clip-cvx-short}, recall that for $\alpha = 1$, $K = \Big(\frac{2 \gamma}{\hat{C} E}\Big) \frac{1}{\rho^2}$; so the higher we set $E$, the fewer the number of communication rounds needed.

From the above discussion, one should not form the opinion that a poor initialization, i.e., a $\bm{w}_0$ such that $\|\bm{w}_{0} - \bm{w}^{*}\|$ is large, is advantageous in private FL. This is because choosing such a $\bm{w}_0$ will increase the first term within the big round brackets in \cref{eq:eg}, i.e. $2 G {\|\bm{w}_{0} - \bm{w}^{*}\|}$, which happens to be the \textit{dominant} term -- leading to a high suboptimality gap by default.

}
\rudrajit{\textbf{MORE CHANGES NEEDED ABOVE?} If initialization is poor, anyways final error is gonna be bad...probably mention this.
}

\section{\texttt{DP-NormFedAvg}: \texttt{DP-FedAvg} with Client-Update Normalization (instead of Clipping)}
\label{norm-sec}
We define the normalization function $\text{norm}: \mathbb{R}^d - \{\vec{0}_d\} \times \mathbb{R}^{+} \xrightarrow{} \mathbb{R}^d$ as:
\begin{equation}
    \text{norm}(\bm{z},c) := \frac{c \bm{z}}{\|\bm{z}\|},
\end{equation}
where $c$ is the scaling factor. The parameter $c$ is analogous to the clipping threshold in the $\text{clip}(.)$ function. Also, note that $\|\text{norm}(\bm{z},c)\| \leq c$ holds. 
\\
\\
\noindent Here we propose to \textit{normalize} client-updates instead of clipping them, i.e., we propose to change {line 9} of \Cref{alg:dp-fedavg} as follows:
\begin{equation}
    \label{eq:new-clip}
    \bm{g}^{(i)}_k = 
    \text{norm}(\bm{u}_k^{(i)}, C).
\end{equation}
We call the resultant algorithm \texttt{DP-NormFedAvg} because it involves \textit{norm}alizing updates in \texttt{DP-FedAvg}. For completeness, we state it in \Cref{alg:dp-fedavg-2}; note the normalization step in line 9.
\begin{algorithm}
\caption{\texttt{DP-NormFedAvg}}
\label{alg:dp-fedavg-2}
\begin{algorithmic}[1]
\STATE {\bfseries Input:}
Initial point $\bm{w}_0$, number of rounds of communication $K$, number of local updates per round $E$, local learning rates $\{\eta_k\}_{k=0}^{K-1}$, global learning rates $\{\beta_k\}_{k=0}^{K-1}$, scaling factor $C$, number of participating clients in each round $r$ and noise variance $\sigma^2$.
\vspace{0.1 cm}
\FOR{$k=0,\dots,K-1$}
\vspace{0.1 cm}
\STATE 
Server sends $\bm{w}_k$ to a random set $\mathcal{S}_k$ of clients, formed by sampling each client $\in [n]$ with probability $r/n$.
\vspace{0.1 cm}
\FOR{client $i \in \mathcal{S}_k$}
\vspace{0.1 cm}
\STATE Set $\bm{w}_{k,0}^{(i)} = \bm{w}_k$.
\vspace{0.1 cm}
\FOR{$\tau = 0,\dots,E-1$}
\vspace{0.1 cm}
\STATE Update $\bm{w}_{k,\tau+1}^{(i)} \xleftarrow{} \bm{w}_{k,\tau}^{(i)} - \eta_k \nabla f_i(\bm{w}_{k,\tau}^{(i)})$.
\vspace{0.1 cm}
\ENDFOR
\vspace{0.1 cm}
\STATE Let $\bm{u}^{(i)}_k = \frac{\bm{w}_{k} - \bm{w}_{k,E}^{(i)}}{\eta_k}$ and {\color{blue}$\bm{g}^{(i)}_k = \text{norm}\big(\bm{u}^{(i)}_k, C \big) = \frac{C \bm{u}^{(i)}_k}{\|\bm{u}^{(i)}_k\|}$}. // \texttt{(Normalization instead of Clipping.)}
\vspace{0.1 cm}
\STATE Send $(\bm{g}^{(i)}_k + \bm{\zeta}^{(i)}_k)$ to the server, where $\bm{\zeta}^{(i)}_k \sim \mathcal{N}(\vec{0}_d, r \sigma^2 \bm{\textup{I}}_d)$.
\vspace{0.1 cm}
\ENDFOR
\vspace{0.1 cm}
\STATE Update $\bm{w}_{k+1} \xleftarrow{} \bm{w}_k - \beta_k \bm{a}_k$, where $\bm{a}_k = \frac{1}{r}\sum_{i \in \mathcal{S}_k}(\bm{g}^{(i)}_k + \bm{\zeta}^{(i)}_k)$.
\vspace{0.1 cm}
\STATE Return $\bm{w}_\text{priv} = \bm{w}_{\tilde{k}}$, where $\tilde{k} \sim \text{unif}[0,K-1]$.
\vspace{0.1 cm}
\ENDFOR
\end{algorithmic}
\end{algorithm}

\noindent The abridged convergence result of \texttt{DP-NormFedAvg} is presented next. The full version and proof can be found in \Cref{thm2-details}.
\begin{theorem}[\textbf{Convergence of \texttt{DP-NormFedAvg}: Convex Case}]
\label{thm-new-clip-cvx-short}
In the same setting and with the same choices as \Cref{thm-clip-cvx-short}, \texttt{DP-NormFedAvg} (i.e., \Cref{alg:dp-fedavg-2}) has the following convergence guarantee:
\small
\begin{multline}
    \label{eq:thm2-1}
        \mathbb{E}\Bigg[\frac{1}{n}\sum_{i=1}^n \Bigg\{\mathbbm{1}(\|\bm{u}_{\tilde{k}}^{(i)}\| \leq \hat{C}E) \Big(2 - \frac{\rho^2 E^2}{\alpha^2}\Big)\Bigg(\frac{\hat{C} E}{\|\bm{u}_{\tilde{k}}^{(i)}\|}\Bigg)(f_i(\bm{w}_{\tilde{k}}) - f_i(\bm{w}^{*})) + \mathbbm{1}(\|\bm{u}_{\tilde{k}}^{(i)}\| > \hat{C}E) \Big(\frac{3 \hat{C} \|\bm{u}_{\tilde{k}}^{(i)}\|}{8 L E}\Big) \Bigg\}\Bigg] 
        \\
        \leq
        \hat{C} \Big(\frac{L \|\bm{w}_{0} - \bm{w}^{*}\|^2}{\gamma} + \frac{\gamma}{L}\Big) \rho + 
        {\mathbb{E}\Bigg[\frac{1}{n}\sum_{i=1}^n \mathbbm{1}(\|\bm{u}_{\tilde{k}}^{(i)}\| \leq \hat{C} E) \Bigg\{\frac{\hat{C}^2}{2 \alpha L} + \Bigg(\frac{\hat{C} E}{\|\bm{u}_{\tilde{k}}^{(i)}\|}\Bigg) \frac{\Delta_i^{*} \rho E}{\alpha^2}\Bigg\}E\Bigg]}\rho,
\end{multline}
\normalsize
with $\tilde{k} \sim \textup{unif}[0,K-1]$.
\end{theorem}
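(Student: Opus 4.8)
The plan is to mirror the proof of \Cref{thm-clip-cvx-short} step by step, since \texttt{DP-NormFedAvg} differs from \texttt{DP-FedAvg} only in line 9 (normalization instead of clipping), and the privacy noise, sampling, and learning-rate choices are identical. First I would fix a round $k$ and a participating client $i$, and expand the one-step descent using $L$-smoothness of $f$ applied to the global update $\bm{w}_{k+1} = \bm{w}_k - \beta_k \bm{a}_k$. The key object is the average sent update $\bm{g}_k^{(i)} = C \bm{u}_k^{(i)}/\|\bm{u}_k^{(i)}\|$. Taking expectation over the Gaussian noise $\bm{\zeta}_k^{(i)}$ and the client sampling $\mathcal{S}_k$ kills the linear noise term (mean zero) and contributes a variance term $\frac{L\beta_k^2}{2}\,\mathbb{E}\|\bm{a}_k\|^2$, which splits into a signal part (of order $C^2 = \hat C^2 E^2$) and a noise part (of order $\sigma^2 d$). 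Substituting $\sigma^2$ from \Cref{thm-dp} and the definition of $\rho$ is what converts the noise variance into the clean $\rho$-dependence; this bookkeeping is identical to the clipping proof.

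The genuinely new ingredient is handling the inner product $\langle \nabla f(\bm{w}_k), \bm{g}_k^{(i)}\rangle$ when the update is \emph{normalized} rather than clipped. In the clipping proof one uses $\bm{g}_k^{(i)} = \bm{u}_k^{(i)}\min(1, C/\|\bm{u}_k^{(i)}\|)$ and splits on the event $\{\|\bm{u}_k^{(i)}\| \leq \hat C E\}$ versus its complement; on the unclipped event $\bm{g}_k^{(i)} = \bm{u}_k^{(i)}$ exactly, whereas for normalization $\bm{g}_k^{(i)} = (\hat C E/\|\bm{u}_k^{(i)}\|)\,\bm{u}_k^{(i)}$ carries the extra scalar factor $\hat C E/\|\bm{u}_k^{(i)}\|$. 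This is precisely why that factor appears multiplying the $(f_i(\bm{w}_{\tilde k}) - f_i(\bm{w}^*))$ term on the left side of \cref{eq:thm2-1}. So I would redo the convexity argument: relate $\langle \nabla f_i(\bm{w}_k), \bm{u}_k^{(i)}\rangle$ to the local progress $f_i(\bm{w}_k) - f_i(\bm{w}^*)$ (the update $\bm{u}_k^{(i)} = \sum_{\tau=0}^{E-1}\nabla f_i(\bm{w}_{k,\tau}^{(i)})$ is a sum of local gradients, so I would use convexity and $L$-smoothness to show the sum of local gradients aligns with the full-batch gradient up to controllable drift), then carry the normalization scalar through. On the large-norm event $\{\|\bm{u}_k^{(i)}\| > \hat C E\}$, both algorithms send a vector of norm $C$, so that contribution should match the clipping case and produce the identical $\frac{3\hat C\|\bm{u}_{\tilde k}^{(i)}\|}{8LE}$ indicator term.

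After establishing the per-round inequality, I would sum over $k = 0, \dots, K-1$, telescope the $\|\bm{w}_k - \bm{w}^*\|^2$ terms (using $\beta_k = \eta_k = \eta = \rho/(2\alpha L)$ constant), divide by $K$, and use $\tilde k \sim \text{unif}[0,K-1]$ to turn the running average into the expectation over $\tilde k$ on the left. Plugging in $K = (2\alpha\gamma/(\hat C E))\,\rho^{-2}$ is what makes the telescoped initialization term collapse to the clean $\hat C(L\|\bm{w}_0 - \bm{w}^*\|^2/\gamma)\rho$ form, and the variance/signal terms assemble into the remaining right-hand-side pieces; the extra $\frac{\hat C^2}{2\alpha L}E$ term (absent from the clipping bound) is the signal-variance contribution specific to always sending a norm-$C$ vector even when the true update is small.

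The main obstacle I anticipate is tracking the normalization scalar $\hat C E/\|\bm{u}_k^{(i)}\|$ cleanly through the convexity step while keeping the local-update drift (from $E$ local GD steps on $f_i$, which the clipping analysis already controls using $\hat C \geq 4\sqrt{L\max_j \Delta_j^*}$ and the step-size constraint $\eta \leq \rho/(2L)$, i.e.\ $E \leq \alpha/(2\rho)$) bounded by the right quantities. Because that scalar depends on the random quantity $\|\bm{u}_k^{(i)}\|$, it cannot simply be pulled out of the expectation, which is exactly why it remains \emph{inside} the expectation and multiplying the suboptimality gap on both sides of \cref{eq:thm2-1}; the careful point is to verify that the same drift bounds from the clipping proof survive when each gradient-alignment inequality is rescaled by this data-dependent factor, and that the factor $(2 - \rho^2E^2/\alpha^2)$ (rather than the clipping proof's $2 - \rho E/\alpha - \rho^2 E^2/\alpha^2$) emerges correctly from the absence of the first-order clipping-bias term.
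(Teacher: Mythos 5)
Your structural observations are largely on target---Case 1 ($\|\bm{u}_k^{(i)}\| > \hat{C}E$) coinciding with clipping, the scalar $\hat{C}E/\|\bm{u}_k^{(i)}\|$ having to stay inside the expectation and multiply the suboptimality gap, and the $\frac{\hat{C}^2 E}{2\alpha L}\rho$ term arising because a normalized update always has squared norm $\hat{C}^2E^2$---but the proof skeleton you announce cannot deliver the stated bound, and your proposal is internally inconsistent about it. You start from the smoothness descent lemma applied to $f$ at the global update, so your per-round inequality reads $\mathbb{E}[f(\bm{w}_{k+1})] \leq f(\bm{w}_k) - \eta\,\mathbb{E}[\langle\nabla f(\bm{w}_k), \bm{a}_k\rangle] + \frac{L\eta^2}{2}\mathbb{E}\|\bm{a}_k\|^2$, and the object you must lower-bound is $\langle\nabla f(\bm{w}_k), \bm{g}_k^{(i)}\rangle$. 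For merely convex (non-PL) $f_i$ there is no inequality lower-bounding an inner product of gradient-aligned vectors such as $\langle\nabla f_i(\bm{w}_k), \bm{u}_k^{(i)}\rangle$ (which behaves like $E\|\nabla f_i(\bm{w}_k)\|^2$ up to drift) by a multiple of $f_i(\bm{w}_k) - f_i(\bm{w}^{*})$: convexity converts inner products with the \emph{direction} $\bm{w}_k - \bm{w}^{*}$ into function gaps, not inner products with gradient directions; and since $\bm{w}^{*}$ minimizes $f$ rather than $f_i$, the gap $f_i(\bm{w}_k) - f_i(\bm{w}^{*})$ can even be negative. Moreover, telescoping a descent-lemma recursion produces $f(\bm{w}_0) - f(\bm{w}_K)$ on the right-hand side, not the term $\hat{C}\frac{L\|\bm{w}_0 - \bm{w}^{*}\|^2}{\gamma}\rho$ appearing in \cref{eq:thm2-1}. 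Indeed, midway through you silently change frameworks: you propose to ``telescope the $\|\bm{w}_k - \bm{w}^{*}\|^2$ terms,'' but no such terms exist in the recursion you set up.

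The paper's proof (\Cref{thm2-details}) instead works entirely with the distance-to-optimum recursion: expand $\mathbb{E}\|\bm{w}_{k+1} - \bm{w}^{*}\|^2$ around $\bm{w}_{k+1} = \bm{w}_k - \eta\big(\frac{1}{r}\sum_{i\in\mathcal{S}_k}\text{norm}(\bm{u}_k^{(i)},C) + \bm{\zeta}_k\big)$, so the key per-client quantity is $A_i = -2\eta\langle\text{norm}(\bm{u}_k^{(i)},C), \bm{w}_k - \bm{w}^{*}\rangle + \eta^2\|\text{norm}(\bm{u}_k^{(i)},C)\|^2$. On the event $\|\bm{u}_k^{(i)}\| \leq \hat{C}E$ the scalar $z_k^{(i)} = \hat{C}E/\|\bm{u}_k^{(i)}\|$ factors out of the inner product, and the bound $\langle\bm{u}_k^{(i)}, \bm{w}_k - \bm{w}^{*}\rangle \geq E(f_i(\bm{w}_k) - f_i(\bm{w}^{*})) - 2\eta^2 L^2 E^3 (f_i(\bm{w}_k) - f_i^{*})$---proved by writing $\bm{u}_k^{(i)} = \sum_\tau \nabla f_i(\bm{w}_{k,\tau}^{(i)})$, splitting $\bm{w}_k - \bm{w}^{*} = (\bm{w}_{k,\tau}^{(i)} - \bm{w}^{*}) + (\bm{w}_k - \bm{w}_{k,\tau}^{(i)})$, and invoking convexity together with the drift bound of \Cref{lem-mar13-1}---is exactly the step your plan lacks; it is inherited verbatim from the clipping proof with $z_k^{(i)}$ carried along, while the quadratic term is now exactly $\eta^2\hat{C}^2E^2$ (rather than being absorbed into function values, which is also why the prefactor becomes $2 - \rho^2E^2/\alpha^2$). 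On the complementary event the clipping Case 1 argument applies unchanged. Telescoping $\|\bm{w}_k - \bm{w}^{*}\|^2$ then yields the initialization term, and the stated choices of $\eta$, $K$, $E$ give \cref{eq:thm2-1}. So the fix is not cosmetic: you must replace your first step (descent lemma paired with $\nabla f(\bm{w}_k)$) by the distance expansion; after that, the remainder of your outline essentially goes through as you describe.
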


We now provide some insights on the convergence rate of \Cref{thm-new-clip-cvx-short} by comparing it with that of 
\texttt{DP-FedAvg} with clipping (i.e., \Cref{thm-clip-cvx-short}).

\subsection{Theoretical Comparison of \texttt{DP-FedAvg} with Clipping and \texttt{DP-NormFedAvg}}
\label{sec:comp-1}
Per \Cref{thm-clip-cvx-short}, recall that the convergence rate of \texttt{DP-FedAvg} with clipping (i.e., \Cref{alg:dp-fedavg}) is:
\begin{multline}
        \label{eq:feb9-1}
        \mathbb{E}\Bigg[\frac{1}{n}\sum_{i=1}^n \Bigg(\mathbbm{1}(\|\bm{u}_{\tilde{k}}^{(i)}\| \leq \hat{C}E) {\color{cyan}\Big(2 - \frac{\rho E}{\alpha} - \frac{\rho^2 E^2}{\alpha^2}\Big)} (f_i(\bm{w}_{\tilde{k}}) - f_i(\bm{w}^{*})) +  \mathbbm{1}(\|\bm{u}_{\tilde{k}}^{(i)}\| > \hat{C}E) \Big(\frac{3 \hat{C}}{8 L E} \|\bm{u}_{\tilde{k}}^{(i)}\|\Big) \Bigg)\Bigg]
        \\
        \leq \underbrace{\hat{C}\Big(\frac{L \|\bm{w}_{0} - \bm{w}^{*}\|^2}{\gamma} + \frac{\gamma}{L}\Big)\rho}_{:=\textup{A} \text{(effect of \textbf{initialization})}} + \underbrace{{\color{cyan} \mathbb{E}\Bigg[\frac{1}{n}\sum_{i=1}^n {\mathbbm{1}(\|\bm{u}_{\tilde{k}}^{(i)}\| \leq \hat{C}E)} \Big(\frac{3 E}{2 \alpha}\Big)\Bigg]} \rho}_{:=\textup{B} \text{(effect of \textbf{heterogeneity})}},
\end{multline}
with $\tilde{k} \sim \textup{unif}[0,K-1]$. In comparison, the convergence rate of \texttt{DP-NormFedAvg} (i.e., \Cref{alg:dp-fedavg-2}), under the same setting, is:
\begin{multline}
        \label{eq:feb9-2}
        \mathbb{E}\Bigg[\frac{1}{n}\sum_{i=1}^n \Bigg\{\mathbbm{1}(\|\bm{u}_{\tilde{k}}^{(i)}\| \leq \hat{C}E) \underbrace{\color{red}\Big(2 - \frac{\rho^2 E^2}{\alpha^2}\Big)}_{> (2 - \frac{\rho E}{\alpha} - \frac{\rho^2 E^2}{\alpha^2})}\underbrace{\color{red}\Bigg(\frac{\hat{C} E}{\|\bm{u}_{\tilde{k}}^{(i)}\|}\Bigg)}_{\geq 1}(f_i(\bm{w}_{\tilde{k}}) - f_i(\bm{w}^{*})) + \mathbbm{1}(\|\bm{u}_{\tilde{k}}^{(i)}\| > \hat{C}E) \Big(\frac{3 \hat{C}}{8 L E} \|\bm{u}_{\tilde{k}}^{(i)}\|\Big)\Bigg\}\Bigg]     
        \\
        \leq
        \underbrace{\hat{C} \Big(\frac{L \|\bm{w}_{0} - \bm{w}^{*}\|^2}{\gamma} + \frac{\gamma}{L}\Big) \rho}_{=\textup{A} \text{(effect of \textbf{initialization})}} + 
        \underbrace{{\color{red}\mathbb{E}\Bigg[\frac{1}{n}\sum_{i=1}^n \mathbbm{1}(\|\bm{u}_{\tilde{k}}^{(i)}\| \leq \hat{C} E) \Bigg\{\frac{\hat{C}^2}{2 \alpha L} + \Bigg(\frac{\hat{C} E}{\|\bm{u}_{\tilde{k}}^{(i)}\|}\Bigg) \frac{\Delta_i^{*} \rho E}{\alpha^2}\Bigg\}E\Bigg]} \rho}_{:=\textup{B}_2 \text{(effect of \textbf{heterogeneity})}}.
\end{multline}
The terms that are different in \cref{eq:feb9-1} and \cref{eq:feb9-2} have been colored. Let us consider the same choice of $\hat{C}$ and optimum $\bm{w}^{*}$ for both algorithms. As discussed earlier, the convergence rate depends on: (i) distance of the \textbf{initialization} $\bm{w}_0$ from the optimum $\bm{w}^{*}$ (specifically, term $\textup{A}$ in both equations), and (ii) the degree of \textbf{heterogeneity} which is itself a function of the $\Delta_i^{*}$'s (specifically, term $\textup{B}$ and $\textup{B}_2$ in \cref{eq:feb9-1} and \cref{eq:feb9-2}, respectively). 

Note that the LHS of \cref{eq:feb9-2} is larger than the LHS of \cref{eq:feb9-1}. Thus, the effect of term $\textup{A}$, i.e. the effect of initialization, on convergence is smaller in the case of normalization than clipping. Next, recalling that we must set $\hat{C} \geq 4\sqrt{L \max_{j \in [n]} \Delta_j^{*}}$ in both Theorem \ref{thm-clip-cvx-short} and \ref{thm-new-clip-cvx-short}, let us choose $\hat{C} = c \sqrt{L \max_{j \in [n]} \Delta_j^{*}}$ with $c \geq 4$ in both cases. Then:
\begin{equation}
    \textup{B}_2 = {\mathbb{E}\Bigg[\frac{1}{n}\sum_{i=1}^n \mathbbm{1}(\|\bm{u}_{\tilde{k}}^{(i)}\| \leq \hat{C} E) \Bigg\{\Bigg(\frac{c^2 E}{2 \alpha}\Bigg) \max_{j \in [n]} \Delta_j^{*} + \Bigg(\frac{\hat{C} E}{\|\bm{u}_{\tilde{k}}^{(i)}\|}\Bigg) \Bigg(\frac{\Delta_i^{*} \rho E^2}{\alpha^2}\Bigg)\Bigg\}\Bigg]} \rho.
\end{equation}
Now observe that for $c \geq 4$, $\textup{B} < \textup{B}_2$. However, the LHS of \cref{eq:feb9-2} is more than that of \cref{eq:feb9-1}. So in general, it is difficult to predict whether the effect of heterogeneity on convergence is smaller in the case of clipping or normalization. 

{But the effect of heterogeneity can be mitigated arbitrarily for both clipping and normalization by increasing $\alpha$, i.e. increasing the number of rounds $K$ arbitrarily (recall that we set $K = \big(\frac{2 \alpha \gamma}{\hat{C} E}\big) \frac{1}{\rho^2}$). 
So asymptotically, i.e. for $\alpha \to \infty$ or $K \to \infty$, the effect of heterogeneity gets killed and only the effect of initialization matters, where we expect normalization to outperform clipping.} It is worth mentioning here that the previous discussion is not specific to the federated setting and also applies to the centralized setting (i.e., $E=1$).
\\
\\
We summarize all the above discussion in the following remark.
\begin{remark}[\textbf{Normalization versus Clipping}]
\label{rmk-cmp}
Compared to clipping, normalization is associated with a smaller effect of initialization on convergence. However, in general, it is difficult to characterize whether the effect of heterogeneity is smaller for normalization or clipping. {The good thing is that for both clipping and normalization, the effect of heterogeneity gets killed asymptotically, i.e. when the number of communication rounds ($K$) tends to $\infty$.}

Hence, for problems that do not have a high degree of heterogeneity and the effect of initialization is more severe (for e.g., by poor random initialization) {and/or if we can train for a very large number of rounds}, we expect normalization to offer better convergence than clipping in private optimization. 
\end{remark}

It is also worth pointing out that clipping can be equivalent to normalization in certain scenarios. Specifically, suppose the client update norms are lower bounded by ${C}_{\text{low}}$; then, clipping with threshold $C \leq {C}_{\text{low}}$ is equivalent to normalization with the same scaling factor.
\\
\\
In \Cref{sec:dp-normfed}, we provide a more intuitive argument as to why update normalization can offer better convergence than update clipping in terms of their signal (viz., update
norm) to noise ratios, and also relate it to the previous theoretical comparison in \Cref{sec:comp-1}. 

\subsection{Intuitive Explanation of why Normalization can Outperform Clipping}
\label{sec:dp-normfed}
Intuitively, clipping has the following issue with respect to optimization - as the client update norms decrease and fall below the clipping threshold, the norm of the added noise (which has constant expectation proportional to the clipping threshold, regardless of the client update norms) can become arbitrarily larger than the client update norms, which should inhibit convergence. This issue is not as grave in \texttt{DP-NormFedAvg} because its update-\textit{normalization} step ensures that the noise norm cannot become arbitrarily larger than the \textit{normalized} update's norm (even if the original update's norm is small). In other words, the signal (which is the update
norm) to noise ratio of clipping eventually falls below that of normalization.

{The mathematical manifestation of the aforementioned argument can be also seen in the convergence bounds of clipping (i.e., \cref{eq:feb9-1}) and normalization (i.e., \cref{eq:feb9-2}) in \Cref{sec:comp-1}. Specifically, note that the coefficient of $\mathbbm{1}(\|\bm{u}_{\tilde{k}}^{(i)}\| \leq \hat{C}E)$ (i.e., when the update norm $\|\bm{u}_{\tilde{k}}^{(i)}\|$ is less than or equal to the clipping threshold $\hat{C}E$) in the LHS of \cref{eq:feb9-2} is at least $\frac{\hat{C}E}{\|\bm{u}_{\tilde{k}}^{(i)}\|}$ ($\geq 1$) times more than the corresponding term in the LHS of \cref{eq:feb9-1}; this amplification is a 
consequence of the improvement in signal to noise ratio (SNR) of normalization over clipping. 
On the other hand, the coefficient of $\mathbbm{1}(\|\bm{u}_{\tilde{k}}^{(i)}\| > \hat{C}E)$ (i.e., when the update norm is more than the clipping threshold) in the LHS of \cref{eq:feb9-2} is exactly the same as the corresponding term in the LHS of \cref{eq:feb9-1}; this is because normalization and clipping are equivalent when $\|\bm{u}_{\tilde{k}}^{(i)}\| > \hat{C}E$. Now, as discussed in \Cref{sec:comp-1}, the RHS of both \cref{eq:feb9-1} and \cref{eq:feb9-2} become the same asymptotically with a large number of rounds as the effect of heterogeneity dies down. Thus, the asymptotic convergence of normalization (i.e., \cref{eq:feb9-2}) is better than that of clipping (i.e., \cref{eq:feb9-1}).}
\rudrajit{Try to improve and shorten the above rationale?}

Let us now see some experimental results on a synthetic problem illustrating the superiority of normalization over clipping. 

\subsection{Empirical Comparison of \texttt{DP-FedAvg} with Clipping and \texttt{DP-NormFedAvg} on a Synthetic Problem}
\label{sec:norm-vs-clip-expt}
We consider $f_i(\bm{w}) = \frac{1}{2}(\bm{w} - \bm{w}_i^{*})^T \bm{Q}_i (\bm{w} - \bm{w}_i^{*})$, where $i \in [100]$ (so, $n = 100$) and $\bm{w} \in \mathbb{R}^{200}$ (so, $d = 200$). Further, $\bm{w}_i^{*}$ is drawn i.i.d. from $\mathcal{N}(0,\textup{I}_{200})$ and $\bm{Q}_i = \bm{A}_i\bm{A}_i^T$, where $\bm{A}_i$ is a $200 \times 20$ matrix whose entries are drawn i.i.d from $\mathcal{N}(0,\frac{1}{20^2})$; hence, $\bm{Q}_i$ is a PSD matrix with bounded maximum eigenvalue, due to which $f_i$ is convex and smooth. 

We set $(\varepsilon, \delta) = (5, 10^{-6})$, $K=500$ and $E=20$ for this set of experiments. We consider two different initializations with different distances from the global optimum $\bm{w}^{*}$: 
\begin{itemize}
    \item \textbf{I1}: $\bm{w}_0 = \bm{w}^{*} + \bm{z}$, and 
    \item \textbf{I2}: $\bm{w}_0 = \bm{w}^{*} + \frac{\bm{z}}{5}$, 
\end{itemize}
where each coordinate of $\bm{z}$ is drawn i.i.d. from the continuous uniform distribution with support (0,1). We set $\eta_k = \beta_k = \eta$ for all rounds $k$, and also have full-device participation. In \Cref{fig:1}, we plot the function suboptimality (i.e., $f(\bm{w}_k) - f(\bm{w}^{*})$ at round number $k$) of \texttt{DP-FedAvg} with Clipping and \texttt{DP-NormFedAvg} for different values of $\eta$ and clipping threshold/scaling factor $C$, for {I1} and {I2}; specifically, \enquote{Clip($\eta$)} and \enquote{Norm($\eta$)} in the legend denote \texttt{DP-FedAvg} with Clipping and \texttt{DP-NormFedAvg} with $\eta_k = \beta_k = \eta$ for all rounds $k$, respectively. 
In \Cref{fig:2}, for each round $k$, we plot the corresponding $\text{SNR} := \frac{\big\|\frac{1}{r}\sum_{i \in \mathcal{S}_k}\bm{g}^{(i)}_k\big\|}{\big\|\frac{1}{r}\sum_{i \in \mathcal{S}_k}\bm{\zeta}^{(i)}_k\big\|}$, where $\bm{g}^{(i)}_k$ and $\bm{\zeta}^{(i)}_k$ are the clipped/normalized per-client update and per-client noise, respectively, as defined in \Cref{alg:dp-fedavg}/\ref{alg:dp-fedavg-2}. We only show the SNR plots for one value of $\eta$ as the trend for other values of $\eta$ is similar (and to avoid congestion).
All plots are averaged over three independent runs. For a fair comparison, in each run, the exact same noise vectors (sampled randomly at each round) are used in both algorithms.

The thing to note in \Cref{fig:1} is that for $C = \{50, 100\}$ and all values of $\eta$, normalization attains an appreciably lower function suboptimality than clipping. For $C = 40$, normalization is just slightly better. The SNR values in \Cref{fig:2} also follow a similar trend -- the improvement in SNR for normalization compared to clipping is much higher for $C = \{50, 100\}$ than $C=40$.
We only show results up to $C = 40$ as for smaller values of $C$, clipping and normalization are equivalent. As discussed at the end of \Cref{sec:comp-1} after \Cref{rmk-cmp}, recall that if the client update norms are lower bounded by ${C}_{\text{low}}$, then clipping with threshold $C \leq {C}_{\text{low}}$ is equivalent to normalization with the same scaling factor.

For further illustration, in \Cref{fig:traj}, we plot the smoothed 2D projection of the trajectories of \texttt{DP-FedAvg} with clipping and \texttt{DP-NormFedAvg} for two of the cases of \Cref{fig:1}. 
From here, we can see that \texttt{DP-NormFedAvg} reaches closer to the optimum than \texttt{DP-FedAvg} with clipping.
\rudrajit{Add details about how these trajectories are generated? Also remember to change the caption of \Cref{fig:traj} should anything change.}

These plots corroborate our previous theoretical predictions and intuition. {We also show the superiority of normalization over clipping via experiments on actual datasets in \Cref{sec:expts}.}

\begin{figure*}[!htb]
\centering 
\subfloat[\textbf{I1}: ${C}=40$]{
    \label{fig:1_a}
	\includegraphics[width=0.33\textwidth]{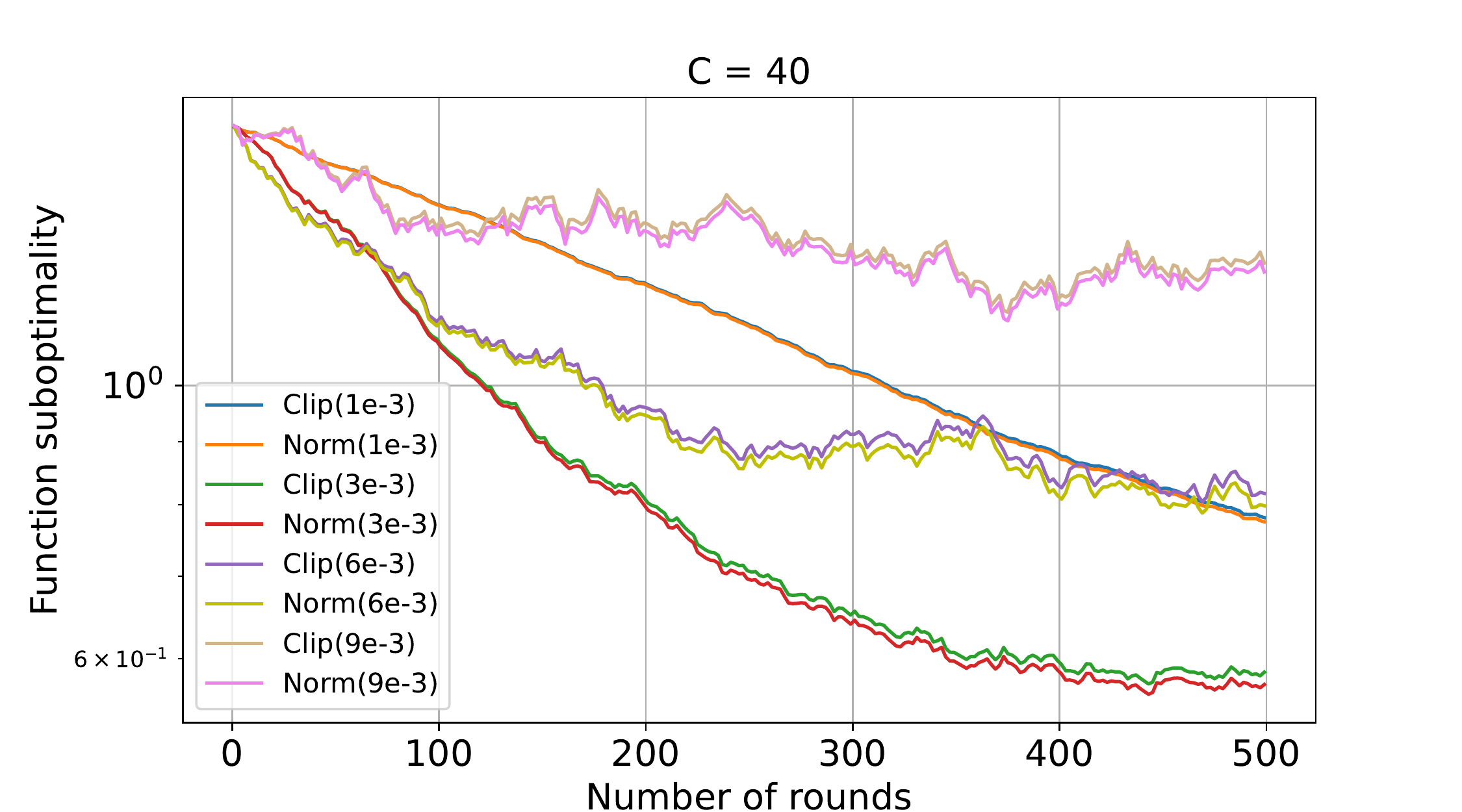}
	} 
\subfloat[\textbf{I1}: ${C}=50$]{
    \label{fig:1_b}
	\includegraphics[width=0.33\textwidth]{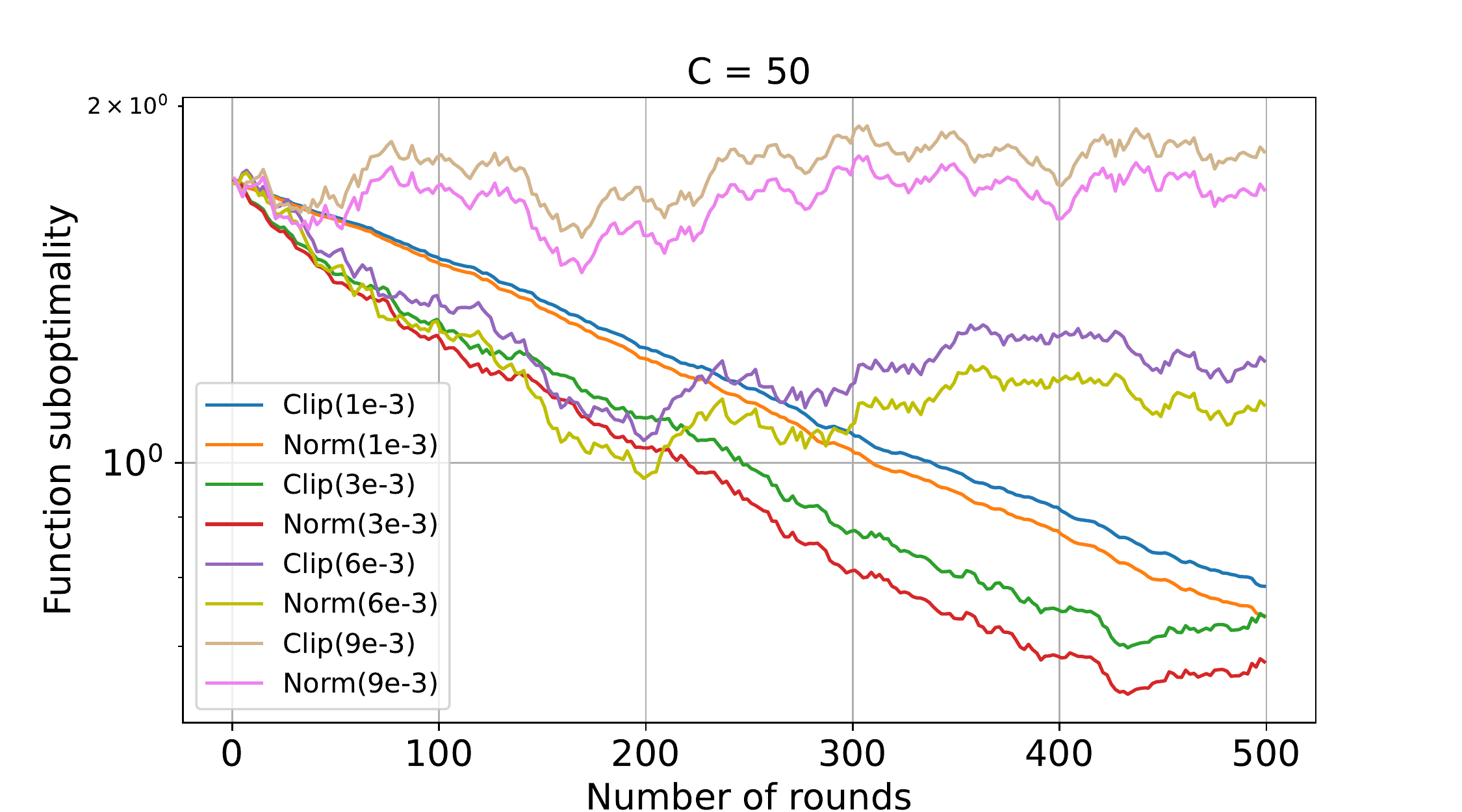}
	} 
\subfloat[\textbf{I1}: ${C}=100$]{
    \label{fig:1_c}
	\includegraphics[width=0.33\textwidth]{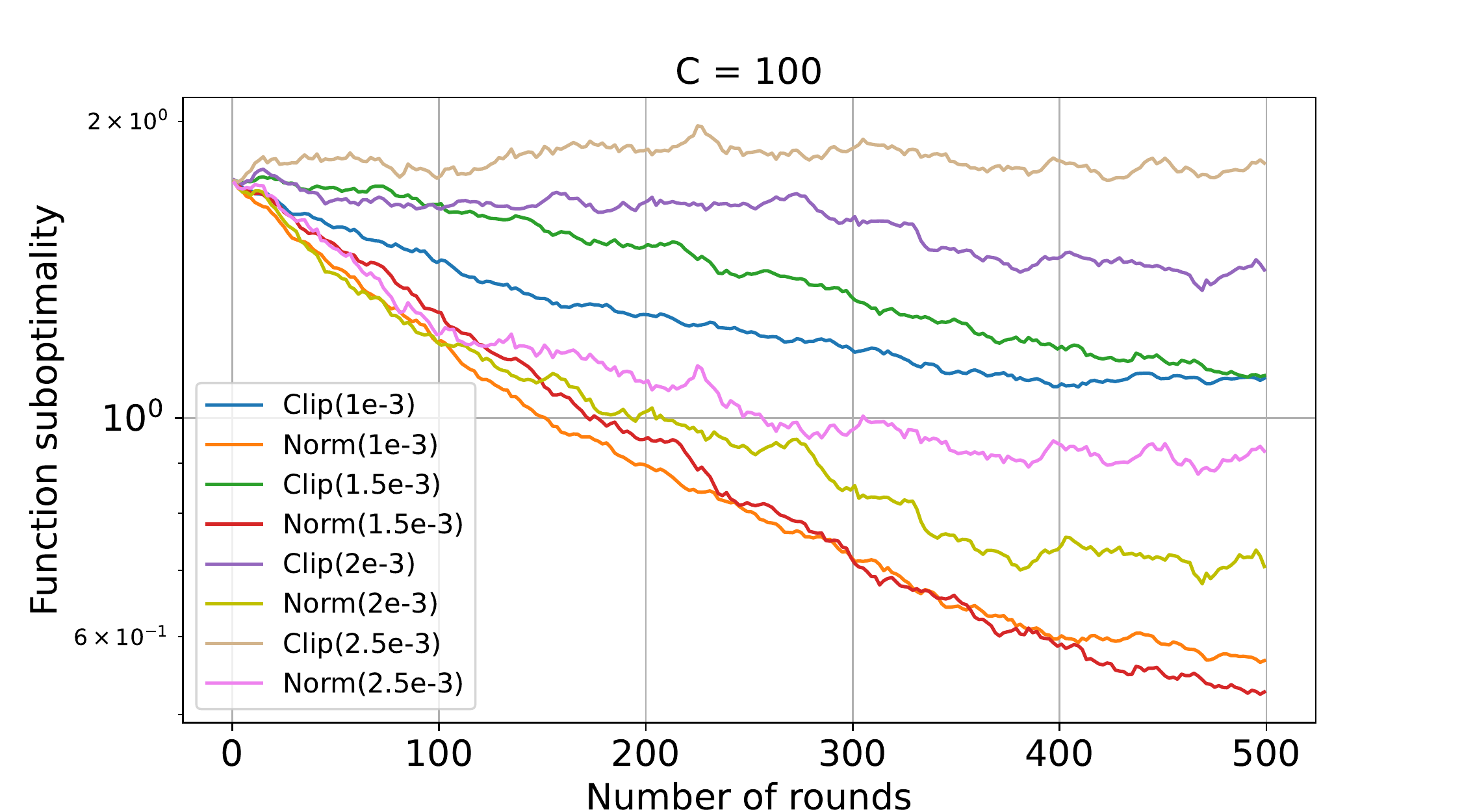}
	} 
\\
\subfloat[\textbf{I2}: ${C}=40$]{
    \label{fig:1_d}
	\includegraphics[width=0.33\textwidth]{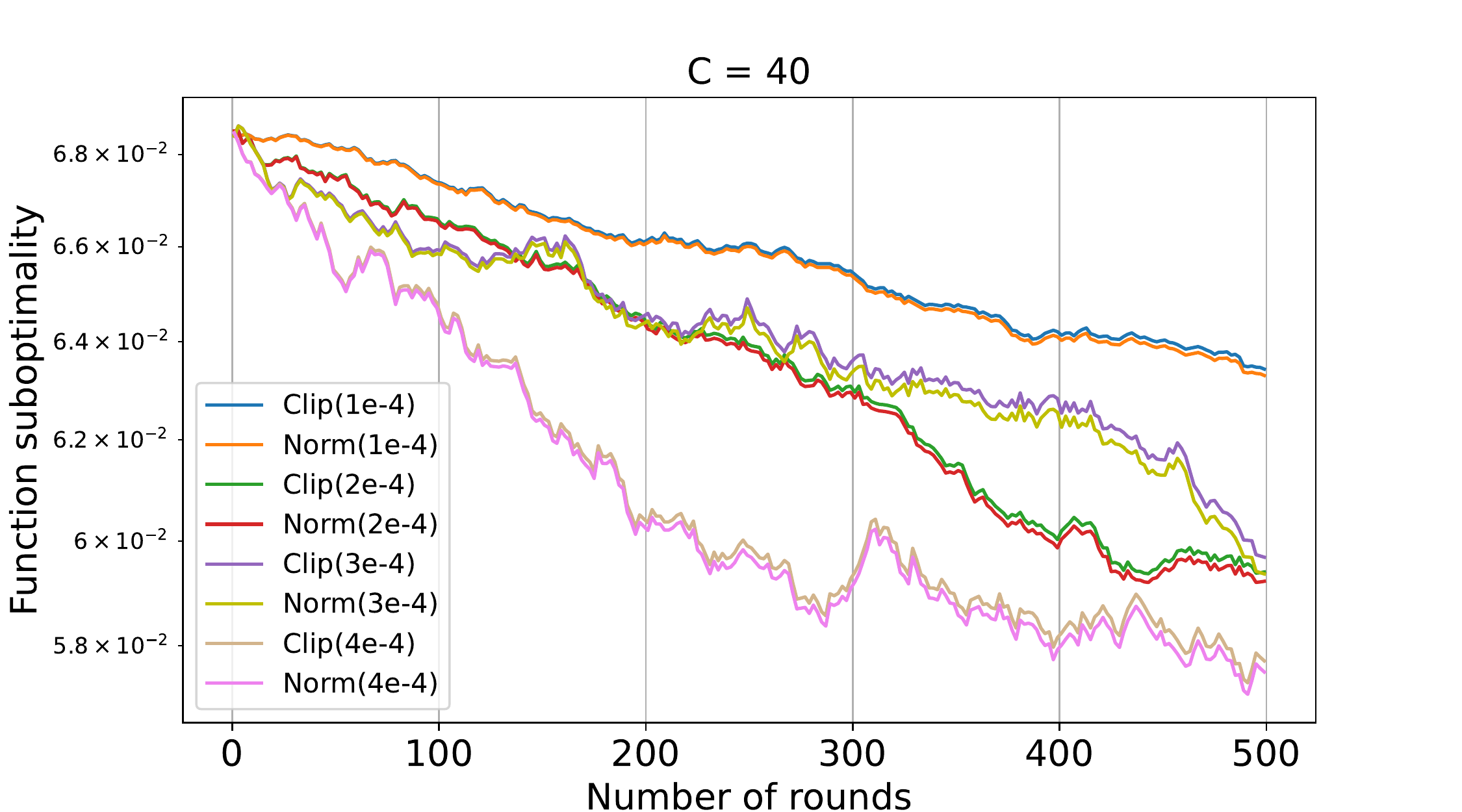}
	} 
\subfloat[\textbf{I2}: ${C}=50$]{
    \label{fig:1_e}
	\includegraphics[width=0.33\textwidth]{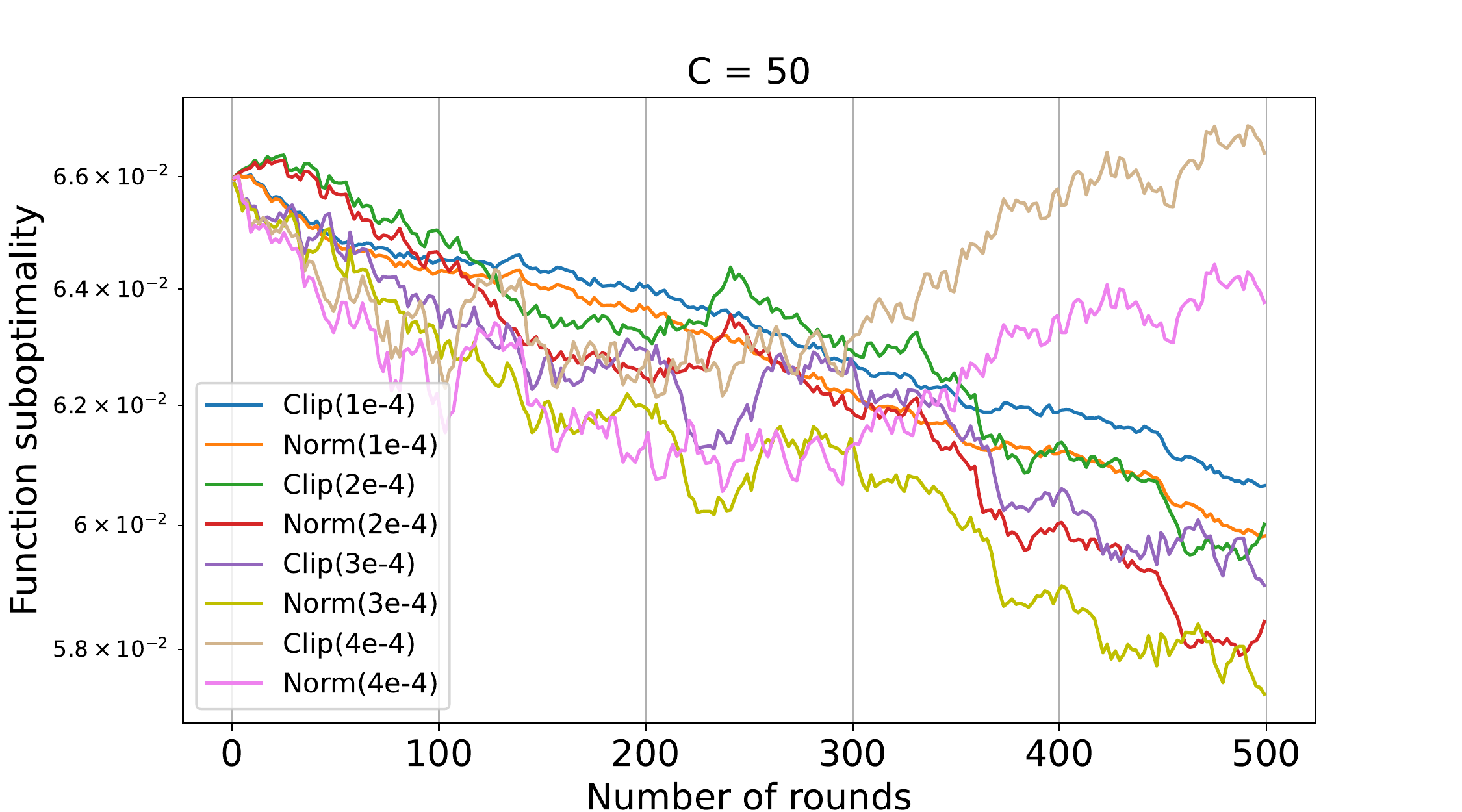}
	} 
\subfloat[\textbf{I2}: ${C}=100$]{
    \label{fig:1_f}
	\includegraphics[width=0.33\textwidth]{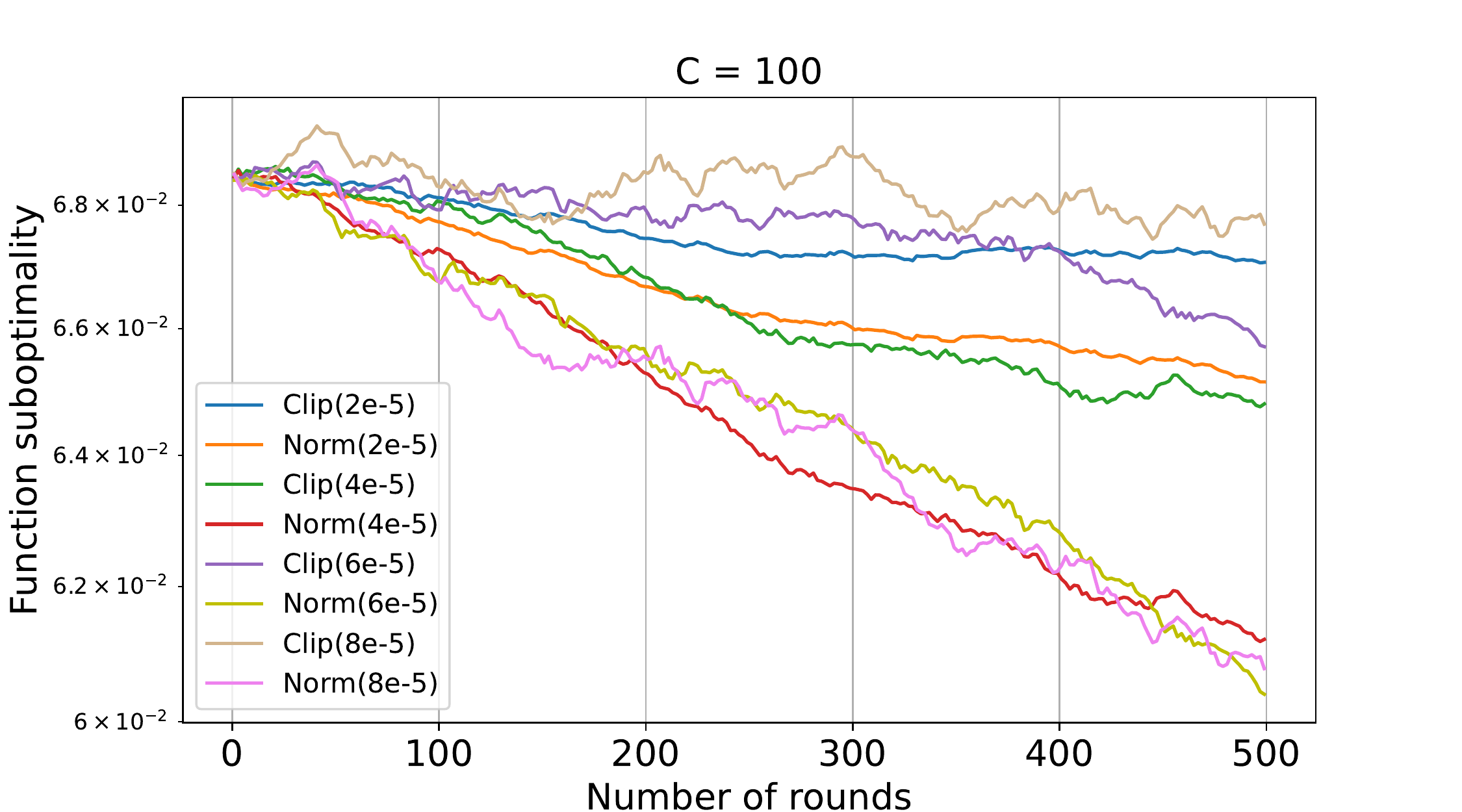}
	} 
\caption{Function suboptimality (i.e., $f(\bm{w}_k) - f(\bm{w}^{*})$ at round number $k$) of \texttt{DP-FedAvg} with Clipping and \texttt{DP-NormFedAvg} for different values of $\eta$ (recall, $\eta_k = \beta_k = \eta$ for all rounds $k$) and clipping threshold/scaling factor $C$, for {I1} and {I2} described in \Cref{sec:norm-vs-clip-expt}. Specifically, \enquote{Clip($\eta$)} and \enquote{Norm($\eta$)} denote \texttt{DP-FedAvg} with Clipping and \texttt{DP-NormFedAvg} with $\eta_k = \beta_k = \eta$, respectively. All plots are averaged over three independent runs.
\\
For $C = \{50,100\}$ and all values of $\eta$, normalization does significantly better than clipping. For $C = 40$ (and lower), normalization and clipping are nearly equivalent, but clipping never does better than normalization. This validates our theoretical predictions in \Cref{sec:comp-1}.}
\label{fig:1}
\end{figure*}

\begin{figure*}[!htb]
\centering 
\subfloat[\textbf{I1}: ${C}=40$]{
    \label{fig:2_a}
	\includegraphics[width=0.33\textwidth]{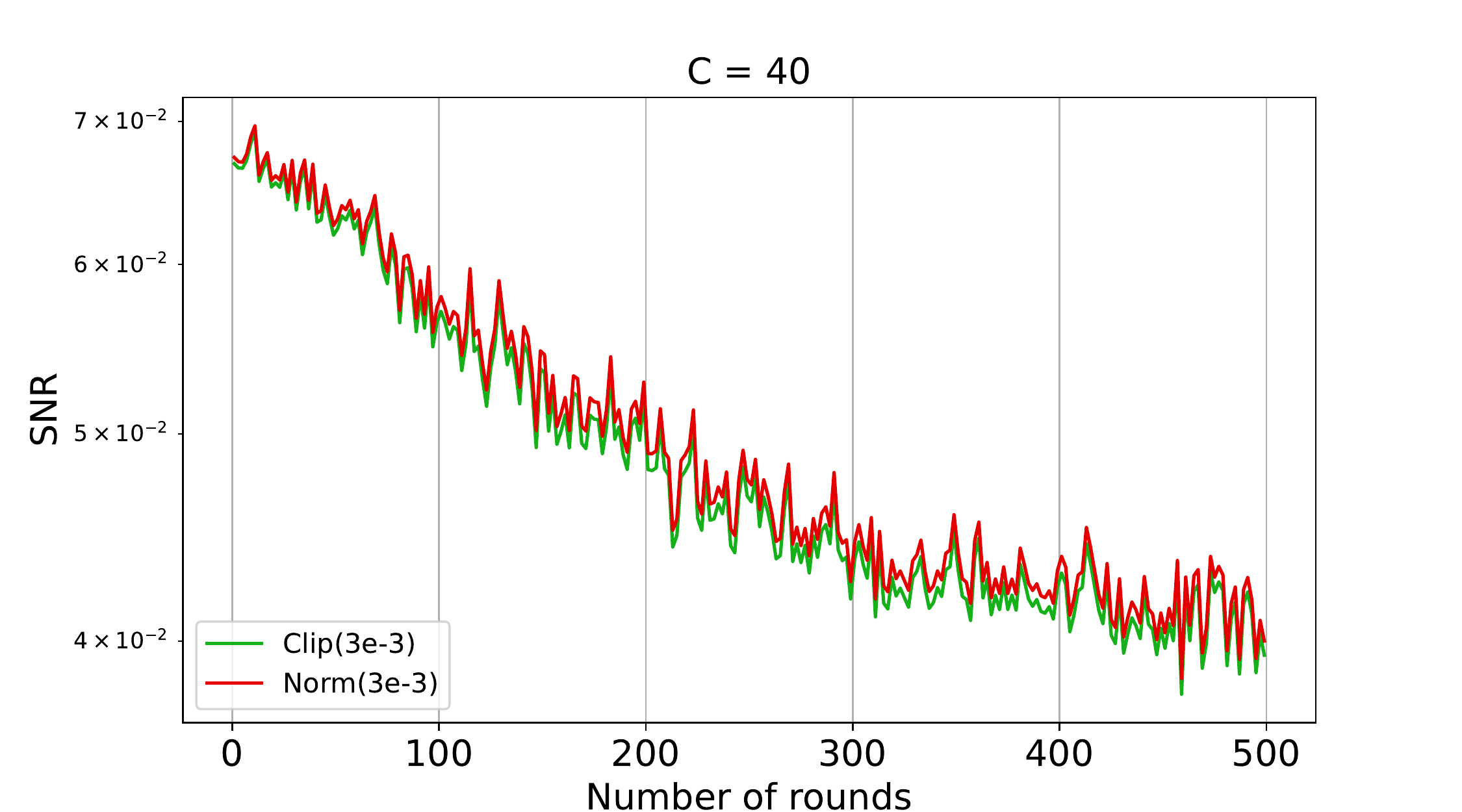}
	} 
\subfloat[\textbf{I1}: ${C}=50$]{
    \label{fig:2_b}
	\includegraphics[width=0.33\textwidth]{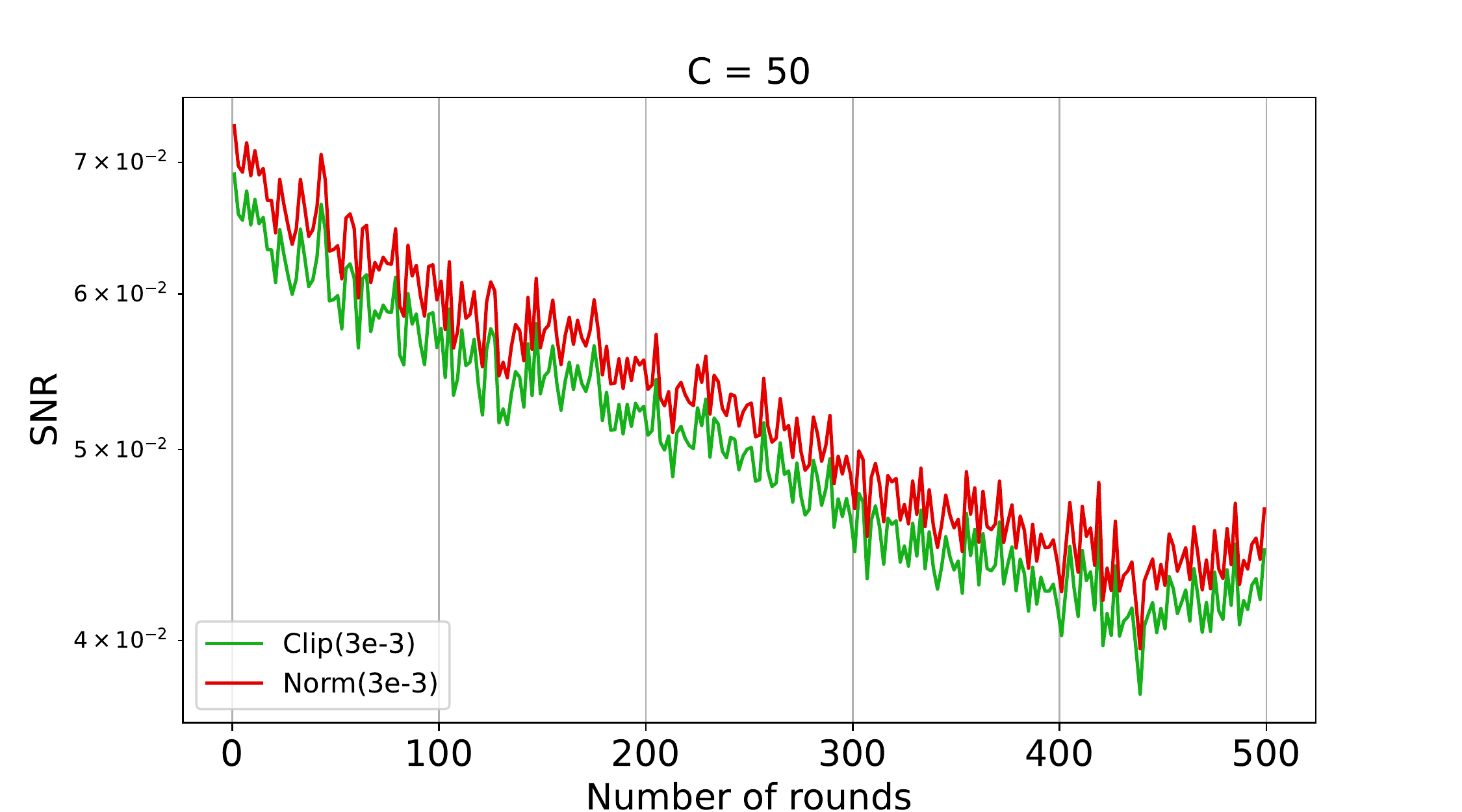}
	} 
\subfloat[\textbf{I1}: ${C}=100$]{
    \label{fig:2_c}
	\includegraphics[width=0.33\textwidth]{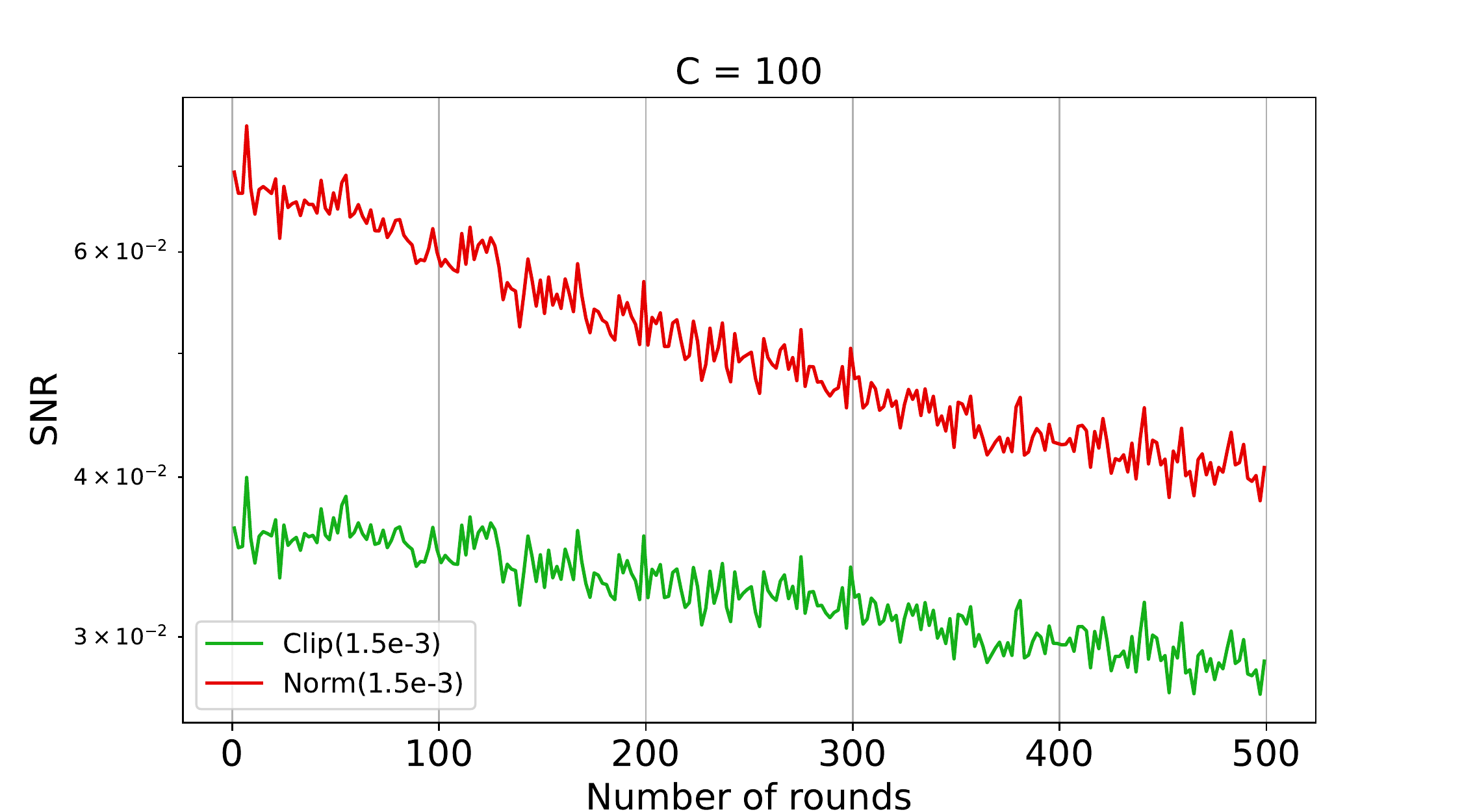}
	} 
\\
\subfloat[\textbf{I2}: ${C}=40$]{
    \label{fig:2_d}
	\includegraphics[width=0.33\textwidth]{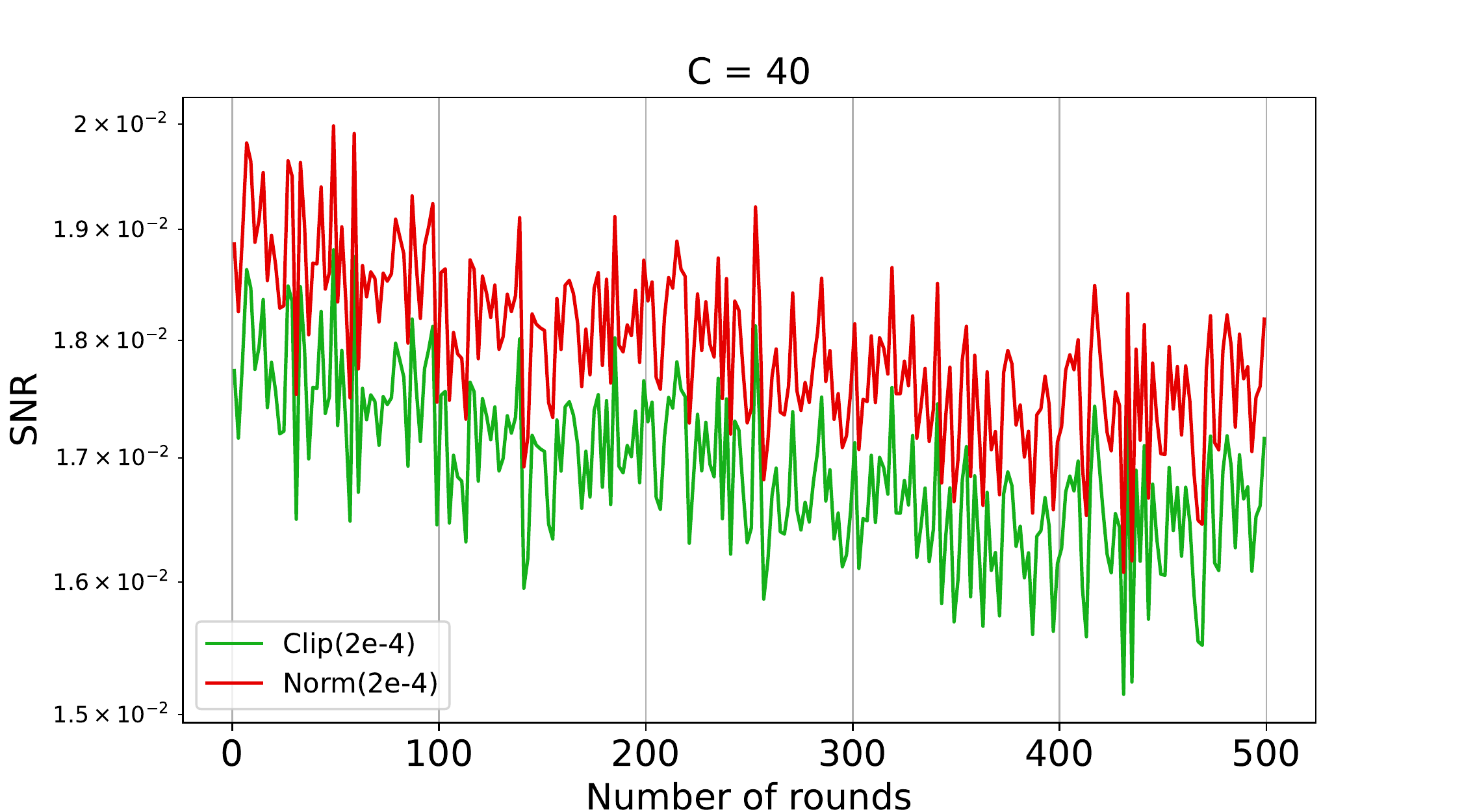}
	} 
\subfloat[\textbf{I2}: ${C}=50$]{
    \label{fig:2_e}
	\includegraphics[width=0.33\textwidth]{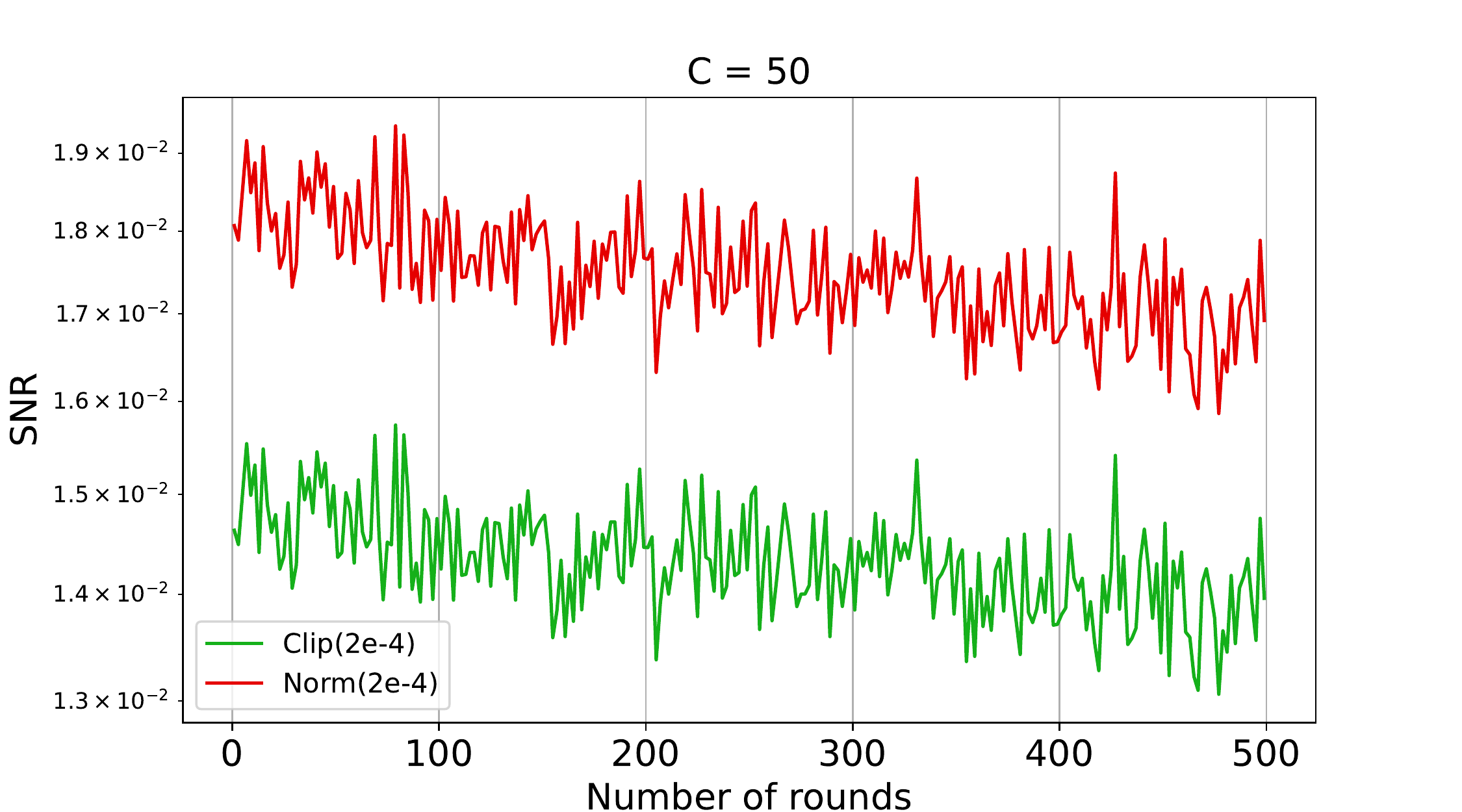}
	} 
\subfloat[\textbf{I2}: ${C}=100$]{
    \label{fig:2_f}
	\includegraphics[width=0.33\textwidth]{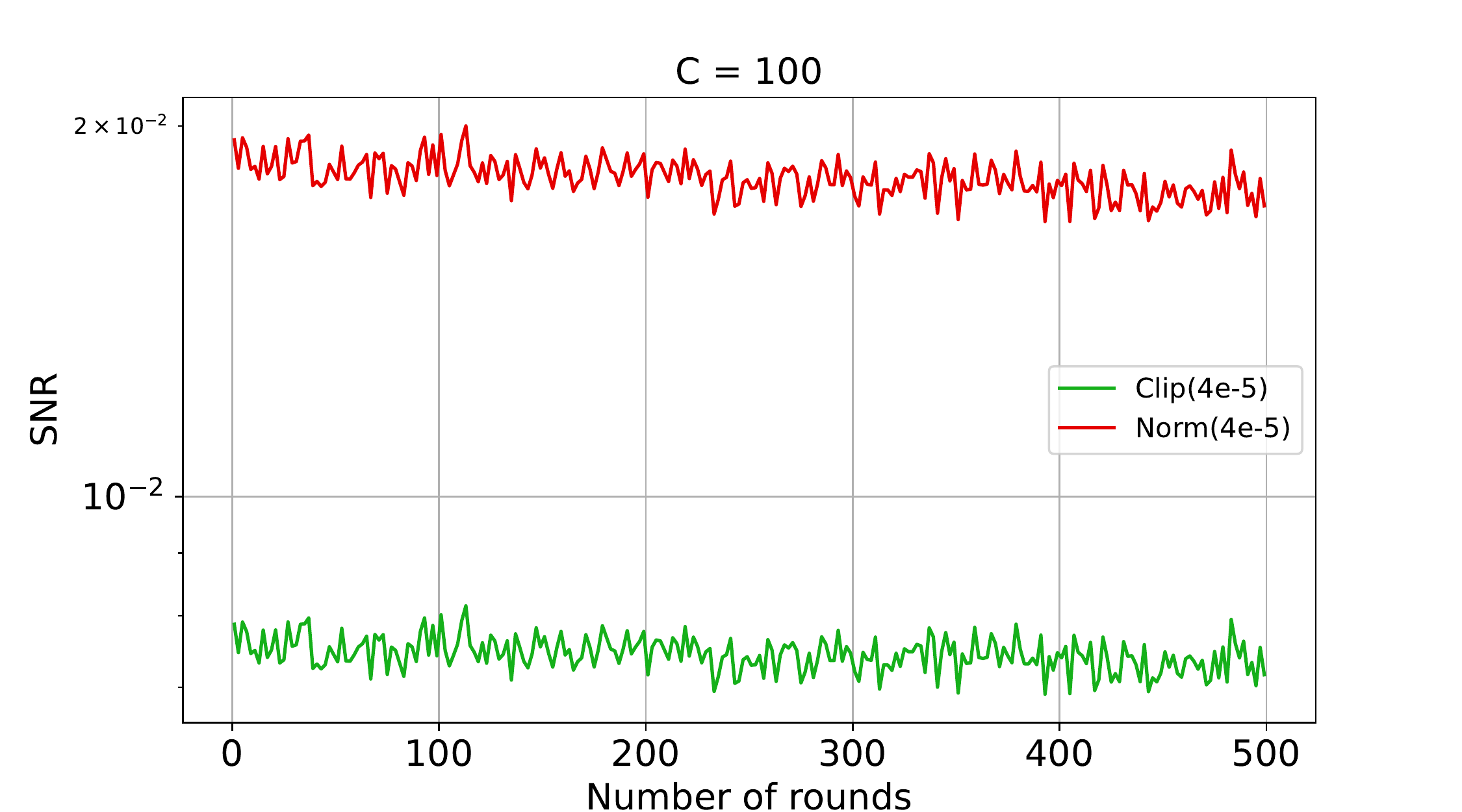}
	} 
\caption{In the same setting and with the same notation as \Cref{fig:1}, comparison of $\text{SNR} := \frac{\big\|\frac{1}{r}\sum_{i \in \mathcal{S}_k}\bm{g}^{(i)}_k\big\|}{\big\|\frac{1}{r}\sum_{i \in \mathcal{S}_k}\bm{\zeta}^{(i)}_k\big\|}$, where $\bm{g}^{(i)}_k$ and $\bm{\zeta}^{(i)}_k$ are the clipped/normalized per-client update and per-client noise, as defined in \Cref{alg:dp-fedavg}/\ref{alg:dp-fedavg-2}. The SNR for only one value of $\eta$ is shown here as the trend for other values of $\eta$ is similar. 
\\
As per our discussion in \Cref{sec:dp-normfed}, the SNR of normalization is never lower than that of clipping, explaining the superiority of the former.
}
\label{fig:2}
\end{figure*}

\begin{figure*}[!htb]
\centering 
\subfloat[\textbf{I1}: ${C}=50$ and $\eta = 0.003$]{
    \label{fig:traj_a}
	\includegraphics[width=0.45\textwidth]{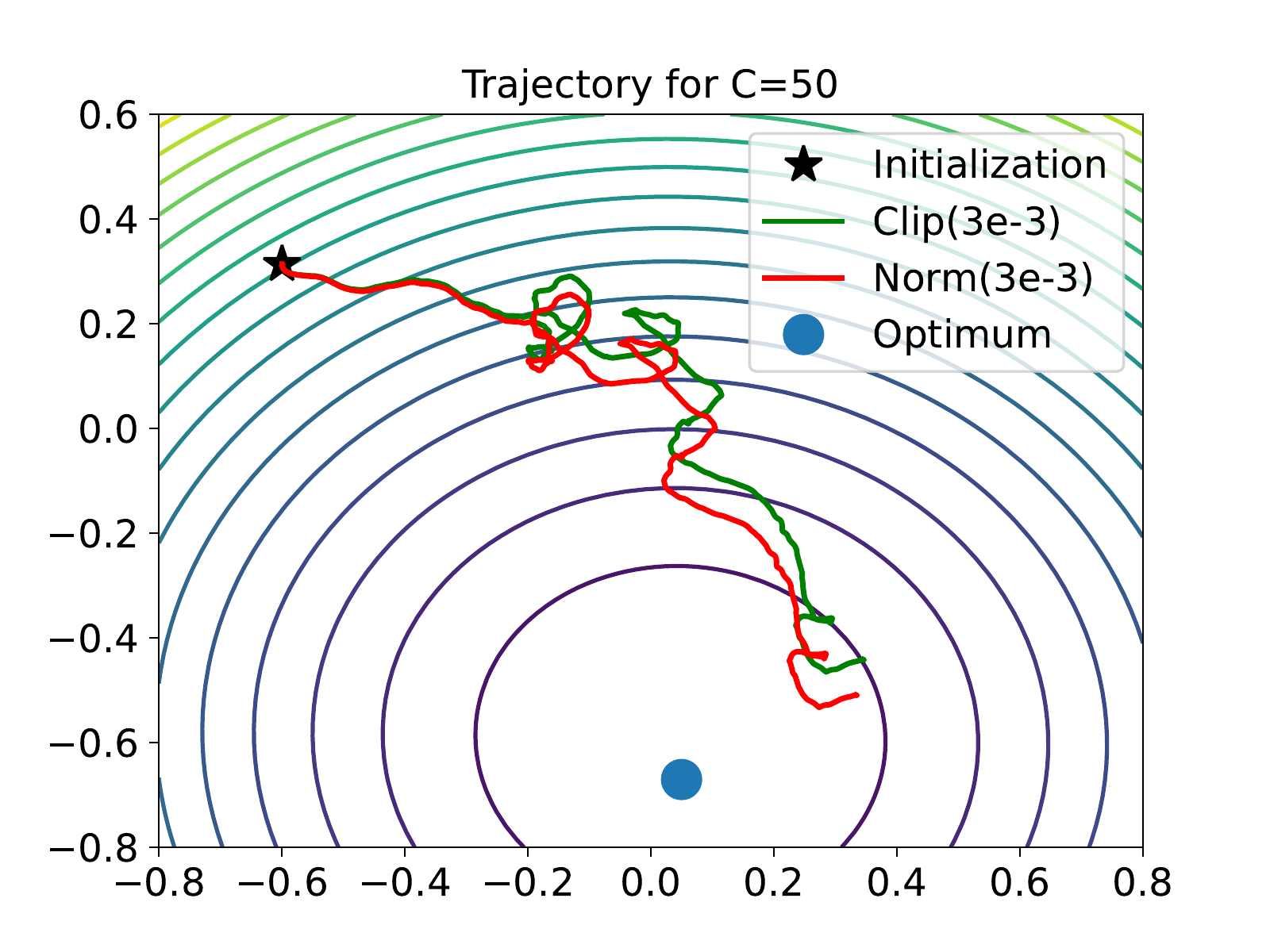}
	} 
\subfloat[\textbf{I1}: ${C}=100$ and $\eta = 0.001$]{
    \label{fig:traj_b}
	\includegraphics[width=0.45\textwidth]{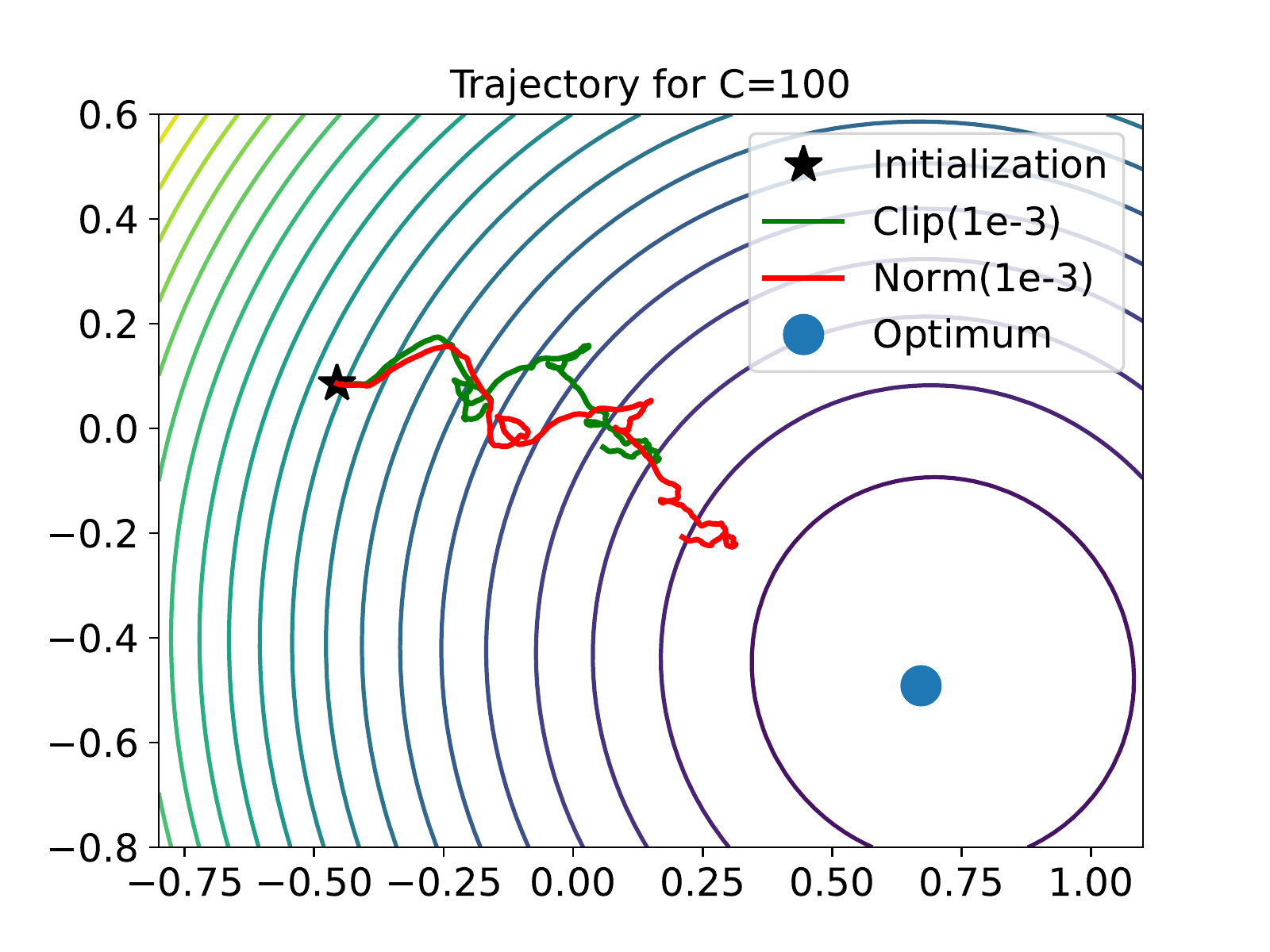}
	} 
\caption{Smoothed 2D projections of the trajectories of \texttt{DP-FedAvg} with clipping and \texttt{DP-NormFedAvg} for two of the cases considered in \Cref{fig:1}. Observe that \texttt{DP-NormFedAvg} reaches closer to the optimum than \texttt{DP-FedAvg} with clipping.}
\label{fig:traj}
\end{figure*}

\section{Experiments}
\label{sec:expts}
{We consider the task of \textit{private} multi-class classification to compare \texttt{DP-FedAvg} with clipping against \texttt{DP-NormFedAvg}; for brevity, we will often call them just clipping and normalization, respectively. Our experiments are performed on three benchmarking datasets, Fashion-MNIST \cite{xiao2017fashion} (abbreviated as FMNIST henceforth), CIFAR-10 and CIFAR-100, where the first two datasets have $10$ classes each and the last one has $100$ classes.} 

{Specifically, we consider logistic regression on FMNIST, CIFAR-10 and CIFAR-100 with $\ell_2$-regularization; the weight decay value in PyTorch for $\ell_2$-regularization is set to 1e-4. For FMNIST, we flatten each image into a $784$-dimensional vector and use that as the feature vector. For CIFAR-10 and CIFAR-100, we use 512-dimensional features extracted from the last layer of a ResNet-18 \cite{he2016deep} model pretrained on ImageNet.
Similar to \cite{mcmahan2017communication}, we simulate a heterogeneous setting by distributing the data among the clients such that each client can have data from at most five classes. {The exact procedure is described in \Cref{sec:expt_det}.}
For the CIFAR-10 and CIFAR-100 (respectively, FMNIST) experiment, the number of clients $n$ is set to 5000 (respectively, 3000), with each client having the same number of samples. 
The number of participating clients in each round is set to $r = 0.2n$ for all datasets, with 20 local client updates per-round. 

We consider two privacy levels: $\varepsilon = \{5, 1.5\}$ with $\delta = 10^{-5}$; note that $\varepsilon  = 5$ (respectively, $1.5$) corresponds to the low (respectively, high) privacy regime. For clipping and normalization, the values of $C$ that we tune over are $\{500,250,125,62.5,31.25,15.625\}$. {The details about the learning rate schedule can be found in \Cref{sec:expt_det}.} In \Cref{tab1}, we show the comparison between clipping and normalization (in terms of test accuracy) for the two aforementioned privacy levels as well as vanilla \texttt{FedAvg} (without any privacy) as the baseline. The results reported here are the best ones for each algorithm by tuning over $C$ and the learning rates, and have been averaged over three different runs. 

{In all cases, normalization is clearly superior to clipping. It is worth noting that the improvement obtained with normalization is more for the low privacy regime (i.e., $\varepsilon = 5$).
\rudrajit{More comments and experiments?}
}
}

\begin{table}[!htb]
\centering
\begin{subtable}[c]{0.5\textwidth}
\centering
\begin{tabular}{|l|c|r|}
\hline
Algo. & $(5,10^{-5})$-DP & $(1.5,10^{-5})$-DP
\\
\hline
Clipping & 75.59\% & 56.90\%
\\
\hline
Normalization & \textbf{77.72}\% & \textbf{57.80}\%
\\
\hline
FedAvg (w/o privacy) & \multicolumn{2}{|c|}{\texttt{83.43}\%}
\\
\hline
\end{tabular}
\subcaption{FMNIST}
\end{subtable}
\\
\vspace{0.2 cm}
\begin{subtable}[c]{0.5\textwidth}
\centering
\begin{tabular}{|l|c|r|}
\hline
Algo. & $(5,10^{-5})$-DP & $(1.5,10^{-5})$-DP
\\
\hline
Clipping & 82.63\% & 81.53\%
\\
\hline
Normalization & \textbf{84.21\%} & \textbf{82.42\%}
\\
\hline
FedAvg (w/o privacy) & \multicolumn{2}{|c|}{\texttt{85.64}\%}
\\
\hline
\end{tabular}
\subcaption{CIFAR-10}
\end{subtable}
\\
\vspace{0.2 cm}
\begin{subtable}[c]{0.5\textwidth}
\centering
\begin{tabular}{|l|c|r|}
\hline
Algo. & $(5,10^{-5})$-DP & $(1.5,10^{-5})$-DP
\\
\hline
Clipping & 56.53\% & 41.33\%
\\
\hline
Normalization & \textbf{59.36\%} & \textbf{42.76\%}
\\
\hline
FedAvg (w/o privacy) & \multicolumn{2}{|c|}{\texttt{64.61}\%}
\\
\hline
\end{tabular}
\subcaption{CIFAR-100}
\end{subtable}
\\
\caption{Average {test accuracy} over the last 5 rounds for (a) FMNIST, (b) CIFAR-10 and (c) CIFAR-100. Recall that \enquote{Clipping} and \enquote{Normalization} denote \texttt{DP-FedAvg} with Clipping and \texttt{DP-NormFedAvg}, respectively. The accuracy of \texttt{FedAvg}, which is our baseline \textit{without privacy}, is at the bottom.}
\label{tab1}
\end{table}

\section{Conclusion}
In this work, we provide the first convergence result for \texttt{DP-FedAvg} with clipping (which is the most standard algorithm for differentially private FL) in the convex case, and without assuming Lipschitzness. We also propose \texttt{DP-NormFedAvg} which normalizes client updates rather than clipping them (which is the customary approach for bounding sensitivity). Theoretically, we argue that \texttt{DP-NormFedAvg} should have better convergence than \texttt{DP-FedAvg} with clipping for problems that do not have a high degree of heterogeneity and the effect of poor initialization is more severe, and/or if we can train for a large number of rounds. Intuitively, this happens because
normalization has a higher signal (i.e., update norm) to noise ratio than clipping. We also show the superiority of normalization over clipping via several experiments. 

Several avenues of future work are possible. One of them is to provide principled recommendations on how to set the clipping threshold. Another one is to explore the feasibility of using adaptive and/or round-dependent clipping thresholds. It would be also nice to come up with meaningful additional assumptions that hold in practice, in order to simplify and/or improve our convergence results.
\section{Acknowledgement}
This work is supported in part by NSF grants CCF-1564000, IIS-1546452 and HDR-1934932.

\bibliography{bib}
\bibliographystyle{alpha}

\clearpage

\appendix

\clearpage
\onecolumn
\appendix

\begin{center}
    \textbf{\Large Appendix}\vspace{5mm}
\end{center}

\section{The \texttt{FedAvg} Algorithm}
\label{fedavg-sec}
For the sake of completeness, here we state the famous \texttt{FedAvg} algorithm of  \cite{mcmahan2017communication} (with local updates using full gradients).
\begin{algorithm}[!htb]
	\caption{\texttt{FedAvg} \cite{mcmahan2017communication}}
	\label{alg:fedavg}
	\begin{algorithmic}[1]
		\STATE {\bfseries Input:} 
		Initial point $\bm{w}_0$, number of rounds of communication $K$, number of local updates per round $E$, local learning rates  $\{\eta_{k}\}_{k=0}^{K-1}$ and number of participating clients in each round $r$.
		\vspace{0.1 cm}
		\FOR{$k =0,\dots, K-1$}
		\vspace{0.2 cm}
		\STATE Server sends $\bm{w}_k$ to a random set $\mathcal{S}_k$ of $r$ clients chosen uniformly at random.
		\vspace{0.2 cm}
		\FOR{client $i \in \mathcal{S}_k$}
		\vspace{0.2 cm}
		\STATE Set $\bm{w}_{k,0}^{(i)} = \bm{w}_k$.
		\vspace{0.2 cm}
		\FOR{$\tau = 0,\ldots,E-1$}
		\vspace{0.2 cm}
		\STATE Update $\bm{w}_{k,\tau+1}^{(i)} \xleftarrow{} \bm{w}_{k,\tau}^{(i)} - \eta_{k} \nabla f_i(\bm{w}_{k,\tau}^{(i)})$.
		\vspace{0.2 cm}
		\ENDFOR
		\vspace{0.2 cm}
		\STATE Send $\bm{w}_{k} - \bm{w}_{k,E}^{(i)}$ to the server.
		\label{line:fedavg-1}
		\vspace{0.2 cm}
		\ENDFOR
		\vspace{0.2 cm}
		\STATE Update 
		$\bm{w}_{k+1} \xleftarrow{} \bm{w}_{k} - \frac{1}{r}\sum_{i \in \mathcal{S}_k}(\bm{w}_{k} - \bm{w}_{k,E}^{(i)})$. 
		\\
		// (\texttt{The above is equivalent to $\bm{w}_{k+1} \xleftarrow{} \frac{1}{r}\sum_{i \in \mathcal{S}_k}\bm{w}_{k,E}^{(i)}$, so the clients might as well just send the $\bm{w}_{k,E}^{(i)}$'s.})
		\label{line:fedavg-2}
		\vspace{0.2 cm}
		\ENDFOR
	\end{algorithmic}
\end{algorithm}

\section{Full Version of Theorem~\ref{thm-clip-cvx-short} and its Proof}
\label{thm1-details}
\begin{theorem}[\textbf{Full version of \Cref{thm-clip-cvx-short}}]
\label{thm-clip-cvx}
Suppose each $f_i$ is convex and $L$-smooth over $\mathbb{R}^d$. Let $\hat{C} := \frac{C}{E}$, where $C$ is the clipping threshold used in \Cref{alg:dp-fedavg}. For any $\bm{w}^{*} \in \arg \min_{\bm{w}' \in \mathbb{R}^d} f(\bm{w}')$ and $\Delta_i^{*} := f_i(\bm{w}^{*}) - \min_{\bm{w}' \in \mathbb{R}^d} f_i(\bm{w}') \geq 0$, \Cref{alg:dp-fedavg} with $\hat{C} \geq 4 \sqrt{L \max_{j \in [n]} \Delta_j^{*}}$, $\beta_k = \eta_k = \eta = \big(\frac{\gamma}{\hat{C} L E K}\big) \frac{1}{\rho}$ and $K > \big(\frac{2 \gamma}{\hat{C}E}\big) \frac{1}{\rho}$, where $\gamma > 0$ is a constant of our choice, has the following convergence guarantee:
\small
\begin{multline*}
    \mathbb{E}\Bigg[\frac{1}{n}\sum_{i=1}^n \Bigg(\mathbbm{1}(\|\bm{u}_{\tilde{k}}^{(i)}\| \leq \hat{C}E)\Big(2 - \frac{2 \gamma}{\hat{C} K \rho} -
    \frac{4 \gamma^2}{\hat{C}^2 K^2 {\rho^2}} \Big)(f_i(\bm{w}_{\tilde{k}}) - f_i(\bm{w}^{*})) + \mathbbm{1}(\|\bm{u}_{\tilde{k}}^{(i)}\| > \hat{C}E) \Big(\frac{3 \hat{C} \|\bm{u}_{\tilde{k}}^{(i)}\|}{8 L E}\Big)\Bigg)\Bigg]
    \\
    \leq \hat{C} \Big(\frac{L \|\bm{w}_{0} - \bm{w}^{*}\|^2}{\gamma} + \frac{\gamma}{L}\Big)\rho + \Bigg(\frac{2 \gamma}{\hat{C} K \rho}\Bigg) \Bigg(1 + \frac{2 \gamma}{\hat{C} K \rho}\Bigg)
    \mathbb{E}\Bigg[\frac{1}{n}\sum_{i=1}^n {\mathbbm{1}(\|\bm{u}_{\tilde{k}}^{(i)}\| \leq \hat{C}E)} \Delta_i^{*}\Bigg],
\end{multline*}
\normalsize
with $\tilde{k} \sim \textup{unif}[0,K-1]$.
\\
Specifically, with $K = \big(\frac{2 \alpha \gamma}{\hat{C} E}\big) \frac{1}{\rho^2}$ and $E \leq \frac{\alpha}{2 \rho}$, where $\alpha \geq 1$ is another constant of our choice, \Cref{alg:dp-fedavg} has the following convergence guarantee:
\begin{multline*}
        \mathbb{E}\Bigg[\frac{1}{n}\sum_{i=1}^n \Bigg(\mathbbm{1}(\|\bm{u}_{\tilde{k}}^{(i)}\| \leq  \hat{C}E) \Big(2 - \frac{\rho E}{\alpha} - \frac{\rho^2 E^2}{\alpha^2}\Big)(f_i(\bm{w}_{\tilde{k}}) - f_i(\bm{w}^{*})) +  \mathbbm{1}(\|\bm{u}_{\tilde{k}}^{(i)}\| > \hat{C}E) \Big(\frac{3 \hat{C} \|\bm{u}_{\tilde{k}}^{(i)}\|}{8 L E}\Big)\Bigg)\Bigg]
        \\
        \leq \hat{C}\Big(\frac{L \|\bm{w}_{0} - \bm{w}^{*}\|^2}{\gamma} + \frac{\gamma}{L}\Big)\rho + \Big(\frac{3 E}{2 \alpha}\Big) \mathbb{E}\Bigg[\frac{1}{n}\sum_{i=1}^n {\mathbbm{1}(\|\bm{u}_{\tilde{k}}^{(i)}\| \leq \hat{C}E)} \Delta_i^{*}\Bigg] \rho.
\end{multline*}
\end{theorem}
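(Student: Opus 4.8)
The plan is to run a distance-to-optimum (Lyapunov) analysis on $\|\bm{w}_k - \bm{w}^*\|^2$, with the twist that the clipping bias and the $E$-step local drift must be controlled simultaneously. Since the local updates use full gradients, the only randomness in round $k$ is the client sampling $\mathcal{S}_k$ and the Gaussian noise. Conditioning on $\bm{w}_k$ and expanding the global update $\bm{w}_{k+1} = \bm{w}_k - \eta\bm{a}_k$ gives
\[
\mathbb{E}\|\bm{w}_{k+1} - \bm{w}^*\|^2 = \|\bm{w}_k - \bm{w}^*\|^2 - 2\eta\Big\langle \tfrac1n\textstyle\sum_{i=1}^n \bm{g}_k^{(i)},\, \bm{w}_k - \bm{w}^*\Big\rangle + \eta^2\,\mathbb{E}\|\bm{a}_k\|^2,
\]
using that each client enters $\mathcal{S}_k$ with probability $r/n$, so $\mathbb{E}[\bm{a}_k\mid\bm{w}_k] = \tfrac1n\sum_i \bm{g}_k^{(i)}$ and the noise is mean-zero. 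For the variance term I would bound $\mathbb{E}\|\bm{a}_k\|^2$ by the squared signal norm (each $\|\bm{g}_k^{(i)}\|\le \hat{C} E$), a sampling-variance contribution, and the dominant noise contribution $d\sigma^2 = K\hat{C}^2E^2\rho^2$, obtained by substituting the calibration $\sigma^2 = qKC^2\log(1/\delta)/(n^2\varepsilon^2)$ from \Cref{thm-dp} and the definition of $\rho$ in \Cref{key_qty}.

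The crux is lower-bounding the per-client cross term $\langle \bm{g}_k^{(i)}, \bm{w}_k - \bm{w}^*\rangle$, and this is where I split on whether clipping occurs. When $\|\bm{u}_k^{(i)}\|\le \hat{C} E$ there is no clipping, so $\bm{g}_k^{(i)} = \bm{u}_k^{(i)} = \sum_{\tau=0}^{E-1}\nabla f_i(\bm{w}_{k,\tau}^{(i)})$. I would write each summand's inner product with $\bm{w}_k - \bm{w}^*$ as its inner product with $\bm{w}_{k,\tau}^{(i)} - \bm{w}^*$ (which convexity converts into $f_i(\bm{w}_{k,\tau}^{(i)}) - f_i(\bm{w}^*)$) plus a drift term $\langle \nabla f_i(\bm{w}_{k,\tau}^{(i)}),\, \bm{w}_k - \bm{w}_{k,\tau}^{(i)}\rangle$. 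The drift is then controlled via $L$-smoothness together with the monotonicity of local GD at $\eta\le 1/L$, namely $f_i(\bm{w}_{k,\tau}^{(i)})\le f_i(\bm{w}_k)$ and $\|\nabla f_i(\bm{w}_{k,\tau}^{(i)})\|^2\le 2L(f_i(\bm{w}_k) - f_i^*)$, where $f_i^*:=\min_{\bm{w}'}f_i(\bm{w}')$. This is what produces the coefficient $(2 - \tfrac{\rho E}{\alpha} - \tfrac{\rho^2 E^2}{\alpha^2})$ multiplying $f_i(\bm{w}_{\tilde k}) - f_i(\bm{w}^*)$, together with the heterogeneity penalty $\Delta_i^*$ (term B), the latter arising precisely because converting $f_i^*$ to $f_i(\bm{w}^*)$ costs exactly $\Delta_i^*$.

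When $\|\bm{u}_k^{(i)}\| > \hat{C} E$ clipping is active and $\bm{g}_k^{(i)} = \hat{C} E\,\bm{u}_k^{(i)}/\|\bm{u}_k^{(i)}\|$. Here the hypothesis $\hat{C} \ge 4\sqrt{L\max_j\Delta_j^*}$ does the heavy lifting: combining $\hat{C} E < \|\bm{u}_k^{(i)}\|\le E\sqrt{2L(f_i(\bm{w}_k)-f_i^*)}$ with the threshold forces $f_i(\bm{w}_k) - f_i(\bm{w}^*)$ to dominate $\Delta_i^*$ by a constant factor, so the rescaled clipped update still points sufficiently into the descent cone to yield the term $\tfrac{3\hat{C}}{8LE}\|\bm{u}_k^{(i)}\|$ with \emph{no} heterogeneity penalty. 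Assembling both cases into the one-step inequality, I would telescope over $k=0,\dots,K-1$, replace the average over $k$ by the random index $\tilde k\sim\mathrm{unif}[0,K-1]$, and finally substitute $\eta = \rho/(2\alpha L)$ and $K = (2\alpha\gamma/(\hat{C} E))\rho^{-2}$; this learning rate is exactly the one balancing the initialization term $\|\bm{w}_0-\bm{w}^*\|^2/(2\eta K)$ against $\eta\,d\sigma^2$, making both scale like $\hat{C}\rho$. The main difficulty I anticipate is the joint bookkeeping of the clipping case-split and the local drift: keeping the drift corrections second-order in $\eta LE$ so they merely perturb the leading coefficient ``$2$'' into $(2 - \tfrac{\rho E}{\alpha} - \tfrac{\rho^2E^2}{\alpha^2})$ rather than degrading the rate, and using the constraint $E\le \alpha/(2\rho)$ to keep that coefficient bounded away from zero.
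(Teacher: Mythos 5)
Your skeleton---the Lyapunov recursion on $\|\bm{w}_k-\bm{w}^{*}\|^2$, conditioning on the Bernoulli client sampling, the case split on whether clipping fires, telescoping, and the same choices of $\eta$ and $K$---is the paper's, but the two places where you deviate are exactly the two technical cruxes, and both deviations break. The first is the second-moment bookkeeping: you bound the signal part of $\mathbb{E}\|\bm{a}_k\|^2$ crudely via $\|\bm{g}_k^{(i)}\|\leq \hat{C}E$. That is what one is \emph{forced} to do for normalization, and it is precisely what creates the extra $\frac{\hat{C}^2}{2\alpha L}E\rho$ term in the heterogeneity bound of \Cref{thm-new-clip-cvx}; carried through your telescoping it adds a term of order $\eta\hat{C}^2E=\frac{\hat{C}^2E}{2\alpha L}\rho$ to the right-hand side, which is absent from the statement you are proving and is not dominated by anything in it (it already exceeds the heterogeneity term because $\hat{C}^2\geq 16L\max_j\Delta_j^{*}$, and it grows with the free parameter $\hat{C}$). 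The paper avoids this by never collapsing the signal second moment: it keeps $\eta^2\|\text{clip}(\bm{u}_k^{(i)},C)\|^2$ per client inside $A_i$ (\cref{eq:pf-july15-2}) and treats it per case. In the unclipped case it is bounded by $2\eta^2LE^2(f_i(\bm{w}_k)-f_i^{*})$ via \Cref{lem-mar13-1} and absorbed into the suboptimality coefficient---this is the origin of the $-\frac{\rho E}{\alpha}$ in $\big(2-\frac{\rho E}{\alpha}-\frac{\rho^2E^2}{\alpha^2}\big)$, while the local-drift correction supplies only the $-\frac{\rho^2E^2}{\alpha^2}$ part; so your attribution of the whole coefficient to drift control is inconsistent with your own crude bound, and you cannot have both that coefficient and the $\hat{C}^2E^2$ bound. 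In the clipped case, $\eta^2C^2$ is cancelled against the cross term using $C<\|\bm{u}_k^{(i)}\|$.

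The second gap is the clipped case itself. Your domination observation is correct: $\hat{C}E<\|\bm{u}_k^{(i)}\|\leq E\sqrt{2L(f_i(\bm{w}_k)-f_i^{*})}$ together with the threshold gives $f_i(\bm{w}_k)-f_i^{*}>8\Delta_i^{*}$. But ``points sufficiently into the descent cone'' is not an argument, and the natural instantiation---reusing your convexity-plus-drift lower bound on $\langle\bm{u}_k^{(i)},\bm{w}_k-\bm{w}^{*}\rangle$, whose error term is $2\eta^2L^2E^3(f_i(\bm{w}_k)-f_i^{*})$---closes only when $\eta LE$ is small. It does go through (with altered constants) under the second parameter choice, where $E\leq\frac{\alpha}{2\rho}$ forces $\eta LE\leq\frac14$, but it fails for the first, general display of the theorem, where only $\eta L<\frac12$ is guaranteed and $E$ is unconstrained, so the negative descent term is wiped out. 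The paper's Case 1 needs only $\eta\leq\frac{1}{2L}$: it rewrites $-2\eta C\big\langle \bm{u}_k^{(i)}/\|\bm{u}_k^{(i)}\|,\bm{w}_k-\bm{w}^{*}\big\rangle$ via the polarization identity as $-\frac{C}{\|\bm{u}_k^{(i)}\|}\big(\|\bm{w}_k-\bm{w}^{*}\|^2+\eta^2\|\bm{u}_k^{(i)}\|^2-\|\bm{w}_{k,E}^{(i)}-\bm{w}^{*}\|^2\big)$ and invokes the local-GD distance-decrease lemma (\Cref{lem-mar15-1}), which carries heterogeneity as a $2\eta E\Delta_i^{*}$ penalty that the threshold $\hat{C}\geq 4\sqrt{L\max_j\Delta_j^{*}}$ then absorbs, yielding $-\frac{3\eta\hat{C}}{8L}\|\bm{u}_k^{(i)}\|$ unconditionally on $E$. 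To repair your proof you need both fixes: per-case handling of the signal second moment, and either \Cref{lem-mar15-1} or an explicit small-$\eta LE$ restriction (which would weaken the first claim of the theorem) in the clipped case.
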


\subsection{Proof of Theorem~\ref{thm-clip-cvx}:}
\begin{proof}
Let us set $\eta_k = \beta_k = \eta$ for all $k \geq 0$.
\\
\\
\noindent The update rule of the global iterate is:
\begin{equation}
    \label{jan30-2}
    \bm{w}_{k+1} = \bm{w}_k - \eta \Big(\frac{1}{r}\sum_{i \in \mathcal{S}_k} \text{clip}\big(\bm{u}_k^{(i)}, C \big) + \bm{\zeta}_k\Big),
\end{equation}
where $\bm{\zeta}_k = \frac{1}{r} \sum_{i \in \mathcal{S}_k} \bm{\zeta}^{(i)}_k \sim \mathcal{N}(\vec{0}_d, \frac{q K \log(1/\delta)C^2}{n^2 \varepsilon^2} \bm{\textup{I}}_d)$ and 
\begin{equation}
    \label{jan30-1}
    \bm{u}_k^{(i)} = \frac{(\bm{w}_k - \bm{w}_{k,E}^{(i)})}{\eta} = \sum_{\tau=0}^{E-1}\nabla f_i(\bm{w}_{k,\tau}^{(i)}).
\end{equation}
Taking expectation with respect to the randomness in the current round, we get for any $\bm{w}^{*} \in \arg \min_{\bm{w}' \in \mathbb{R}^d} f(\bm{w}')$:
\small
\begin{flalign}
    \mathbb{E}[\|\bm{w}_{k+1} - \bm{w}^{*}\|^2] & = \mathbb{E}\Big[\Big\|\bm{w}_{k} - \eta\Big(\frac{1}{r}\sum_{i \in \mathcal{S}_k}\text{clip}(\bm{u}_k^{(i)}, C) + \bm{\zeta}_k \Big) - \bm{w}^{*}\Big\|^2\Big] 
    \\
    & = \|\bm{w}_{k} - \bm{w}^{*}\|^2 - 2 \eta \mathbb{E}_{\mathcal{S}_k}\Big[ \frac{1}{r}\sum_{i \in \mathcal{S}_k}\langle \text{clip}(\bm{u}_k^{(i)}, C), \bm{w}_{k} - \bm{w}^{*}\rangle\Big] + \eta^2 \mathbb{E}\Big[\Big\|\frac{1}{r}\sum_{i \in \mathcal{S}_k}\text{clip}(\bm{u}_k^{(i)}, C) + \bm{\zeta}_k\Big\|^2\Big]
    \\
    & = 
    \|\bm{w}_{k} - \bm{w}^{*}\|^2 + \frac{1}{n}\sum_{i = 1}^n {- 2 \eta\langle \text{clip}(\bm{u}_k^{(i)}, C), \bm{w}_{k} - \bm{w}^{*}\rangle}
    + \eta^2 \mathbb{E}_{\mathcal{S}_k}\Big[\Big\|\frac{1}{r}\sum_{i \in \mathcal{S}_k}\text{clip}(\bm{u}_k^{(i)}, C)\Big\|^2\Big]
    \\
    \nonumber
    & \hspace{1 cm}
    + \eta^2 \Big(\frac{q K d \log(1/\delta) C^2}{n^2 \varepsilon^2}\Big)
    \\
    \label{eq:feb4-10}
    & \leq \|\bm{w}_{k} - \bm{w}^{*}\|^2 + \frac{1}{n}\sum_{i = 1}^n {- 2\eta\langle \text{clip}(\bm{u}_k^{(i)}, C), \bm{w}_{k} - \bm{w}^{*}\rangle}
    + \eta^2 \mathbb{E}_{\mathcal{S}_k}\Big[\frac{1}{r}\sum_{i \in \mathcal{S}_k}\big\|\text{clip}(\bm{u}_k^{(i)}, C)\big\|^2\Big]
    \\
    \nonumber
    & \hspace{1 cm}
    + \eta^2 \Big(\frac{q K d \log(1/\delta) C^2}{n^2 \varepsilon^2}\Big)
    \\
    \label{eq:pf-july15-2}
    & = 
    \|\bm{w}_{k} - \bm{w}^{*}\|^2 + \frac{1}{n}\sum_{i = 1}^n \Big\{\underbrace{- 2 \eta\langle \text{clip}(\bm{u}_k^{(i)}, C), \bm{w}_{k} - \bm{w}^{*}\rangle + \eta^2 \big\|\text{clip}(\bm{u}_k^{(i)}, C)\big\|^2}_{A_i}\Big\}
    \\
    \nonumber
    & \hspace{1 cm}
    + \eta^2 \Big(\frac{q K d \log(1/\delta) C^2}{n^2 \varepsilon^2}\Big).
\end{flalign}
\normalsize
Note that \cref{eq:feb4-10} is obtained by using \Cref{fact2}. 
Let us examine $A_i$ for each $i$.
\\
\\
\textbf{Case 1:} $\|\bm{u}_k^{(i)}\| > C$. So we have $\text{clip}(\bm{u}_k^{(i)}, C) = \frac{C}{\|\bm{u}_k^{(i)}\|} \bm{u}_k^{(i)}$. Thus,
\begin{flalign}
    \label{eq:pf-july15-3}
    A_i & = {-2 \eta C} {\Big\langle \frac{\bm{u}_k^{(i)}}{\|\bm{u}_k^{(i)}\|}, \bm{w}_{k} - \bm{w}^{*} \Big\rangle} + \eta^2 C^2
    \\
    \label{eq:jan30-2-1}
    & = \frac{-C}{ \big\|\bm{u}_k^{(i)}\big\|} \Big(\|\bm{w}_{k} - \bm{w}^{*}\|^2 + \eta^2 \big\|\bm{u}_k^{(i)}\big\|^2 - \|\underbrace{\bm{w}_{k} - \eta \bm{u}_k^{(i)}}_{=\bm{w}_{k,E}^{(i)}} - \bm{w}^{*}\|^2\Big) + \eta^2 C^2,
\end{flalign}
where the last step follows by using the fact for any two vectors $\bm{a}$ and $\bm{b}$, $\langle \bm{a}, \bm{b} \rangle = \frac{1}{2}\Big(\|\bm{a}\|^2 + \|\bm{b}\|^2 - \|\bm{a}-\bm{b}\|^2)$. Next, notice that $\bm{w}_{k} - \eta \bm{u}_k^{(i)} = \bm{w}_{k,E}^{(i)}$.
Since $f_i$ is convex, we use \Cref{lem-mar15-1} to get:
\begin{equation}
    \label{jan30-3}
    \|\bm{w}_{k} - \bm{w}^{*}\|^2 - \|\bm{w}_{k,E}^{(i)} - \bm{w}^{*}\|^2 \geq \frac{\eta}{2L} \sum_{\tau=0}^{E-1}\|\nabla f_i(\bm{w}_{k,\tau}^{(i)})\|^2 - 2 \eta E  \Delta_i^{*},
\end{equation}
for $\eta \leq \frac{1}{2L}$ with $\Delta_i^{*} := f_i(\bm{w}^{*}) - \min_{\bm{w}' \in \mathbb{R}^d} f_i(\bm{w}') \geq 0$. But:
\begin{flalign}
    \label{mar15-5}
    \big\|\bm{u}_k^{(i)}\big\|^2 & = \big\|\sum_{\tau=0}^{E-1}\nabla f_i(\bm{w}_{k,\tau}^{(i)})\big\|^2 \leq E\sum_{\tau=0}^{E-1} \big\|\nabla f_i(\bm{w}_{k,\tau}^{(i)})\big\|^2.
\end{flalign}
The inequality above follows from \Cref{fact2}. Using this in \cref{jan30-3}, we get:
\begin{equation}
    \label{jan30-4}
    \|\bm{w}_{k} - \bm{w}^{*}\|^2 - \|\bm{w}_{k,E}^{(i)} - \bm{w}^{*}\|^2 \geq \frac{\eta}{2LE} \big\|\bm{u}_k^{(i)}\big\|^2 - 2 \eta E  \Delta_i^{*}.
\end{equation}
Plugging this back in \cref{eq:jan30-2-1}, we get:
\begin{equation}
    \label{eq:feb1-0}
    A_i \leq - C \Big(\eta^2 + \frac{\eta}{2 L E}\Big){\|\bm{u}_k^{(i)}\|} + 2 \eta \underbrace{\Bigg(\frac{C}{\|\bm{u}_k^{(i)}\|}\Bigg)}_{<1} E \Delta_i^{*} + \eta^2 C^2,
\end{equation}
for $\eta \leq \frac{1}{2L}$.

Let us choose $C^2 \geq 16 L E^2 \max_{j \in [n]} \Delta_j^{*}$. Then, we have $E \Delta_i^{*} \leq \frac{C^2}{16 L E} \leq \frac{C \|\bm{u}_k^{(i)}\|}{16 L E}$. Using this in \cref{eq:feb1-0}, we get:
\begin{flalign}
    A_i & \leq - C \Big(\eta^2 + \frac{\eta}{2 L E}\Big)\|\bm{u}_k^{(i)}\| + \frac{\eta C}{8 L E} \|\bm{u}_k^{(i)}\| + \eta^2 C^2
    \\
    & = -\frac{3 \eta C}{8 L E} \|\bm{u}_k^{(i)}\| + \eta^2 C \underbrace{(C - \|\bm{u}_k^{(i)}\|)}_{< 0}
    \\
    \label{eq:pf-july15-5-1}
    & \leq -\frac{3 \eta C}{8 L E} \|\bm{u}_k^{(i)}\|,
\end{flalign}
for $C \geq 4 E \sqrt{L \max_{j \in [n]} \Delta_j^{*}}$ and $\eta \leq \frac{1}{2 L}$.
\\
\\
\textbf{Case 2:} $\|\bm{u}_k^{(i)}\| \leq C$. So we have $\text{clip}(\bm{u}_k^{(i)}, C) = \bm{u}_k^{(i)}$. Thus,
\begin{flalign}
    \label{eq:pf-july15-6}
    A_i & = {-2 \eta} {\langle \bm{u}_k^{(i)}, \bm{w}_{k} - \bm{w}^{*} \rangle} + \eta^2 \|\bm{u}_k^{(i)}\|^2 \leq {-2 \eta} \underbrace{\langle \bm{u}_k^{(i)}, \bm{w}_{k} - \bm{w}^{*} \rangle}_{B_i} + 2 \eta^2 L E^2 (f_i(\bm{w}_k) - f_i^{*}),
\end{flalign}
for $\eta L \leq {1}$; the inequality $\|\bm{u}_{{k}}^{(i)}\|^2 \leq 2 L E^2 (f_i(\bm{w}_k) - f_i^{*})$ (for $\eta L \leq {1}$) is obtained from \Cref{lem-mar13-1}. 
Now:
\begin{flalign}
    B_i & = \langle \bm{u}_k^{(i)}, \bm{w}_{k} - \bm{w}^{*} \rangle
    \\
     \label{eq:pf-july15-7}
    & = \sum_{\tau=0}^{E-1} \langle \nabla f_i(\bm{w}_{k,\tau}^{(i)}), \bm{w}_{k} - \bm{w}^{*} \rangle
    \\
     \label{eq:pf-july15-8}
    & = \sum_{\tau=0}^{E-1} \{\langle \nabla f_i(\bm{w}_{k,\tau}^{(i)}), \bm{w}_{k,\tau}^{(i)} - \bm{w}^{*} \rangle + \langle \nabla f_i(\bm{w}_{k,\tau}^{(i)}), \bm{w}_{k} - \bm{w}_{k,\tau}^{(i)} \rangle \}
    \\
     \label{eq:pf-july15-9}
    & \geq \sum_{\tau=0}^{E-1} \{f_i(\bm{w}_{k,\tau}^{(i)}) - f_i(\bm{w}^{*}) + \langle \nabla f_i(\bm{w}_{k}), \bm{w}_{k} - \bm{w}_{k,\tau}^{(i)} \rangle + \langle \nabla f_i(\bm{w}_{k,\tau}^{(i)}) - \nabla f_i(\bm{w}_{k}), \bm{w}_{k} - \bm{w}_{k,\tau}^{(i)} \rangle \}
    \\
     \label{eq:pf-july15-10}
    & \geq \sum_{\tau=0}^{E-1} \{f_i(\bm{w}_{k,\tau}^{(i)}) - f_i(\bm{w}^{*}) + f_i(\bm{w}_{k}) - f_i(\bm{w}_{k,\tau}^{(i)}) - L\|\bm{w}_{k} - \bm{w}_{k,\tau}^{(i)}\|^2\}
    \\
     \label{eq:pf-july15-11}
    & = E (f_i(\bm{w}_{k}) - f_i(\bm{w}^{*})) - L \sum_{\tau=0}^{E-1}\|\bm{w}_{k} - \bm{w}_{k,\tau}^{(i)}\|^2.
\end{flalign}
Note that \cref{eq:pf-july15-9} follows from the convexity of $f_i$, while \cref{eq:pf-july15-10} follows by once again using the convexity of $f_i$, the smoothness of $f_i$ as well as the Cauchy-Schwarz inequality.
\\
Again, from \Cref{lem-mar13-1}, we have
\begin{equation}
     \label{eq:pf-july15-12}
     \|\bm{w}_{k} - \bm{w}_{k,\tau}^{(i)}\|^2 \leq 2 \eta^2 L \tau^2 (f_i(\bm{w}_{k}) - f_i^{*}),
\end{equation}
for $\eta L \leq 1$. Using \cref{eq:pf-july15-12} in \cref{eq:pf-july15-11}, we get
\begin{equation}
    \label{eq:pf-july15-13}
    B_i \geq E (f_i(\bm{w}_{k}) - f_i(\bm{w}^{*})) - 2 \eta^2 L^2 \sum_{\tau=0}^{E-1} \tau^2 (f_i(\bm{w}_{k}) - f_i^{*}) \geq E (f_i(\bm{w}_{k}) - f_i(\bm{w}^{*})) - 2 \eta^2 L^2 E^3 (f_i(\bm{w}_{k}) - f_i^{*}).
\end{equation}
Now using \cref{eq:pf-july15-13} in \cref{eq:pf-july15-6}, we get
\begin{flalign}
    \nonumber
    A_i & \leq - 2 \eta E (f_i(\bm{w}_{k}) - f_i(\bm{w}^{*})) + 4 \eta^3 L^2 E^3 (f_i(\bm{w}_{k}) - f_i^{*}) + 2 \eta^2 L E^2 (f_i(\bm{w}_{k}) - f_i^{*})
    \\
    \label{eq:pf-july15-16}
    & = -\eta E {\Big(2 - 2 \eta L E - 4 \eta^2 L^2 E^2\Big)} (f_i(\bm{w}_{k}) - f_i(\bm{w}^{*})) + (2 \eta^2 L E^2 + 4 \eta^3 L^2 E^3) \Delta_i^{*},
\end{flalign}
for $\eta \leq \frac{1}{L}$.
\\
\\
Combining the results of Case 1 and 2, i.e. \cref{eq:pf-july15-5-1} and \cref{eq:pf-july15-16}, we get
\small
\begin{multline}
    \label{eq:pf-july15-18}
    A_i \leq \mathbbm{1}(\|\bm{u}_k^{(i)}\| > C) \Big(-\frac{3 \eta C}{8 L E} \|\bm{u}_k^{(i)}\|\Big) 
    \\
    + \mathbbm{1}(\|\bm{u}_k^{(i)}\| \leq C) \Big(-{\eta E} (2 - 2 \eta L E - 4 \eta^2 L^2 E^2) (f_i(\bm{w}_{k}) - f_i(\bm{w}^{*})) + (2 \eta^2 L E^2 + 4 \eta^3 L^2 E^3) \Delta_i^{*}\Big),
\end{multline}
\normalsize
for $C \geq 4 E \sqrt{L \max_{j \in [n]} \Delta_j^{*}}$ and $\eta \leq \frac{1}{2 L}$.
Let us define $\hat{C} := \frac{C}{E}$. Then \cref{eq:pf-july15-18} can be re-written as:
\small
\begin{multline}
    \label{eq:pf-july15-19-0}
    A_i \leq 
    -{\eta E}\Bigg\{\mathbbm{1}(\|\bm{u}_k^{(i)}\| \leq \hat{C} E) \Big((2 - 2 \eta L E - 4 \eta^2 L^2 E^2)(f_i(\bm{w}_{k}) - f_i(\bm{w}^{*}))\Big)
    \\
    + {\mathbbm{1}(\|\bm{u}_k^{(i)}\| > \hat{C} E)\Big(\frac{3 \hat{C}}{8 L E}\|\bm{u}_k^{(i)}\|\Big)} - \mathbbm{1}(\|\bm{u}_k^{(i)}\| \leq \hat{C} E) (2 \eta L E + 4 \eta^2 L^2 E^2) \Delta_i^{*} \Bigg\},
\end{multline}
\normalsize
where $\hat{C} \geq 4 \sqrt{L \max_{j \in [n]} \Delta_j^{*}}$ and $\eta \leq \frac{1}{2 L}$. Now using \cref{eq:pf-july15-19-0} in \cref{eq:pf-july15-2}, we get:
\small
\begin{multline}
    \label{eq:pf-july15-20}
    \mathbb{E}[\|\bm{w}_{k+1} - \bm{w}^{*}\|^2] \leq \|\bm{w}_{k} - \bm{w}^{*}\|^2 - \frac{\eta E}{n}\sum_{i=1}^n \Big\{\mathbbm{1}(\|\bm{u}_{{k}}^{(i)}\| > \hat{C}E) \Big(\frac{3 \hat{C}}{8 L E}\|\bm{u}_k^{(i)}\|\Big)
    \\
    + 
    \mathbbm{1}(\|\bm{u}_{{k}}^{(i)}\| \leq \hat{C}E) \Big((2 - 2 \eta L E - 4 \eta^2 L^2 E^2)(f_i(\bm{w}_{{k}}) - f_i(\bm{w}^{*}))\Big)\Big\}
    \\
    + \eta E (2 \eta L E + 4 \eta^2 L^2 E^2) \Big(\frac{1}{n}\sum_{i=1}^n {\mathbbm{1}(\|\bm{u}_{{k}}^{(i)}\| \leq \hat{C}E)} \Delta_i^{*}\Big)
    + \eta^2 E^2 \hat{C}^2 \Big(\frac{q K d \log(1/\delta)}{n^2 \varepsilon^2}\Big).
\end{multline}
\normalsize
Solving the above recursion after taking expectation throughout and some rearranging, we get:
\small
\begin{multline}
    \label{eq:pf-july15-21}
    \frac{1}{K}\sum_{k=0}^{K-1}
    \mathbb{E}\Big[\frac{1}{n}\sum_{i=1}^n \Big\{\mathbbm{1}(\|\bm{u}_{{k}}^{(i)}\| \leq \hat{C}E) \Big((2 - 2 \eta L E - 4 \eta^2 L^2 E^2)(f_i(\bm{w}_{{k}}) - f_i(\bm{w}^{*}))\Big) + \mathbbm{1}(\|\bm{u}_{{k}}^{(i)}\| > \hat{C}E) \Big(\frac{3 \hat{C} \|\bm{u}_k^{(i)}\|}{8 L E}\Big)\Big\}\Big] 
    \\
    \leq \frac{\|\bm{w}_{0} - \bm{w}^{*}\|^2}{\eta E K}
    + \eta E K \hat{C}^2 \Big(\frac{q d \log(1/\delta)}{n^2 \varepsilon^2}\Big)
    + \frac{2 \eta L E (1 + 2 \eta L E)}{K}\mathbb{E}\Bigg[\sum_{k=0}^{K-1} \Big(\frac{1}{n}\sum_{i=1}^n {\mathbbm{1}(\|\bm{u}_{{k}}^{(i)}\| \leq \hat{C}E)} \Delta_i^{*}\Big)\Bigg].
\end{multline}
\normalsize
Let us choose $\eta = \frac{\gamma}{\hat{C} L E K} \frac{n \varepsilon}{\sqrt{q d \log(1/\delta)}}$ for some constant $\gamma > 0$. Note that we must have $K > \frac{2 \gamma}{\hat{C} E} \frac{n \varepsilon}{\sqrt{q d \log(1/\delta)}}$ for our condition of $\eta L \leq \frac{1}{2}$ to be satisfied. With that, we get:
\small
\begin{multline}
    \frac{1}{K}\sum_{k=0}^{K-1}
    \mathbb{E}\Bigg[\frac{1}{n}\sum_{i=1}^n \Bigg\{\mathbbm{1}(\|\bm{u}_{{k}}^{(i)}\| \leq \hat{C}E) \Big(2 - \frac{2 \gamma}{\hat{C} K} \frac{n \varepsilon}{\sqrt{q d \log(1/\delta)}} -
    \frac{4 \gamma^2}{\hat{C}^2 K^2} \frac{n^2 \varepsilon^2}{q d \log(1/\delta)}\Big)(f_i(\bm{w}_{{k}}) - f_i(\bm{w}^{*})) 
    \\
    + \mathbbm{1}(\|\bm{u}_{{k}}^{(i)}\| > \hat{C}E)\Big(\frac{3 \hat{C}}{8 L E}\|\bm{u}_k^{(i)}\|\Big)\Bigg\}\Bigg] \leq 
    \Big(\frac{L \|\bm{w}_{0} - \bm{w}^{*}\|^2}{\gamma}
    + \frac{\gamma}{L}\Big)\frac{\hat{C} \sqrt{q d \log(1/\delta)}}{n \varepsilon}
    \\
    + \Bigg(\frac{2 \gamma}{\hat{C} K} \frac{n \varepsilon}{\sqrt{q d \log(1/\delta)}}\Bigg) \Bigg(1 + \frac{2 \gamma}{\hat{C} K} \frac{n \varepsilon}{\sqrt{q d \log(1/\delta)}}\Bigg)\mathbb{E}\Bigg[\Bigg(\frac{1}{K}\sum_{k=0}^{K-1}\Big(\frac{1}{n}\sum_{i=1}^n {\mathbbm{1}(\|\bm{u}_{{k}}^{(i)}\| \leq \hat{C}E)} \Delta_i^{*}\Big)\Bigg)\Bigg],
\end{multline}
\normalsize
with $\hat{C} \geq 4 \sqrt{L \max_{j \in [n]} \Delta_j^{*}}$.
\\
\\
The above equation is equivalent to:
\small
\begin{multline}
    \label{eq:jan22-3}
     \mathbb{E}\Bigg[\frac{1}{n}\sum_{i=1}^n \Bigg\{\mathbbm{1}(\|\bm{u}_{\tilde{k}}^{(i)}\| \leq \hat{C}E) \Big(2 - \frac{2 \gamma}{\hat{C} K} \frac{n \varepsilon}{\sqrt{q d \log(1/\delta)}} -
    \frac{4 \gamma^2}{\hat{C}^2 K^2} \frac{n^2 \varepsilon^2}{q d \log(1/\delta)}\Big)(f_i(\bm{w}_{\tilde{k}}) - f_i(\bm{w}^{*})) 
    \\
    + \mathbbm{1}(\|\bm{u}_{\tilde{k}}^{(i)}\| > \hat{C}E) \Big(\frac{3 \hat{C}}{8 L E}\|\bm{u}_{\tilde{k}}^{(i)}\|\Big) \Bigg\}\Bigg] \leq
    \Big(\frac{L \|\bm{w}_{0} - \bm{w}^{*}\|^2}{\gamma} + \frac{\gamma}{L}\Big)\frac{\hat{C} \sqrt{q d \log(1/\delta)}}{n \varepsilon} 
    \\
    + 
    \Bigg(\frac{2 \gamma}{\hat{C} K} \frac{n \varepsilon}{\sqrt{q d \log(1/\delta)}}\Bigg) \Bigg(1 + \frac{2 \gamma}{\hat{C} K} \frac{n \varepsilon}{\sqrt{q d \log(1/\delta)}}\Bigg)
    \mathbb{E}\Bigg[\frac{1}{n}\sum_{i=1}^n {\mathbbm{1}(\|\bm{u}_{\tilde{k}}^{(i)}\| \leq \hat{C}E)} \Delta_i^{*}\Bigg],
\end{multline}
\normalsize
where $\tilde{k} \sim \text{unif}[0,K-1]$. Let us set $K = \frac{2 \alpha \gamma}{\hat{C} E} \Big(\frac{n \varepsilon}{\sqrt{q d \log(1/\delta)}}\Big)^2$ in \cref{eq:jan22-3}, where $\alpha \geq 1$ is a constant of our choice and $E \leq \frac{\alpha}{2}\Big(\frac{n \varepsilon}{\sqrt{q d \log(1/\delta)}}\Big)$. That gives us:
\small
\begin{multline}
    \label{eq:jan30-5}
     \mathbb{E}\Bigg[\frac{1}{n}\sum_{i=1}^n \Bigg\{\mathbbm{1}(\|\bm{u}_{\tilde{k}}^{(i)}\| \leq \hat{C}E) \Bigg(2 - \Bigg(\frac{E\sqrt{q d \log(1/\delta)}}{\alpha n \varepsilon}\Bigg) -  \Bigg(\frac{E\sqrt{q d \log(1/\delta)}}{\alpha n \varepsilon}\Bigg)^2
     \Bigg)(f_i(\bm{w}_{\tilde{k}}) - f_i(\bm{w}^{*})) 
     \\
     + \mathbbm{1}(\|\bm{u}_{\tilde{k}}^{(i)}\| > \hat{C}E) \Big(\frac{3 \hat{C}}{8 L E}\|\bm{u}_{\tilde{k}}^{(i)}\|\Big) \Bigg\}\Bigg]
     \leq
     \hat{C}\Big(\frac{L \|\bm{w}_{0} - \bm{w}^{*}\|^2}{\gamma} + \frac{\gamma}{L}\Big)\frac{\sqrt{q d \log(1/\delta)}}{n \varepsilon} 
     \\
     + \mathbb{E}\Bigg[\frac{1}{n}\sum_{i=1}^n {\mathbbm{1}(\|\bm{u}_{\tilde{k}}^{(i)}\| \leq \hat{C}E)} \Delta_i^{*}\Bigg]
     \frac{E\sqrt{q d \log(1/\delta)}}{\alpha n \varepsilon} \underbrace{\Bigg(1 + \frac{E\sqrt{q d \log(1/\delta)}}{\alpha n \varepsilon}\Bigg)}_{\leq \frac{3}{2} \text{ from our constraint on $E$}}.
\end{multline}
\normalsize
The final result follows by substituting $\rho = \frac{\sqrt{q d \log(1/\delta)}}{n \varepsilon}$.
\end{proof}

\section{Proof of Theorem~\ref{cor-1}:}
\label{cor-1-pf}
\begin{proof}
First, using
\begin{multline*}
    \min\Bigg(\Big(2 - \frac{\rho E}{\alpha} - \frac{\rho^2 E^2}{\alpha^2}\Big)(f_i(\bm{w}_{\tilde{k}}) - f_i(\bm{w}^{*})), \frac{3 \hat{C}}{8 L E} \|\bm{u}_{\tilde{k}}^{(i)}\|\Bigg) 
    \\
    \leq \Bigg(\mathbbm{1}(\|\bm{u}_{\tilde{k}}^{(i)}\| \leq \hat{C}E) \Big(2 - \frac{\rho E}{\alpha} - \frac{\rho^2 E^2}{\alpha^2}\Big)(f_i(\bm{w}_{\tilde{k}}) - f_i(\bm{w}^{*})) +  \mathbbm{1}(\|\bm{u}_{\tilde{k}}^{(i)}\| > \hat{C}E) \Big(\frac{3 \hat{C}}{8 L E} \|\bm{u}_{\tilde{k}}^{(i)}\|\Big)\Bigg),
\end{multline*}
$\mathbb{E}\Big[\frac{1}{n}\sum_{i=1}^n {\mathbbm{1}(\|\bm{u}_{\tilde{k}}^{(i)}\| \leq \hat{C}E)} \Delta_i^{*}\Big] \leq \frac{1}{n}\sum_{i=1}^n \Delta_i^{*}$, $\Big(2 - \frac{\rho E}{\alpha} - \frac{\rho^2 E^2}{\alpha^2}\Big) = \mathcal{O}(1)$ as $E \leq \frac{\alpha}{2 \rho}$, and plugging in $\gamma = \mathcal{O}(L \|\bm{w}_0 - \bm{w}^{*}\|)$ and $\alpha = \mathcal{O}(1)$ in \Cref{thm-clip-cvx-short}, we get:
\begin{equation}
    \label{eq:cor-2.1-1}
     \mathbb{E}\Bigg[\frac{1}{n}\sum_{i=1}^n \min\Bigg(f_i(\bm{w}_{\tilde{k}}) - f_i(\bm{w}^{*}), \mathcal{O}\Big(\frac{\hat{C}}{L E} \|\bm{u}_{\tilde{k}}^{(i)}\|\Big)\Bigg)\Bigg]
     \leq \mathcal{O}\Bigg(\hat{C} \|\bm{w}_{0} - \bm{w}^{*}\| + E \Big(\frac{1}{n}\sum_{i=1}^n \Delta_i^{*}\Big)\Bigg)\rho.
\end{equation}
Now we need to lower bound $\|\bm{u}_{\tilde{k}}^{(i)}\|$ in terms of $\|\nabla f_i(\bm{w}_{\tilde{k}})\|$. To that end, note that:
\begin{flalign}
    \|\bm{u}_{\tilde{k}}^{(i)}\|^2 &= \Big\|\sum_{\tau=0}^{E-1} \nabla f_i(\bm{w}_{\tilde{k},\tau}^{(i)})\Big\|^2
    \\
    \label{eq:cor-2.1-2}
    & = \sum_{\tau,\tau'} \frac{1}{2}\Big(\|\nabla f_i(\bm{w}_{\tilde{k},\tau}^{(i)})\|^2 + \|\nabla f_i(\bm{w}_{\tilde{k},\tau'}^{(i)})\|^2 - \|\nabla f_i(\bm{w}_{\tilde{k},\tau}^{(i)}) - \nabla f_i(\bm{w}_{\tilde{k},\tau'}^{(i)})\|^2\Big)
    \\
    \label{eq:cor-2.1-2-a}
    & = E \sum_{\tau=0}^{E-1} \|\nabla f_i(\bm{w}_{\tilde{k},\tau}^{(i)})\|^2 - \sum_{\tau < \tau'} \|\nabla f_i(\bm{w}_{\tilde{k},\tau}^{(i)}) - \nabla f_i(\bm{w}_{\tilde{k},\tau'}^{(i)})\|^2.
\end{flalign}
\Cref{eq:cor-2.1-2} follows from the fact that for any two vectors $\bm{a}$ and $\bm{b}$, $\langle \bm{a}, \bm{b} \rangle = \frac{1}{2}(\|\bm{a}\|^2 + \|\bm{b}\|^2 - \|\bm{a} - \bm{b}\|^2)$. Next, by using the $L$-smoothness of $f_i$, we have for $\tau < \tau'$:
\begin{flalign}
    \|\nabla f_i(\bm{w}_{\tilde{k},\tau}^{(i)}) - \nabla f_i(\bm{w}_{\tilde{k},\tau'}^{(i)})\| & \leq L \|\bm{w}_{\tilde{k},\tau}^{(i)} - \bm{w}_{\tilde{k},\tau'}^{(i)}\|
    \\
    & = \eta L \Big\|\sum_{t=\tau}^{\tau'-1} \nabla f_i(\bm{w}_{\tilde{k},t}^{(i)})\Big\|
    \\
    \label{eq:cor-2.1-3}
    & \leq \eta L \sum_{t=\tau}^{\tau'-1} \|\nabla f_i(\bm{w}_{\tilde{k},t}^{(i)})\|.
\end{flalign}
But from \Cref{local_steps_lem}, we have that $\|\nabla f_i(\bm{w}_{\tilde{k},t}^{(i)})\| \leq \|\nabla f_i(\bm{w}_{\tilde{k},{t-1}}^{(i)})\| \leq \ldots \leq \|\nabla f_i(\bm{w}_{\tilde{k},0}^{(i)})\| = \|\nabla f_i(\bm{w}_{\tilde{k}})\|$. Using this in \cref{eq:cor-2.1-3}, we get:
\begin{equation}
    \|\nabla f_i(\bm{w}_{\tilde{k},\tau}^{(i)}) - \nabla f_i(\bm{w}_{\tilde{k},\tau'}^{(i)})\| \leq \eta L (\tau' - \tau) \|\nabla f_i(\bm{w}_{\tilde{k}})\| \leq \eta L E \|\nabla f_i(\bm{w}_{\tilde{k}})\|.
\end{equation}
Plugging this into \cref{eq:cor-2.1-2-a}, we get:
\begin{flalign}
    \|\bm{u}_{\tilde{k}}^{(i)}\|^2 & \geq E \sum_{\tau=0}^{E-1} \|\nabla f_i(\bm{w}_{\tilde{k},\tau}^{(i)})\|^2 - \sum_{\tau < \tau'} \eta^2 L^2 E^2 \|\nabla f_i(\bm{w}_{\tilde{k}})\|^2 
    \\
    \label{eq:cor-2.1-4}
    & \geq E \sum_{\tau=0}^{E-1} \|\nabla f_i(\bm{w}_{\tilde{k},\tau}^{(i)})\|^2 - \frac{\eta^2 L^2 E^4}{2} \|\nabla f_i(\bm{w}_{\tilde{k}})\|^2.
\end{flalign}
Further, for any $\tau \geq 1$:
\begin{flalign}
    \|\nabla f_i(\bm{w}_{\tilde{k}})\| & \leq \|\nabla f_i(\bm{w}_{\tilde{k},\tau}^{(i)})\| + \|\nabla f_i(\bm{w}_{\tilde{k}}) - \nabla f_i(\bm{w}_{\tilde{k},\tau}^{(i)})\|
    \\
    \label{eq:cor-2.1-5}
    & \leq \|\nabla f_i(\bm{w}_{\tilde{k},\tau}^{(i)})\| + L {\|\bm{w}_{\tilde{k}} - \bm{w}_{\tilde{k},\tau}^{(i)}\|}.
\end{flalign}
Recall that $\eta = \frac{\rho}{2 \alpha L}$ and $E \leq \frac{\alpha}{2\rho}$ in \Cref{thm-clip-cvx-short}, due to which
$\eta L E \leq \frac{1}{4}$. Thus, we can apply \Cref{lem-apr13-1} in \cref{eq:cor-2.1-5} to obtain:
\begin{equation}
    \|\nabla f_i(\bm{w}_{\tilde{k}})\| \leq \|\nabla f_i(\bm{w}_{\tilde{k},\tau}^{(i)})\| + 2 \eta L \tau \|\nabla f_i(\bm{w}_{\tilde{k}})\|.
\end{equation}
Now using the fact that $\eta L \tau \leq \eta L E \leq \frac{1}{4}$ above, we get:
\begin{equation}
    \label{eq:cor-2.1-6}
    \|\nabla f_i(\bm{w}_{\tilde{k},\tau}^{(i)})\| \geq \frac{\|\nabla f_i(\bm{w}_{\tilde{k}})\|}{2} \text{ } \forall \text{ } \tau \geq 1.
\end{equation}
Plugging this back in \cref{eq:cor-2.1-4} and using the fact that $\eta L E \leq \frac{1}{4}$, we get:
\begin{equation}
    \|\bm{u}_{\tilde{k}}^{(i)}\|^2 \geq \frac{E^2}{4}\big(1 - 2\eta^2 L^2 E^2\big) \|\nabla f_i(\bm{w}_{\tilde{k}})\|^2 \geq \frac{7 E^2}{32} \|\nabla f_i(\bm{w}_{\tilde{k}})\|^2.
\end{equation}
So, we have:
\begin{equation}
    \|\bm{u}_{\tilde{k}}^{(i)}\| \geq \mathcal{O}(E \|\nabla f_i(\bm{w}_{\tilde{k}})\|).
\end{equation}
Using this in \cref{eq:cor-2.1-1} gives us the final result.
\end{proof}

\section{Proof of Proposition~\ref{local_steps}}
\label{local_steps_pf}
\begin{proof}
First, note that with $\eta_k = \eta$, we have:
\begin{flalign}
    \|\bm{u}_k^{(i)}\| & = \Bigg\|\frac{\bm{w}_{k} - \bm{w}_{k, E}^{(i)}}{\eta}\Bigg\|
    \\
    & = \Bigg\|\sum_{\tau=0}^{E-1}\nabla f_i(\bm{w}_{k, \tau}^{(i)})\Bigg\|
    \\
    \label{eq:mar12-1003}
    & \leq \sum_{\tau=0}^{E-1} \|\nabla f_i(\bm{w}_{k, \tau}^{(i)})\|.
\end{flalign}
{Now using the result of \Cref{local_steps_lem} and applying our assumption that $\|\nabla f_i(\bm{w}_{k,\tau+1}^{(i)}) - \nabla f_i(\bm{w}_{k,\tau}^{(i)}) \| \geq \eta \lambda \|\nabla f_i(\bm{w}_{k,\tau}^{(i)})\|$ in it, we get:
\begin{equation}
    \label{eq:mar12-1002}
    \|\nabla f_i(\bm{w}_{k, \tau+1}^{(i)})\|^2 \leq \Bigg(1 - \frac{2 \eta \lambda^2}{L} \Big(1 - \frac{\eta L}{2}\Big)\Bigg)\|\nabla f_i(\bm{w}_{k,\tau}^{(i)})\|^2.
\end{equation}
Plugging in $\eta = \frac{\rho}{2L}$ above, we get:
\begin{flalign}
    \|\nabla f_i(\bm{w}_{k, \tau+1}^{(i)})\| & \leq \sqrt{1 - \frac{\lambda^2}{L^2} \rho \Big(1 - \frac{\rho}{4}\Big)}\|\nabla f_i(\bm{w}_{k,\tau}^{(i)})\| 
    \\
    & \leq \Bigg(1 - \frac{\lambda^2}{2 L^2} \rho \Big(1 - \frac{\rho}{4}\Big)\Bigg)\|\nabla f_i(\bm{w}_{k,\tau}^{(i)})\|
    \\
    \label{eq:mar12-1002-1}
    & \leq \Bigg(1 - \frac{3 \lambda^2}{8 L^2} \rho \Bigg)\|\nabla f_i(\bm{w}_{k,\tau}^{(i)})\|.
\end{flalign}
For notational convenience, let $\widehat{\rho} := \frac{3 \lambda^2}{8 L^2} \rho$. Then from \cref{eq:mar12-1002-1}, we get:
\begin{equation}
    \|\nabla f_i(\bm{w}_{k, \tau}^{(i)})\| \leq (1 - \widehat{\rho})^\tau\|\nabla f_i(\bm{w}_{k,0}^{(i)})\|.
\end{equation}
Using this in \cref{eq:mar12-1003}, we get:
\begin{equation}
    \label{eq:mar12-1005}
    \|\bm{u}_k^{(i)}\| \leq \sum_{\tau=0}^{E-1} (1 - \widehat{\rho})^\tau \|\nabla f_i(\bm{w}_{k,0}^{(i)})\| = \Big(\frac{1 - (1 - \widehat{\rho})^E}{\widehat{\rho}}\Big) \|\nabla f_i(\bm{w}_{k,0}^{(i)})\| \leq \underbrace{\Big(\frac{1 - (1 - \widehat{\rho})^E}{\widehat{\rho}}\Big)G}_{\text{B}(E)}.
\end{equation}
Recall that $E \leq \frac{1}{2 \rho}$ due to which we have $E \hat{\rho} \leq \frac{1}{4}$. So using \Cref{new-fact-1} in \cref{eq:mar12-1005}, we get:
\begin{equation}
    \text{B}(E) \leq G E \Big(1 - \frac{11(E-1)\hat{\rho}}{24}\Big) = G E \Bigg(1 - \frac{11(E-1){\rho}}{64}\Big(\frac{\lambda^2}{L^2}\Big)\Bigg).
\end{equation}
So if we set $\hat{C} = \frac{\Big(1 - \frac{11(E-1){\rho}}{64}\big(\frac{\lambda^2}{L^2}\big)\Big)}{E} = G \Big(1 - \frac{11(E-1){\rho}}{64}\big(\frac{\lambda^2}{L^2}\big)\Big)$, then we will have no clipping as $\|\bm{u}_k^{(i)}\| \leq \hat{C} E$ always. 
}
\end{proof}

\section{Full Version of Theorem~\ref{thm-new-clip-cvx-short} and its Proof}
\label{thm2-details}
\begin{theorem}[\textbf{Full version of \Cref{thm-new-clip-cvx-short}}]
\label{thm-new-clip-cvx}
Suppose each $f_i$ is convex and $L$-smooth over $\mathbb{R}^d$. Let $\hat{C} := \frac{C}{E}$, where $C$ is the scaling factor used in \Cref{alg:dp-fedavg-2}. For any $\bm{w}^{*} \in \arg \min_{\bm{w}' \in \mathbb{R}^d} f(\bm{w}')$ and $\Delta_i^{*} := f_i(\bm{w}^{*}) - \min_{\bm{w}' \in \mathbb{R}^d} f_i(\bm{w}') \geq 0$, \Cref{alg:dp-fedavg-2} with $\hat{C} \geq 4 \sqrt{L \max_{j \in [n]} \Delta_j^{*}}$, $\beta_k = \eta_k = \eta = \big(\frac{\gamma}{\hat{C} L E K}\big) \frac{1}{\rho}$ and $K > \big(\frac{2 \gamma}{\hat{C} E}\big) \frac{1}{\rho}$, where $\gamma > 0$ is a constant of our choice, has the following convergence guarantee:
\small
\begin{multline*}
    \mathbb{E}\Bigg[\frac{1}{n}\sum_{i=1}^n \Bigg\{\mathbbm{1}(\|\bm{u}_{\tilde{k}}^{(i)}\| \leq \hat{C}E) \Big(2 - 
    \frac{4 \gamma^2}{\hat{C}^2 K^2 \rho^2}\Big)\Bigg(\frac{\hat{C} E}{\|\bm{u}_{\tilde{k}}^{(i)}\|}\Bigg)(f_i(\bm{w}_{\tilde{k}}) - f_i(\bm{w}^{*})) + \mathbbm{1}(\|\bm{u}_{\tilde{k}}^{(i)}\| > \hat{C}E) \Big(\frac{3 \hat{C} \|\bm{u}_{\tilde{k}}^{(i)}\|}{8 L E}\Big) \Bigg\}\Bigg]
    \\
    \leq \hat{C}\Big(\frac{L \|\bm{w}_{0} - \bm{w}^{*}\|^2}{\gamma} + \frac{\gamma}{L}\Big)\rho
    + \mathbb{E}\Bigg[\frac{1}{n}\sum_{i=1}^n {\mathbbm{1}\big(\|\bm{u}_{\tilde{k}}^{(i)}\| \leq \hat{C} E\big)} \Bigg\{\frac{\gamma \hat{C}}{L K \rho} + \Bigg(\frac{\hat{C} E}{\|\bm{u}_{\tilde{k}}^{(i)}\|}\Bigg)\frac{4 \gamma^2 \Delta_i^{*}}{\hat{C}^2 K^2 \rho^2}\Bigg\}\Bigg],
\end{multline*}
\normalsize
with $\tilde{k} \sim \textup{unif}[0,K-1]$. Further, this result holds for any $\bm{w}^{*} \in \arg \min_{\bm{w}' \in \mathbb{R}^d} f(\bm{w}')$.
\\
\\
Specifically, with $K = \big(\frac{2 \alpha \gamma}{\hat{C} E}\big) \frac{1}{\rho^2}$ and $E \leq \frac{\alpha}{2 \rho}$, where $\alpha \geq 1$ is another constant of our choice, \Cref{alg:dp-fedavg-2} has the following convergence guarantee:
\small
\begin{multline*}
        \mathbb{E}\Bigg[\frac{1}{n}\sum_{i=1}^n \Bigg\{\mathbbm{1}(\|\bm{u}_{\tilde{k}}^{(i)}\| \leq \hat{C}E) \Big(2 - \frac{\rho^2 E^2}{\alpha^2}\Big)\Bigg(\frac{\hat{C} E}{\|\bm{u}_{\tilde{k}}^{(i)}\|}\Bigg)(f_i(\bm{w}_{\tilde{k}}) - f_i(\bm{w}^{*})) + \mathbbm{1}(\|\bm{u}_{\tilde{k}}^{(i)}\| > \hat{C}E) \Big(\frac{3 \hat{C} \|\bm{u}_{\tilde{k}}^{(i)}\|}{8 L E}\Big) \Bigg\}\Bigg] 
        \\
        \leq
        \hat{C} \Big(\frac{L \|\bm{w}_{0} - \bm{w}^{*}\|^2}{\gamma} + \frac{\gamma}{L}\Big) \rho + 
        {\mathbb{E}\Bigg[\frac{1}{n}\sum_{i=1}^n \mathbbm{1}(\|\bm{u}_{\tilde{k}}^{(i)}\| \leq \hat{C} E) \Bigg\{\frac{\hat{C}^2}{2 \alpha L} + \Bigg(\frac{\hat{C} E}{\|\bm{u}_{\tilde{k}}^{(i)}\|}\Bigg) \frac{\Delta_i^{*} \rho E}{\alpha^2}\Bigg\}E\Bigg]}\rho.
\end{multline*}
\normalsize
\end{theorem}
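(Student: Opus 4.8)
The plan is to mirror the proof of \Cref{thm-clip-cvx} almost step for step, since the only algorithmic change is that line~9 now sends $\text{norm}(\bm{u}_k^{(i)}, C) = \frac{C}{\|\bm{u}_k^{(i)}\|}\bm{u}_k^{(i)}$ in place of $\text{clip}(\bm{u}_k^{(i)}, C)$. First I would observe that $\|\text{norm}(\bm{u}_k^{(i)}, C)\| = C$ exactly, so the per-client message still has norm at most $C$; the sensitivity is therefore unchanged and the noise variance of \Cref{thm-dp} still yields $(\varepsilon,\delta)$-DP. Starting from the global update rule and expanding $\mathbb{E}[\|\bm{w}_{k+1} - \bm{w}^{*}\|^2]$ exactly as in \cref{eq:feb4-10}--\cref{eq:pf-july15-2} (using \Cref{fact2} to drop the cross terms and to bound the averaged squared norm), I reach the same one-step recursion with the per-client quantity $A_i := -2\eta\langle \text{norm}(\bm{u}_k^{(i)}, C), \bm{w}_k - \bm{w}^{*}\rangle + \eta^2\|\text{norm}(\bm{u}_k^{(i)}, C)\|^2$ and the identical additive term $\eta^2 E^2\hat{C}^2\big(\tfrac{qKd\log(1/\delta)}{n^2\varepsilon^2}\big)$ from the noise.

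The crux is the case analysis of $A_i$, and this is where normalization departs from clipping. When $\|\bm{u}_k^{(i)}\| > C = \hat{C}E$, normalization and clipping coincide, so Case~1 of the proof of \Cref{thm-clip-cvx} (\cref{eq:pf-july15-3}--\cref{eq:pf-july15-5-1}) applies verbatim and gives $A_i \le -\tfrac{3\eta C}{8LE}\|\bm{u}_k^{(i)}\|$ under $\hat{C}\ge 4\sqrt{L\max_j\Delta_j^{*}}$ and $\eta\le\tfrac{1}{2L}$. When $\|\bm{u}_k^{(i)}\| \le C = \hat{C}E$, however, $\text{norm}$ is no longer the identity; instead I would write $A_i = -2\eta\tfrac{C}{\|\bm{u}_k^{(i)}\|}B_i + \eta^2 C^2$ with $B_i = \langle\bm{u}_k^{(i)}, \bm{w}_k - \bm{w}^{*}\rangle$, and reuse the lower bound on $B_i$ from Case~2 of the clipping proof (\cref{eq:pf-july15-6}--\cref{eq:pf-july15-13}), namely $B_i \ge E(f_i(\bm{w}_k) - f_i(\bm{w}^{*})) - 2\eta^2 L^2 E^3(f_i(\bm{w}_k) - f_i^{*})$. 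Two features distinguish this from clipping: the factor $\tfrac{C}{\|\bm{u}_k^{(i)}\|} = \tfrac{\hat{C}E}{\|\bm{u}_k^{(i)}\|}\ge 1$ survives multiplicatively (this is exactly the SNR amplification of \Cref{sec:dp-normfed}), and the self-noise $\eta^2 C^2$ can no longer be absorbed into a negative term as it was in Case~1, since now $C - \|\bm{u}_k^{(i)}\| \ge 0$. Collecting terms and using $f_i(\bm{w}_k) - f_i^{*} = (f_i(\bm{w}_k) - f_i(\bm{w}^{*})) + \Delta_i^{*}$, this case yields $A_i \le -\eta E\big\{\tfrac{\hat{C}E}{\|\bm{u}_k^{(i)}\|}(2 - 4\eta^2 L^2 E^2)(f_i(\bm{w}_k) - f_i(\bm{w}^{*})) - 4\eta^2 L^2 E^2\tfrac{\hat{C}E}{\|\bm{u}_k^{(i)}\|}\Delta_i^{*} - \eta\hat{C}^2 E\big\}$.

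Combining both cases through the indicators $\mathbbm{1}(\|\bm{u}_k^{(i)}\| \le \hat{C}E)$ and $\mathbbm{1}(\|\bm{u}_k^{(i)}\| > \hat{C}E)$ and substituting back into the one-step recursion, I would telescope over $k = 0,\dots,K-1$, take total expectation, divide by $\eta E K$, and identify the round index with $\tilde{k}\sim\textup{unif}[0,K-1]$, exactly as in \cref{eq:pf-july15-20}--\cref{eq:jan22-3}. Plugging in $\eta = \big(\tfrac{\gamma}{\hat{C}LEK}\big)\tfrac1\rho$ then produces the first bound: $\tfrac{\|\bm{w}_0 - \bm{w}^{*}\|^2}{\eta E K}$ together with the DP-noise contribution combine into $\hat{C}\big(\tfrac{L\|\bm{w}_0-\bm{w}^{*}\|^2}{\gamma}+\tfrac{\gamma}{L}\big)\rho$; the un-absorbed self-noise $\eta\hat{C}^2 E$ becomes $\tfrac{\gamma\hat{C}}{LK\rho}$; and $4\eta^2 L^2 E^2$ collapses to $\tfrac{4\gamma^2}{\hat{C}^2 K^2\rho^2}$, matching both the LHS coefficient $2 - \tfrac{4\gamma^2}{\hat{C}^2K^2\rho^2}$ and the $\Delta_i^{*}$ term. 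The specialized bound follows by setting $K = \big(\tfrac{2\alpha\gamma}{\hat{C}E}\big)\tfrac{1}{\rho^2}$, which turns $\tfrac{4\gamma^2}{\hat{C}^2K^2\rho^2}$ into $\tfrac{\rho^2 E^2}{\alpha^2}$ and $\tfrac{\gamma\hat{C}}{LK\rho}$ into $\tfrac{\hat{C}^2 E\rho}{2\alpha L}$, with $E\le\tfrac{\alpha}{2\rho}$ ensuring $\eta L \le \tfrac12$. The main obstacle is the bookkeeping in the second case: carrying the $\tfrac{\hat{C}E}{\|\bm{u}_k^{(i)}\|}$ factor correctly through both the convexity and the $\Delta_i^{*}$ terms, and recognizing that the $\eta^2 C^2$ self-noise is the genuinely new contribution (the $\tfrac{\hat{C}^2}{2\alpha L}E$ term) with no analogue in the clipping bound.
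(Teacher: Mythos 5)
Your proposal is correct and follows essentially the same route as the paper's own proof: the identical one-step recursion with the noise term, the same case split where normalization coincides with clipping for $\|\bm{u}_k^{(i)}\| > \hat{C}E$, the reuse of the clipping proof's lower bound on $B_i$ with the surviving factor $\frac{\hat{C}E}{\|\bm{u}_k^{(i)}\|}$, the un-absorbed $\eta^2\hat{C}^2E^2$ self-noise term, and the same telescoping and parameter substitutions (your intermediate bound for Case 2 matches the paper's \cref{eq:feb2-3}--\cref{eq:feb4-4} exactly). The only negligible imprecision is attributing $\eta L \leq \tfrac12$ to the constraint $E \leq \tfrac{\alpha}{2\rho}$, whereas it actually follows automatically from $\eta = \tfrac{\rho}{2\alpha L}$ and $\rho < 1 \leq \alpha$; this does not affect correctness.
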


\subsection{Proof of Theorem~\ref{thm-new-clip-cvx}}
\begin{proof}
Let us again set $\eta_k = \beta_k = \eta$, for all $k \geq 0$.
\\
\\
Everything remains the same till \cref{eq:pf-july15-2} in the proof of \Cref{thm-clip-cvx}, with $\text{clip}(.)$ replaced by $\text{norm}(.)$.
\small
\begin{multline}
    \label{eq:feb4-5}
    \mathbb{E}[\|\bm{w}_{k+1} - \bm{w}^{*}\|^2] \leq 
    \|\bm{w}_{k} - \bm{w}^{*}\|^2 + \frac{1}{n}\sum_{i = 1}^n \Big\{\underbrace{- 2 \eta\langle \text{norm}(\bm{u}_k^{(i)}, C), \bm{w}_{k} - \bm{w}^{*}\rangle + \eta^2 \big\|\text{norm}(\bm{u}_k^{(i)}, C)\big\|^2}_{A_i}\Big\}
    \\
    + \eta^2 \Big(\frac{q K d \log(1/\delta) C^2}{n^2 \varepsilon^2}\Big).
\end{multline}
\normalsize
Again, let us examine $A_i$ for each $i$. Also, as used in the proof of \Cref{thm-clip-cvx}, let $\hat{C} = \frac{C}{E}$.
\\
\\
\textbf{Case 1:} $\|\bm{u}_k^{(i)}\| > \hat{C} E$. Everything remains the same as Case 1 in the proof of \Cref{thm-clip-cvx}. Thus, 
\begin{equation}
    \label{eq:feb2-1}
    A_i \leq -\frac{3 \eta \hat{C}}{8 L} \|\bm{u}_k^{(i)}\|,
\end{equation}
for $\eta L \leq \frac{1}{2}$ and $\hat{C} \geq 4 \sqrt{L \max_{j \in [n]} \Delta_j^{*}}$.
\\
\\
\textbf{Case 2:} $\|\bm{u}_k^{(i)}\| \leq \hat{C} E$. Here:
\begin{equation}
    \label{eq:feb2-2}
    A_i \leq \Bigg(\frac{\hat{C} E}{\|\bm{u}_k^{(i)}\|}\Bigg) \Big({-2 \eta} \underbrace{\langle \bm{u}_k^{(i)}, \bm{w}_{k} - \bm{w}^{*} \rangle}_{B_i}\Big) + \eta^2 \hat{C}^2 E^2.
\end{equation}
For ease of notation henceforth, let us define:
\begin{equation}
    {z}_k^{(i)} := \Bigg(\frac{\hat{C} E}{\|\bm{u}_k^{(i)}\|}\Bigg).
\end{equation}
The bound for $B_i$ remains the same as the one in the proof of \Cref{thm-clip-cvx} (in \cref{eq:pf-july15-13}), i.e.,
\begin{equation}
    B_i \geq E (f_i(\bm{w}_{k}) - f_i(\bm{w}^{*})) - 2 \eta^2 L^2 E^3 (f_i(\bm{w}_{k}) - f_i^{*}),
\end{equation}
for $\eta L \leq 1$. Using this in \cref{eq:feb2-2}, we get:
\begin{flalign}
    \nonumber
    A_i & \leq {-2 \eta E} {z}_k^{(i)} \Big\{ (f_i(\bm{w}_{k}) - f_i(\bm{w}^{*})) - 2 \eta^2 L^2 E^2 (f_i(\bm{w}_k) - f_i^{*})\Big\} + \eta^2 \hat{C}^2 E^2
    \\
    \label{eq:feb2-3}
    & = {-2 \eta E} {z}_k^{(i)} \Big\{(f_i(\bm{w}_{k}) - f_i(\bm{w}^{*})){(1 - 2 \eta^2 L^2 E^2)} - 2 \eta^2 L^2 E^2 \Delta_i^{*}\Big\} + \eta^2 \hat{C}^2 E^2.
\end{flalign}
Combining the results of Case 1 and 2, i.e. \cref{eq:feb2-1} and \cref{eq:feb2-3}, we get:
\small
\begin{multline}
    \label{eq:feb4-4}
    A_i \leq {\eta E} \Bigg\{\mathbbm{1}(\|\bm{u}_k^{(i)}\| \leq \hat{C} E) \Big(4 \eta^2 L^2 E^2 \Delta_i^{*} {z}_k^{(i)} + \eta \hat{C}^2 E\Big) 
    \\
    - \mathbbm{1}(\|\bm{u}_k^{(i)}\| \leq \hat{C} E) (2 - 4 \eta^2 L^2 E^2) {z}_k^{(i)} (f_i(\bm{w}_{k}) - f_i(\bm{w}^{*}))
    - \mathbbm{1}(\|\bm{u}_k^{(i)}\| > \hat{C} E)\Big(\frac{3 \hat{C} \|\bm{u}_k^{(i)}\|}{8 L E}\Big)\Bigg\},
\end{multline}
\normalsize
for $\eta L \leq \frac{1}{2}$ and $\hat{C} \geq 4 \sqrt{L \max_{j \in [n]} \Delta_j^{*}}$.
\\
\\
Now using the above bound in \cref{eq:feb4-5}, plugging in ${z}_k^{(i)} = \frac{\hat{C} E}{\|\bm{u}_k^{(i)}\|}$, and following the same process and choice of $\eta = \frac{\gamma}{\hat{C} L E K} \frac{n \varepsilon}{\sqrt{q d \log(1/\delta)}}$ that we used in \Cref{thm-clip-cvx}, we get:
\small
\begin{multline}
    \label{eq:feb4-6}
    \mathbb{E}\Bigg[\frac{1}{n}\sum_{i=1}^n \Bigg\{\mathbbm{1}(\|\bm{u}_{\tilde{k}}^{(i)}\| \leq \hat{C}E) \Big(2 - 
    \frac{4 \gamma^2}{\hat{C}^2 K^2} \frac{n^2 \varepsilon^2}{q d \log(1/\delta)}\Big)\Bigg(\frac{\hat{C} E}{\|\bm{u}_{\tilde{k}}^{(i)}\|}\Bigg)(f_i(\bm{w}_{\tilde{k}}) - f_i(\bm{w}^{*})) 
    \\
    + \mathbbm{1}(\|\bm{u}_{\tilde{k}}^{(i)}\| > \hat{C}E) \Big(\frac{3 \hat{C} \|\bm{u}_{\tilde{k}}^{(i)}\|}{8 L E}\Big)\Bigg\}\Bigg]
    \leq \Big(\frac{L \|\bm{w}_{0} - \bm{w}^{*}\|^2}{\gamma} + \frac{\gamma}{L}\Big)\frac{\hat{C} \sqrt{q d \log(1/\delta)}}{n \varepsilon}
    \\
    + \mathbb{E}\Bigg[\frac{1}{n}\sum_{i=1}^n {\mathbbm{1}\big(\|\bm{u}_{\tilde{k}}^{(i)}\| \leq \hat{C} E\big)} \Bigg\{\frac{\gamma \hat{C}}{L K} \frac{n \varepsilon}{\sqrt{q d \log(1/\delta)}}
    + \frac{4 \gamma^2 \Delta_i^{*}}{\hat{C}^2 K^2} \frac{n^2 \varepsilon^2}{q d \log(1/\delta)} \Bigg(\frac{\hat{C} E}{\|\bm{u}_{\tilde{k}}^{(i)}\|}\Bigg)\Bigg\}\Bigg],
\end{multline}
\normalsize
with $\tilde{k} \sim \text{unif }[0,K-1]$ and $K > \frac{2 \gamma}{\hat{C} E} \frac{n \varepsilon}{\sqrt{q d \log(1/\delta)}}$ (so that $\eta L E \leq \frac{1}{2}$). Now setting $K = \frac{2 \gamma}{\hat{C} E} \Big(\frac{n \varepsilon}{\sqrt{q d \log(1/\delta)}}\Big)^2$ and $\rho = \frac{\sqrt{q d \log(1/\delta)}}{n \varepsilon}$ above gives us the final result.
\end{proof}

\section{Lemmas and some Facts used in the Proofs}
\begin{lemma}
\label{lem-mar15-1}
Suppose $f_i$ is convex and $L$-smooth over $\mathbb{R}^d$.
Let us set $\eta_k \leq \frac{1}{2L}$ for round $k$ of Algorithm \ref{alg:dp-fedavg} and \ref{alg:dp-fedavg-2}. Then:
\begin{equation*}
    \|\bm{w}_{k,E}^{(i)} - \bm{w}^{*}\|^2 \leq \|\bm{w}_{k} - \bm{w}^{*}\|^2 - \frac{\eta_k}{2L} \sum_{\tau=0}^{E-1}\|\nabla f_i(\bm{w}_{k,\tau}^{(i)})\|^2 + 2 \eta_k E  \Delta_i^{*},
\end{equation*}
where $\Delta_i^{*} := f_i(\bm{w}^{*}) - \min_{\bm{w}' \in \mathbb{R}^d} f_i(\bm{w}')$.
\end{lemma}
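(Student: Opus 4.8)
The plan is to carry out a standard one-step gradient-descent analysis on the local iterates $\bm{w}_{k,\tau}^{(i)}$ and then telescope over the $E$ local steps. First I would expand the squared distance to $\bm{w}^{*}$ after a single local update. Since $\bm{w}_{k,\tau+1}^{(i)} = \bm{w}_{k,\tau}^{(i)} - \eta_k \nabla f_i(\bm{w}_{k,\tau}^{(i)})$, expanding the square gives
\begin{equation*}
\|\bm{w}_{k,\tau+1}^{(i)} - \bm{w}^{*}\|^2 = \|\bm{w}_{k,\tau}^{(i)} - \bm{w}^{*}\|^2 - 2\eta_k \langle \nabla f_i(\bm{w}_{k,\tau}^{(i)}), \bm{w}_{k,\tau}^{(i)} - \bm{w}^{*}\rangle + \eta_k^2 \|\nabla f_i(\bm{w}_{k,\tau}^{(i)})\|^2.
\end{equation*}
I would then bound the inner product by convexity of $f_i$, namely $\langle \nabla f_i(\bm{w}_{k,\tau}^{(i)}), \bm{w}_{k,\tau}^{(i)} - \bm{w}^{*}\rangle \geq f_i(\bm{w}_{k,\tau}^{(i)}) - f_i(\bm{w}^{*})$, and re-express this gap relative to the minimizer of $f_i$ by writing $f_i(\bm{w}_{k,\tau}^{(i)}) - f_i(\bm{w}^{*}) = (f_i(\bm{w}_{k,\tau}^{(i)}) - f_i^{*}) - \Delta_i^{*}$, where $f_i^{*} := \min_{\bm{w}'} f_i(\bm{w}')$.

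The key step is to invoke the standard smoothness-at-the-minimizer inequality $\|\nabla f_i(\bm{w})\|^2 \leq 2L(f_i(\bm{w}) - f_i^{*})$ (a consequence of convexity and $L$-smoothness), which yields $f_i(\bm{w}_{k,\tau}^{(i)}) - f_i^{*} \geq \frac{1}{2L}\|\nabla f_i(\bm{w}_{k,\tau}^{(i)})\|^2$. This converts the suboptimality gap into the desired negative gradient-norm term and introduces the $\Delta_i^{*}$ offset. Substituting into the one-step expansion gives
\begin{equation*}
\|\bm{w}_{k,\tau+1}^{(i)} - \bm{w}^{*}\|^2 \leq \|\bm{w}_{k,\tau}^{(i)} - \bm{w}^{*}\|^2 - \eta_k\Big(\frac{1}{L} - \eta_k\Big)\|\nabla f_i(\bm{w}_{k,\tau}^{(i)})\|^2 + 2\eta_k \Delta_i^{*}.
\end{equation*}
The step-size constraint $\eta_k \leq \frac{1}{2L}$ then guarantees $\frac{1}{L} - \eta_k \geq \frac{1}{2L}$, so the coefficient of the gradient-norm term is at most $-\frac{\eta_k}{2L}$, matching the target expression.

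Finally I would telescope this per-step inequality over $\tau = 0, \ldots, E-1$, using $\bm{w}_{k,0}^{(i)} = \bm{w}_k$ so the left-hand sides collapse to $\|\bm{w}_{k,E}^{(i)} - \bm{w}^{*}\|^2 - \|\bm{w}_{k} - \bm{w}^{*}\|^2$, producing the summed gradient norms $-\frac{\eta_k}{2L}\sum_{\tau=0}^{E-1}\|\nabla f_i(\bm{w}_{k,\tau}^{(i)})\|^2$ and the cumulative offset $2\eta_k E \Delta_i^{*}$. There is no deep obstacle here: the only point requiring care is recognizing that one must use the smoothness bound $\|\nabla f_i\|^2 \leq 2L(f_i - f_i^{*})$ to trade the function gap for a gradient-norm term, and correctly tracking the $\Delta_i^{*}$ correction, which arises precisely because $\bm{w}^{*}$ minimizes the global average $f$ rather than the individual client objective $f_i$.
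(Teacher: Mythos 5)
Your proposal is correct and follows essentially the same route as the paper's proof: a one-step expansion of the squared distance, the convexity lower bound on the inner product, the split $f_i(\bm{w}_{k,\tau}^{(i)}) - f_i(\bm{w}^{*}) = (f_i(\bm{w}_{k,\tau}^{(i)}) - f_i^{*}) - \Delta_i^{*}$, the smoothness inequality $\|\nabla f_i\|^2 \leq 2L(f_i - f_i^{*})$ (the paper's Fact~\ref{fact1}), the step-size condition $\eta_k \leq \frac{1}{2L}$ to absorb the $\eta_k^2$ term, and telescoping over the $E$ local steps. No gaps; the argument matches the paper step for step.
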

\begin{proof}
Let us define $f_i^{*} := \min_{\bm{w}' \in \mathbb{R}^d} f_i(\bm{w}')$. Then, $\Delta_i^{*} = f_i(\bm{w}^{*}) - f_i^{*}$.
\\
\\
For any $\tau \geq 0$, we have:
\begin{flalign}
    \nonumber
    \|\bm{w}_{k,\tau+1}^{(i)} - \bm{w}^{*}\|^2 & = \|\bm{w}_{k,\tau}^{(i)} - \bm{w}^{*}\|^2 - 2\eta_k \langle \nabla f_i(\bm{w}_{k,\tau}^{(i)}), \bm{w}_{k,\tau}^{(i)} - \bm{w}^{*} \rangle + \eta_k^2 \|\nabla f_i(\bm{w}_{k,\tau}^{(i)})\|^2
    \\
    \label{mar15-2}
    & \leq \|\bm{w}_{k,\tau}^{(i)} - \bm{w}^{*}\|^2 - 2\eta_k (f_i(\bm{w}_{k,\tau}^{(i)}) - f_i(\bm{w}^{*})) + \eta_k^2 \|\nabla f_i(\bm{w}_{k,\tau}^{(i)})\|^2
    \\
    & \leq \|\bm{w}_{k,\tau}^{(i)} - \bm{w}^{*}\|^2 - 2\eta_k (f_i(\bm{w}_{k,\tau}^{(i)}) - f_i^{*}) + 2 \eta_k \underbrace{(f_i(\bm{w}^{*}) - f_i^{*})}_{=\Delta_i^{*}} + \eta_k^2 \|\nabla f_i(\bm{w}_{k,\tau}^{(i)})\|^2
    \\
    \label{mar15-2-1}
    & \leq \|\bm{w}_{k,\tau}^{(i)} - \bm{w}^{*}\|^2 - \frac{\eta_k}{L} \|\nabla f_i(\bm{w}_{k,\tau}^{(i)})\|^2 + 2 \eta_k \Delta_i^{*} + 
    \eta_k^2 \|\nabla f_i(\bm{w}_{k,\tau}^{(i)})\|^2.
\end{flalign}
\Cref{mar15-2} follows by using the fact that each $f_i$ is convex. \Cref{mar15-2-1} follows using \Cref{fact1}.
\\
\\
Now if we set $\eta_k \leq \frac{1}{2L}$, then we get:
\begin{equation}
    \label{mar15-3}
    \|\bm{w}_{k,\tau+1}^{(i)} - \bm{w}^{*}\|^2 \leq \|\bm{w}_{k,\tau}^{(i)} - \bm{w}^{*}\|^2 - \frac{\eta_k}{2 L} \|\nabla f_i(\bm{w}_{k,\tau}^{(i)})\|^2 + 2 \eta_k \Delta_i^{*}.
\end{equation}
Doing this recursively for $\tau = 0$ through to $\tau=E-1$ and adding everything up gives us the desired result.
\end{proof}

\begin{lemma}
\label{lem-mar13-1}
Suppose each $f_i$ is $L$-smooth over $\mathbb{R}^d$ and $f_i^{*} := \min_{\bm{w}' \in \mathbb{R}^d} f_i(\bm{w}')$. Let us set $\eta_k \leq \frac{1}{L}$ for round $k$ of Algorithm \ref{alg:dp-fedavg} and \ref{alg:dp-fedavg-2}. Then:
\begin{equation*}
    \|\bm{w}_k - \bm{w}^{(i)}_{k, \tau}\|^2 \leq 2 \eta_k^2 L \tau^2 (f_i(\bm{w}_{k}) - f_i^{*}) \text{ } \forall \text{ } \tau \geq 1.
\end{equation*}
Thus, 
\[\big\|\bm{u}_k^{(i)}\big\|^2 \leq 2 L E^2 (f_i(\bm{w}_{k}) - f_i^{*}).\]
\end{lemma}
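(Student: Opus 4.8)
The plan is to reduce everything to two ingredients already in the excerpt: the smoothness bound $\|\nabla f_i(\bm{w})\|^2 \le 2L(f_i(\bm{w}) - f_i^{*})$ (\Cref{fact1}) and the vector Cauchy--Schwarz inequality $\|\sum_{j=1}^m \bm{a}_j\|^2 \le m \sum_{j=1}^m \|\bm{a}_j\|^2$ (\Cref{fact2}). First I would write the displacement over $\tau$ local steps explicitly. Since $\bm{w}_{k,t+1}^{(i)} = \bm{w}_{k,t}^{(i)} - \eta_k \nabla f_i(\bm{w}_{k,t}^{(i)})$ with $\bm{w}_{k,0}^{(i)} = \bm{w}_k$, telescoping gives $\bm{w}_k - \bm{w}_{k,\tau}^{(i)} = \eta_k \sum_{t=0}^{\tau-1} \nabla f_i(\bm{w}_{k,t}^{(i)})$, so \Cref{fact2} yields $\|\bm{w}_k - \bm{w}_{k,\tau}^{(i)}\|^2 \le \eta_k^2\, \tau \sum_{t=0}^{\tau-1}\|\nabla f_i(\bm{w}_{k,t}^{(i)})\|^2$.

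The crux, and the only non-routine step, is a \emph{uniform} bound on the local gradient norms in terms of the initial suboptimality $f_i(\bm{w}_k) - f_i^{*}$. For this I would invoke the descent lemma: because $f_i$ is $L$-smooth and $\eta_k \le 1/L$, a single gradient step satisfies $f_i(\bm{w}_{k,t+1}^{(i)}) \le f_i(\bm{w}_{k,t}^{(i)}) - \eta_k(1 - \tfrac{L \eta_k}{2})\|\nabla f_i(\bm{w}_{k,t}^{(i)})\|^2 \le f_i(\bm{w}_{k,t}^{(i)})$, where the last inequality uses $1 - L\eta_k/2 \ge 1/2 > 0$. Hence the local objective values are non-increasing, so $f_i(\bm{w}_{k,t}^{(i)}) - f_i^{*} \le f_i(\bm{w}_{k,0}^{(i)}) - f_i^{*} = f_i(\bm{w}_k) - f_i^{*}$ for every $t$. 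Combining this monotonicity with \Cref{fact1} gives the uniform estimate $\|\nabla f_i(\bm{w}_{k,t}^{(i)})\|^2 \le 2L(f_i(\bm{w}_k) - f_i^{*})$ for all $t \ge 0$.

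Plugging this uniform bound into the Cauchy--Schwarz estimate above, and noting the inner sum has $\tau$ terms, gives $\|\bm{w}_k - \bm{w}_{k,\tau}^{(i)}\|^2 \le \eta_k^2\, \tau \cdot \tau \cdot 2L(f_i(\bm{w}_k) - f_i^{*}) = 2 \eta_k^2 L \tau^2 (f_i(\bm{w}_k) - f_i^{*})$, which is the first claim. For the second claim I would recall that $\bm{u}_k^{(i)} = \sum_{\tau=0}^{E-1}\nabla f_i(\bm{w}_{k,\tau}^{(i)})$ (line 9 of \Cref{alg:dp-fedavg}), apply \Cref{fact2} once more to obtain $\|\bm{u}_k^{(i)}\|^2 \le E \sum_{\tau=0}^{E-1}\|\nabla f_i(\bm{w}_{k,\tau}^{(i)})\|^2$, and substitute the same uniform gradient bound to conclude $\|\bm{u}_k^{(i)}\|^2 \le 2 L E^2 (f_i(\bm{w}_k) - f_i^{*})$. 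The whole argument hinges on the monotone decrease of the local function values, which is precisely what the step-size restriction $\eta_k \le 1/L$ buys through the descent lemma; everything else is Cauchy--Schwarz together with \Cref{fact1}.
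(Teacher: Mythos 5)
Your proposal is correct and follows essentially the same route as the paper's proof: telescoping the local updates and applying \Cref{fact2}, then using the descent lemma (valid since $\eta_k \le 1/L$) to establish monotone decrease of the local function values, and finally invoking \Cref{fact1} to bound the gradient norms by $2L(f_i(\bm{w}_k) - f_i^{*})$. The only cosmetic difference is the order of operations (the paper applies \Cref{fact1} first and then bounds the resulting function-value sum, while you establish the uniform gradient bound first) and that the paper obtains the $\bm{u}_k^{(i)}$ bound by taking $\tau = E$ in the first claim and dividing by $\eta_k^2$, which is equivalent to your direct application of \Cref{fact2}.
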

\begin{proof}
\begin{flalign}
    \label{eq:feb28-8}
    \|\bm{w}_k - \bm{w}^{(i)}_{k, \tau}\|^2 = \Big\|\eta_k \sum_{t=0}^{\tau-1} \nabla {f}_i(\bm{w}^{(i)}_{k, t})\Big\|^2
    & \leq \eta_k^2 \tau \sum_{t=0}^{\tau-1} \|\nabla {f}_i(\bm{w}^{(i)}_{k, t})\|^2,
\end{flalign} 
where the last step follows from \Cref{fact2}. Next, since $f_i$ is $L$-smooth, we have using \Cref{fact1}:  
\[\|\nabla {f}_i(\bm{w}^{(i)}_{k, t})\|^2 \leq 2L({f}_i(\bm{w}^{(i)}_{k, t}) - f_i^{*}).\]
Applying this in \cref{eq:feb28-8}, we get:
\begin{equation}
    \label{eq:feb26-22-1}
    \|\bm{w}_k - \bm{w}^{(i)}_{k, \tau}\|^2 \leq 2 \eta_k^2 L \tau \sum_{t=0}^{\tau-1} (f_i(\bm{w}^{(i)}_{k, t}) - f_i^{*}).
\end{equation}
But using the $L$-smoothness of $f_i$, we have for any $t \geq 1$:
\begin{flalign}
    f_i(\bm{w}_{k,t}^{(i)}) - f_i^{*} & = f_i(\bm{w}_{k,t-1}^{(i)} - \eta_k \nabla f_i(\bm{w}_{k,t-1}^{(i)})) - f_i^{*}
    \\
    & \leq (f_i(\bm{w}_{k,t-1}^{(i)}) - f_i^{*}) - \eta_k \|\nabla f_i(\bm{w}_{k,t-1}^{(i)})\|^2 + \frac{\eta_k^2 L}{2} \|\nabla f_i(\bm{w}_{k,t-1}^{(i)})\|^2
    \\
    & \leq (f_i(\bm{w}_{k,t-1}^{(i)}) - f_i^{*}) - \frac{\eta_k}{2} \|\nabla f_i(\bm{w}_{k,t-1}^{(i)})\|^2,
\end{flalign}
for $\eta_k L \leq 1$. Doing this recursively (and recalling that $\bm{w}_{k,0}^{(i)} = \bm{w}_k$), we get:
\begin{equation}
    f_i(\bm{w}_{k,t}^{(i)}) - f_i^{*} \leq (f_i(\bm{w}_{k}) - f_i^{*}) - \frac{\eta_k}{2}\sum_{t'=0}^{t-1} \|\nabla f_i(\bm{w}_{k,t'}^{(i)})\|^2 \leq f_i(\bm{w}_{k}) - f_i^{*}.
\end{equation}
Plugging this in \cref{eq:feb26-22-1}, we get:
\begin{equation}
    \|\bm{w}_k - \bm{w}^{(i)}_{k, \tau}\|^2 \leq 2 \eta_k^2 L \tau^2 (f_i(\bm{w}_{k}) - f_i^{*}).
\end{equation}
The upper bound on $\big\|\bm{u}_k^{(i)}\big\|^2$ follows by recalling that $\bm{u}_k^{(i)} = (\bm{w}_k - \bm{w}_{k,E}^{(i)})/\eta_k$.
\end{proof}

\begin{lemma}
\label{local_steps_lem}
Suppose each $f_i$ is $L$-smooth over $\mathbb{R}^d$. Then for both Algorithm \ref{alg:dp-fedavg} and \ref{alg:dp-fedavg-2}, we have:
\begin{equation*}
    \|\nabla f_i(\bm{w}_{k, \tau+1}^{(i)})\|^2 \leq \|\nabla f_i(\bm{w}_{k, \tau}^{(i)})\|^2 - \Big(\frac{2}{\eta_k L} - 1\Big)\|\nabla f_i(\bm{w}_{k,\tau+1}^{(i)}) - \nabla f_i(\bm{w}_{k,\tau}^{(i)}) \|^2,
\end{equation*}
for any $i \in [n]$, $k \in \{0,\ldots,K-1\}$ and $\tau \in \{0,\ldots,E-1\}$.
\end{lemma}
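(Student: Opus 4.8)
The plan is to reduce the claim to the co-coercivity of $\nabla f_i$. Write $\bm{g}_\tau := \nabla f_i(\bm{w}_{k,\tau}^{(i)})$ and recall that the local update (line 7 of either algorithm) is $\bm{w}_{k,\tau+1}^{(i)} = \bm{w}_{k,\tau}^{(i)} - \eta_k \bm{g}_\tau$, so that $\bm{g}_\tau = \tfrac{1}{\eta_k}\big(\bm{w}_{k,\tau}^{(i)} - \bm{w}_{k,\tau+1}^{(i)}\big)$. First I would expand
\[
\|\bm{g}_{\tau+1}\|^2 = \|\bm{g}_\tau\|^2 + 2\langle \bm{g}_\tau, \bm{g}_{\tau+1} - \bm{g}_\tau\rangle + \|\bm{g}_{\tau+1} - \bm{g}_\tau\|^2.
\]
Substituting this into the target, cancelling $\|\bm{g}_\tau\|^2$ from both sides, and collecting the $\|\bm{g}_{\tau+1} - \bm{g}_\tau\|^2$ terms, the claimed inequality is seen to be equivalent to
\[
\langle \bm{g}_\tau, \bm{g}_{\tau+1} - \bm{g}_\tau\rangle \leq -\frac{1}{\eta_k L}\|\bm{g}_{\tau+1} - \bm{g}_\tau\|^2.
\]

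Next I would substitute $\bm{g}_\tau = -\tfrac{1}{\eta_k}\big(\bm{w}_{k,\tau+1}^{(i)} - \bm{w}_{k,\tau}^{(i)}\big)$ into the left-hand side and multiply through by $-\eta_k$ (which reverses the inequality); the target becomes
\[
\langle \bm{w}_{k,\tau+1}^{(i)} - \bm{w}_{k,\tau}^{(i)},\, \bm{g}_{\tau+1} - \bm{g}_\tau\rangle \geq \frac{1}{L}\|\bm{g}_{\tau+1} - \bm{g}_\tau\|^2.
\]
This is precisely the co-coercivity (Baillon--Haddad) inequality $\langle \bm{x} - \bm{y},\, \nabla f_i(\bm{x}) - \nabla f_i(\bm{y})\rangle \geq \tfrac{1}{L}\|\nabla f_i(\bm{x}) - \nabla f_i(\bm{y})\|^2$ evaluated at $\bm{x} = \bm{w}_{k,\tau+1}^{(i)}$ and $\bm{y} = \bm{w}_{k,\tau}^{(i)}$, which closes the argument. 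If a self-contained derivation is wanted, I would obtain co-coercivity in the standard way: apply the descent lemma to the auxiliary convex $L$-smooth function $\psi(\bm{z}) := f_i(\bm{z}) - \langle \nabla f_i(\bm{x}), \bm{z}\rangle$, whose global minimizer is $\bm{x}$, giving $f_i(\bm{x}) - f_i(\bm{y}) \leq \langle \nabla f_i(\bm{x}), \bm{x} - \bm{y}\rangle - \tfrac{1}{2L}\|\nabla f_i(\bm{x}) - \nabla f_i(\bm{y})\|^2$, and then add the same inequality with $\bm{x}$ and $\bm{y}$ interchanged.

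The one subtlety worth flagging is that this chain uses \emph{convexity} of $f_i$, not merely $L$-smoothness: co-coercivity genuinely fails for nonconvex smooth functions (for instance $f_i(x) = -\tfrac{L}{2}x^2$ is $L$-smooth yet already violates the stated bound after one gradient step), so the hypothesis should really read \enquote{convex and $L$-smooth}, which is the standing assumption in Theorems~\ref{thm-clip-cvx-short} and \ref{thm-new-clip-cvx} where this lemma is actually invoked. Apart from this, every manipulation is an algebraic identity or a single application of co-coercivity, and there is no recursion over $\tau$ or $k$ to manage, since the bound concerns a single step and holds verbatim for each $i$, $k$, $\tau$. Thus the only real content is the reduction to co-coercivity, and the only real hazard is the missing convexity assumption.
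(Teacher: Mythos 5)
Your proof is correct and follows essentially the same route as the paper: the paper's own argument likewise reduces the claim to co-coercivity of the gradient, combined with the update rule $\bm{w}_{k,\tau+1}^{(i)} - \bm{w}_{k,\tau}^{(i)} = -\eta_k \nabla f_i(\bm{w}_{k,\tau}^{(i)})$ and the polarization identity, so your backward reduction and its forward derivation are the same proof read in opposite directions. Your flag about the hypothesis is also well taken: the paper's proof invokes co-coercivity while citing only $L$-smoothness, which is exactly the gap you identify (co-coercivity needs convexity), and it is harmless only because everywhere the lemma is applied -- Theorem~\ref{cor-1} and Proposition~\ref{local_steps} -- convexity of each $f_i$ is already in force.
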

\begin{proof}
Since each $f_i$ is $L$-smooth, we have by using the co-coercivity of the gradient:
\begin{flalign}
    & \langle \nabla f_i(\bm{w}_{k, \tau+1}^{(i)}) - \nabla f_i(\bm{w}_{k, \tau}^{(i)}), \bm{w}_{k, \tau+1}^{(i)} - \bm{w}_{k, \tau}^{(i)} \rangle \geq \frac{1}{L}\|\nabla f_i(\bm{w}_{k, \tau+1}^{(i)}) - \nabla f_i(\bm{w}_{k, \tau}^{(i)})\|^2.
\end{flalign}
Now using the fact that $\bm{w}_{k, \tau+1}^{(i)} - \bm{w}_{k, \tau}^{(i)} = -\eta_k \nabla f_i(\bm{w}_{k, \tau}^{(i)})$ above, 
we get:
\begin{multline}
    L \langle \nabla f_i(\bm{w}_{k, \tau+1}^{(i)}) - \nabla f_i(\bm{w}_{k, \tau}^{(i)}), -\eta_k \nabla f_i(\bm{w}_{k, \tau}^{(i)}) \rangle \geq \|\nabla f_i(\bm{w}_{k, \tau+1}^{(i)})\|^2 + \|\nabla f_i(\bm{w}_{k, \tau}^{(i)})\|^2 
    \\
    - 2 \langle \nabla f_i(\bm{w}_{k, \tau+1}^{(i)}), \nabla f_i(\bm{w}_{k, \tau}^{(i)}) \rangle.
\end{multline}
Rearranging the above a bit, we get:
\begin{equation}
    \label{eq:mar12-1000}
    (2 - \eta_k L) \langle \nabla f_i(\bm{w}_{k, \tau+1}^{(i)}), \nabla f_i(\bm{w}_{k, \tau}^{(i)}) \rangle \geq \|\nabla f_i(\bm{w}_{k, \tau+1}^{(i)})\|^2 + (1 - \eta_k L) \|\nabla f_i(\bm{w}_{k, \tau}^{(i)})\|^2.
\end{equation}
But, we also have:
\begin{equation}
    \langle \nabla f_i(\bm{w}_{k, \tau+1}^{(i)}), \nabla f_i(\bm{w}_{k, \tau}^{(i)}) \rangle = \frac{1}{2}\Big(\|\nabla f_i(\bm{w}_{k, \tau+1}^{(i)})\|^2 + \|\nabla f_i(\bm{w}_{k, \tau}^{(i)})\|^2 - \|\nabla f_i(\bm{w}_{k,\tau+1}^{(i)}) - \nabla f_i(\bm{w}_{k,\tau}^{(i)}) \|^2\Big).
\end{equation}
Using this in \cref{eq:mar12-1000} and simplifying a bit, we get:
\begin{equation}
    \label{eq:mar12-1001}
    \|\nabla f_i(\bm{w}_{k, \tau+1}^{(i)})\|^2 \leq \|\nabla f_i(\bm{w}_{k, \tau}^{(i)})\|^2 - \Big(\frac{2}{\eta_k L} - 1\Big)\|\nabla f_i(\bm{w}_{k,\tau+1}^{(i)}) - \nabla f_i(\bm{w}_{k,\tau}^{(i)}) \|^2.
\end{equation}
This completes the proof.
\end{proof}

\begin{lemma}
\label{lem-apr13-1}
Suppose each $f_i$ is $L$-smooth over $\mathbb{R}^d$. Let us set $\eta_k \leq \frac{1}{2 L E}$ for round $k$ of Algorithm \ref{alg:dp-fedavg} and \ref{alg:dp-fedavg-2}. Then:
\begin{equation*}
    \|\bm{w}_k - \bm{w}^{(i)}_{k, \tau}\| \leq 2 \eta_k \tau \|\nabla f_i(\bm{w}_k)\| \text{ } \forall \text{ } \tau \geq 1.
\end{equation*}
\end{lemma}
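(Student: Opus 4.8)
The plan is to exploit the telescoping form of the local-update recursion together with $L$-smoothness, and to tame the growth of the gradient norms along the local trajectory by a bootstrapping induction.

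First I would write the displacement explicitly. Since $\bm{w}_{k,\tau+1}^{(i)} = \bm{w}_{k,\tau}^{(i)} - \eta_k \nabla f_i(\bm{w}_{k,\tau}^{(i)})$ with $\bm{w}_{k,0}^{(i)} = \bm{w}_k$, unrolling gives $\bm{w}_k - \bm{w}_{k,\tau}^{(i)} = \eta_k \sum_{t=0}^{\tau-1} \nabla f_i(\bm{w}_{k,t}^{(i)})$, so by the triangle inequality
\begin{equation*}
    \|\bm{w}_k - \bm{w}_{k,\tau}^{(i)}\| \leq \eta_k \sum_{t=0}^{\tau-1} \|\nabla f_i(\bm{w}_{k,t}^{(i)})\|.
\end{equation*}
This reduces the whole problem to controlling each intermediate gradient norm $\|\nabla f_i(\bm{w}_{k,t}^{(i)})\|$ in terms of $\|\nabla f_i(\bm{w}_k)\|$.

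Second, I would prove the uniform bound $\|\nabla f_i(\bm{w}_{k,t}^{(i)})\| \leq 2 \|\nabla f_i(\bm{w}_k)\|$ for all $0 \leq t \leq E$ by strong induction on $t$. The base case $t = 0$ is an equality. For the inductive step, $L$-smoothness gives $\|\nabla f_i(\bm{w}_{k,t}^{(i)})\| \leq \|\nabla f_i(\bm{w}_k)\| + L\|\bm{w}_k - \bm{w}_{k,t}^{(i)}\|$; bounding the displacement via the triangle-inequality estimate above and inserting the inductive hypothesis $\|\nabla f_i(\bm{w}_{k,s}^{(i)})\| \leq 2\|\nabla f_i(\bm{w}_k)\|$ for $s < t$ yields $\|\bm{w}_k - \bm{w}_{k,t}^{(i)}\| \leq 2\eta_k t \|\nabla f_i(\bm{w}_k)\|$, and hence $\|\nabla f_i(\bm{w}_{k,t}^{(i)})\| \leq \|\nabla f_i(\bm{w}_k)\|(1 + 2\eta_k L t)$. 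The step-size assumption $\eta_k \leq \frac{1}{2LE}$ makes $2\eta_k L t \leq t/E \leq 1$ for $t \leq E$, so the right-hand side is at most $2\|\nabla f_i(\bm{w}_k)\|$, closing the induction.

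Finally, substituting this uniform bound back into the displacement estimate gives, for every $1 \leq \tau \leq E$,
\begin{equation*}
    \|\bm{w}_k - \bm{w}_{k,\tau}^{(i)}\| \leq \eta_k \sum_{t=0}^{\tau-1} 2\|\nabla f_i(\bm{w}_k)\| = 2\eta_k \tau \|\nabla f_i(\bm{w}_k)\|,
\end{equation*}
which is the claim. I expect the only delicate point to be the bootstrapping step: the gradient norm at iterate $t$ is itself controlled by the displacement, which is in turn controlled by \emph{earlier} gradient norms, so a direct estimate would be circular. The induction, carrying a factor-$2$ slack and spending the $\eta_k \leq \frac{1}{2LE}$ budget to absorb the drift accumulated over as many as $E$ local steps, is precisely what breaks this circularity.
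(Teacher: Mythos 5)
Your proof is correct and follows essentially the same route as the paper's: unroll the local updates, apply the triangle inequality, use $L$-smoothness to compare local gradients with $\nabla f_i(\bm{w}_k)$, and close a factor-2 bootstrap induction using the step-size budget $\eta_k L E \leq \tfrac{1}{2}$. The only (cosmetic) difference is that you carry the induction on the uniform gradient-norm bound $\|\nabla f_i(\bm{w}^{(i)}_{k,t})\| \leq 2\|\nabla f_i(\bm{w}_k)\|$ and then substitute it back, whereas the paper inducts directly on the displacement bound $\|\bm{w}_k - \bm{w}^{(i)}_{k,\tau}\| \leq 2\eta_k \tau \|\nabla f_i(\bm{w}_k)\|$; the two inductions are interchangeable, each deriving the other's estimate inside its inductive step.
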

\noindent The reader might be wondering that \Cref{lem-mar13-1} also bounds $\|\bm{w}_k - \bm{w}^{(i)}_{k, \tau}\|$, so why do we need this lemma? The difference is that this lemma provides a stronger bound at the cost of a stronger requirement on $\eta_k$, whereas \Cref{lem-mar13-1} provides a weaker bound but it imposes a weaker requirement on $\eta_k$. This lemma is used only in the proof of \Cref{cor-1}, while \Cref{lem-mar13-1} is used in the proofs of Theorems \ref{thm-clip-cvx-short} and \ref{thm-new-clip-cvx-short}.
\begin{proof}
\begin{flalign}
    \label{eq:feb28-8-2}
    \|\bm{w}_k - \bm{w}^{(i)}_{k, \tau}\| = \Big\|\eta_k \sum_{t=0}^{\tau-1} \nabla {f}_i(\bm{w}^{(i)}_{k, t})\Big\|
    & \leq \eta_k \sum_{t=0}^{\tau-1} \|\nabla {f}_i(\bm{w}^{(i)}_{k, t})\|.
\end{flalign} 
But:
\begin{flalign}
    \nonumber
    \|\nabla {f}_i(\bm{w}^{(i)}_{k, t})\| & = \|\nabla {f}_i(\bm{w}^{(i)}_{k, t}) - \nabla {f}_i(\bm{w}_k) + \nabla {f}_i(\bm{w}_k)\|
    \\
    \nonumber
    & \leq \|\nabla {f}_i(\bm{w}_k)\| + \|\nabla {f}_i(\bm{w}^{(i)}_{k, t}) - \nabla {f}_i(\bm{w}_k)\|
    \\
    \label{eq:feb28-7}
    & \leq \|\nabla {f}_i(\bm{w}_k)\| + L\|\bm{w}^{(i)}_{k, t} - \bm{w}_k\|.
\end{flalign}
Putting \cref{eq:feb28-7} back in \cref{eq:feb28-8-2}, we get:
\begin{flalign}
    \label{eq:feb28-new-1}
    \|\bm{w}_k - \bm{w}^{(i)}_{k, \tau}\| \leq \eta_k \tau \|\nabla {f}_i(\bm{w}_k)\| + \eta_k L \sum_{t=0}^{\tau-1}\|\bm{w}^{(i)}_{k, t} - \bm{w}_k\|.
\end{flalign}
We claim that $\|\bm{w}_k - \bm{w}^{(i)}_{k, \tau}\| \leq 2 \eta_k \tau \|\nabla f_i(\bm{w}_k)\|$ for $\eta_k L E \leq 1/2$. We shall prove this by induction. Let us first check the base case of $\tau=1$. Observe that:
\[\|\bm{w}_k - \bm{w}^{(i)}_{k, 1}\| = \eta_k \|\nabla f_i(\bm{w}_k)\| \leq 2\eta_k\|\nabla f_i(\bm{w}_k)\|.\]
Hence, the base case is true.
Assume the hypothesis holds for $t \in \{0,\ldots,\tau-1\}$. Let us now put our induction hypothesis into \cref{eq:feb28-new-1} to see if the hypothesis is true for $\tau$ as well.
\begin{flalign*}
    \|\bm{w}_k - \bm{w}^{(i)}_{k, \tau}\| & \leq \eta_k \tau \|\nabla {f}_i(\bm{w}_k)\| + \eta_k L \sum_{t=0}^{\tau-1} 2\eta_k t \|\nabla f_i(\bm{w}_k)\|
    \\
    & \leq \eta_k \tau \|\nabla {f}_i(\bm{w}_k)\| + (\eta_k L) \eta_k \tau^2 \|\nabla {f}_i(\bm{w}_k)\|
    \\
    & \leq \eta_k \tau \|\nabla {f}_i(\bm{w}_k)\| + \eta_k \tau (\eta_k L \tau) \|\nabla {f}_i(\bm{w}_k)\|
    \\
    & \leq \eta_k \tau \|\nabla {f}_i(\bm{w}_k)\| + 0.5 \eta_k \tau \|\nabla {f}_i(\bm{w}_k)\| < 2 \eta_k \tau \|\nabla {f}_i(\bm{w}_k)\|.
\end{flalign*}
The second last inequality is true because $\eta_k L \tau \leq \eta_k L E \leq \frac{1}{2}$, per our choice of $\eta_k$.
\\
\\
Thus, the hypothesis holds for $\tau$ as well. So by induction, our claim is true.
\end{proof}

\begin{fact}[\textbf{\cite{nesterov2018lectures}}]
\label{fact1}
For an $L$-smooth function $h: \mathbb{R}^d \xrightarrow{} \mathbb{R}$ with $h^{*} = \min_{\bm{x} \in \mathbb{R}^d} h(\bm{x})$ and $L>0$, $\|\nabla h(\bm{x})\|^2 \leq 2L(h(\bm{x}) - h^{*})$.
\end{fact}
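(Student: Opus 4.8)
The plan is to derive this directly from the smoothness inequality stated in the Smoothness definition, namely that for all $\bm{\theta},\bm{\theta}'$ we have $h(\bm{\theta}') \leq h(\bm{\theta}) + \langle \nabla h(\bm{\theta}), \bm{\theta}' - \bm{\theta}\rangle + \frac{L}{2}\|\bm{\theta}' - \bm{\theta}\|^2$. The key idea is to apply this inequality at a cleverly chosen point $\bm{\theta}'$ so that the right-hand side becomes as small as possible, and then to lower-bound the left-hand side by $h^{*}$.

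Concretely, I would fix $\bm{x} \in \mathbb{R}^d$ and take a single gradient step with step size $1/L$, i.e.\ set $\bm{\theta}' = \bm{x} - \frac{1}{L}\nabla h(\bm{x})$ and $\bm{\theta} = \bm{x}$ in the smoothness inequality. Substituting gives
\begin{equation*}
    h\Big(\bm{x} - \tfrac{1}{L}\nabla h(\bm{x})\Big) \leq h(\bm{x}) - \frac{1}{L}\|\nabla h(\bm{x})\|^2 + \frac{L}{2}\cdot\frac{1}{L^2}\|\nabla h(\bm{x})\|^2 = h(\bm{x}) - \frac{1}{2L}\|\nabla h(\bm{x})\|^2,
\end{equation*}
where the last equality just combines the two $\|\nabla h(\bm{x})\|^2$ terms. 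Since $h^{*} = \min_{\bm{w}'} h(\bm{w}')$ is a lower bound on $h$ evaluated at any point, in particular $h^{*} \leq h(\bm{x} - \frac{1}{L}\nabla h(\bm{x}))$, so chaining this with the display above yields $h^{*} \leq h(\bm{x}) - \frac{1}{2L}\|\nabla h(\bm{x})\|^2$.

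Rearranging this last inequality gives $\frac{1}{2L}\|\nabla h(\bm{x})\|^2 \leq h(\bm{x}) - h^{*}$, and multiplying both sides by $2L > 0$ produces the claimed bound $\|\nabla h(\bm{x})\|^2 \leq 2L(h(\bm{x}) - h^{*})$.

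There is really no substantive obstacle here; the only step requiring insight is the choice $\bm{\theta}' = \bm{x} - \frac{1}{L}\nabla h(\bm{x})$, which is precisely the minimizer of the quadratic upper bound $h(\bm{x}) + \langle \nabla h(\bm{x}), \bm{\theta}' - \bm{x}\rangle + \frac{L}{2}\|\bm{\theta}' - \bm{x}\|^2$ in $\bm{\theta}'$. Everything after that is a two-line calculation and a rearrangement. The one assumption being used implicitly is that $\bm{x} - \frac{1}{L}\nabla h(\bm{x})$ lies in the domain $\mathbb{R}^d$, which is immediate since $h$ is defined on all of $\mathbb{R}^d$.
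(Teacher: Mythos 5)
Your proof is correct, and it is exactly the standard argument behind this fact: the paper itself gives no proof (it cites \cite{nesterov2018lectures}), and the cited textbook proof is precisely your computation --- plug the gradient step $\bm{x} - \frac{1}{L}\nabla h(\bm{x})$ into the quadratic upper bound and lower-bound the left side by $h^{*}$. The only cosmetic remark is that the paper's Definition of smoothness states the quadratic upper bound under a twice-differentiability hypothesis, whereas it in fact holds for any function with $L$-Lipschitz gradient (by integrating $\nabla h$ along the segment from $\bm{\theta}$ to $\bm{\theta}'$), so your argument goes through under the bare $L$-smoothness assumption of the statement.
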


\begin{fact}
\label{fact2}
For any $p > 1$ vectors $\{\bm{y}_1,\ldots,\bm{y}_p\}$, $\|\sum_{i=1}^p \bm{y}_i\|^2 \leq p \sum_{i=1}^p \|\bm{y}_i\|^2$.
\end{fact}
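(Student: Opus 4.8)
The plan is to prove this as a purely elementary norm inequality, independent of any of the optimization machinery developed earlier in the paper; indeed \Cref{fact2} is itself one of the low-level building blocks invoked throughout (e.g.\ in \cref{mar15-5} and \cref{eq:feb28-8}), so its proof should rely only on the triangle inequality and Cauchy--Schwarz. I would present the shortest self-contained route and, for robustness, note two equivalent alternatives.

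My primary route chains the triangle inequality with Cauchy--Schwarz. First I would use the triangle inequality for the $\ell_2$ norm to write $\big\|\sum_{i=1}^p \bm{y}_i\big\| \leq \sum_{i=1}^p \|\bm{y}_i\|$. Next, I would view the scalar sum $\sum_{i=1}^p \|\bm{y}_i\|$ as the Euclidean inner product $\langle \bm{1}, \bm{v}\rangle$ in $\mathbb{R}^p$, where $\bm{1} = (1,\ldots,1)$ and $\bm{v} = (\|\bm{y}_1\|,\ldots,\|\bm{y}_p\|)$, and apply Cauchy--Schwarz to obtain $\sum_{i=1}^p \|\bm{y}_i\| \leq \sqrt{p}\,\big(\sum_{i=1}^p \|\bm{y}_i\|^2\big)^{1/2}$. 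Squaring both ends and combining the two displays then yields exactly $\big\|\sum_{i=1}^p \bm{y}_i\big\|^2 \leq p \sum_{i=1}^p \|\bm{y}_i\|^2$, which is the claim.

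As a cross-check I would record two alternative derivations. The first is a convexity argument: since $\bm{x} \mapsto \|\bm{x}\|^2$ is convex, Jensen's inequality gives $\big\|\tfrac{1}{p}\sum_{i=1}^p \bm{y}_i\big\|^2 \leq \tfrac{1}{p}\sum_{i=1}^p \|\bm{y}_i\|^2$, and multiplying through by $p^2$ recovers the bound. The second expands $\big\|\sum_i \bm{y}_i\big\|^2 = \sum_{i,j}\langle \bm{y}_i,\bm{y}_j\rangle$ and controls each cross term via $\langle \bm{y}_i,\bm{y}_j\rangle \leq \tfrac{1}{2}(\|\bm{y}_i\|^2 + \|\bm{y}_j\|^2)$, after which the $p^2$ summands collapse to $p\sum_i \|\bm{y}_i\|^2$.

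There is no genuine obstacle here, as the statement is a standard power-mean/Cauchy--Schwarz inequality; the only point requiring care is applying Cauchy--Schwarz to the correct pair of vectors, namely the all-ones vector and the vector of norms \emph{in $\mathbb{R}^p$} (not in the ambient $\mathbb{R}^d$), so that the constant that emerges is exactly $p$ rather than $p^2$ or $\sqrt{p}$. Equality is attained precisely when all the $\bm{y}_i$ are equal, which also serves as a quick sanity check that the constant $p$ cannot be improved.
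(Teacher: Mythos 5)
Your proposal is correct in all three of its variants. The paper itself disposes of \Cref{fact2} in a single line --- ``\Cref{fact2} follows from Jensen's inequality'' --- i.e., exactly your first cross-check: by convexity of $\bm{x} \mapsto \|\bm{x}\|^2$, $\big\|\frac{1}{p}\sum_{i=1}^p \bm{y}_i\big\|^2 \leq \frac{1}{p}\sum_{i=1}^p \|\bm{y}_i\|^2$, and multiplying by $p^2$ gives the claim. Your \emph{primary} route (triangle inequality followed by Cauchy--Schwarz against the all-ones vector in $\mathbb{R}^p$) is a genuinely different derivation, and it is carried out correctly, including the key point of care you flag: Cauchy--Schwarz must be applied to the vector of norms $(\|\bm{y}_1\|,\ldots,\|\bm{y}_p\|)$ in $\mathbb{R}^p$, which is what produces the constant $p$ rather than something worse. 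Comparing the two: the paper's Jensen appeal is the shortest possible argument but leans on convexity of the squared norm as a black box; your Cauchy--Schwarz chain is marginally longer but fully self-contained, and it has the side benefit of exposing the equality case (all $\bm{y}_i$ equal) and hence the sharpness of the constant $p$ --- something the paper's one-liner leaves implicit. Your third variant, expanding $\sum_{i,j}\langle \bm{y}_i,\bm{y}_j\rangle$ and bounding cross terms by $\frac{1}{2}(\|\bm{y}_i\|^2+\|\bm{y}_j\|^2)$, is also sound and is the version that generalizes most readily to weighted analogues. Any one of the three would serve; no gaps.
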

\noindent \Cref{fact2} follows from Jensen's inequality.

\begin{fact}
\label{new-fact-1}
Suppose $x \in (0,1)$. Then for any positive integer $m$ such that $m x \leq \frac{1}{4}$, we have:
\begin{equation}
    \frac{1 - (1-x)^m}{x} \leq m\Big(1 - \frac{11(m-1)}{24}x\Big).
\end{equation}
\end{fact}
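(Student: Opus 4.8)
The plan is to expand the numerator $1-(1-x)^m$ by the binomial theorem and compare the resulting finite power series against the claimed bound term by term. Writing $(1-x)^m = \sum_{k=0}^m \binom{m}{k}(-x)^k$ and dividing by $x$ gives
\[
\frac{1-(1-x)^m}{x} = \sum_{k=1}^m \binom{m}{k}(-1)^{k+1}x^{k-1} = m - \frac{m(m-1)}{2}x + \sum_{k=3}^m \binom{m}{k}(-1)^{k+1}x^{k-1}.
\]
Since the target is $m - \tfrac{11m(m-1)}{24}x$, peeling off the two explicit leading terms reduces the claim to showing that the tail $T := \sum_{k=3}^m \binom{m}{k}(-1)^{k+1}x^{k-1}$ satisfies $T \leq \big(\tfrac12 - \tfrac{11}{24}\big)m(m-1)x = \tfrac{m(m-1)}{24}x$. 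This is the heart of the matter; note the tail is vacuous for $m \in \{1,2\}$, so I would dispatch those two cases by direct substitution and henceforth assume $m \geq 3$.

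The key step is to bound the alternating tail $T$ by its first term. First I would verify that the magnitudes $|a_k| := \binom{m}{k}x^{k-1}$ are non-increasing for $k \geq 3$: the ratio is
\[
\frac{|a_{k+1}|}{|a_k|} = \frac{m-k}{k+1}\,x \leq \frac{m}{4}\,x \leq \frac{1}{16} < 1,
\]
using $k+1 \geq 4$ together with the hypothesis $mx \leq \tfrac14$. Because $T$ is then an alternating sum whose term magnitudes are non-increasing and whose leading term $\binom{m}{3}x^2$ is positive, its partial sums oscillate inside $[\,a_3 - a_4,\ a_3\,]$, so $T \leq \binom{m}{3}x^2 = \tfrac{m(m-1)(m-2)}{6}x^2$.

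Finally it remains to check $\tfrac{m(m-1)(m-2)}{6}x^2 \leq \tfrac{m(m-1)}{24}x$, which after cancelling the positive factor $m(m-1)x$ is equivalent to $(m-2)x \leq \tfrac14$; this follows at once from $(m-2)x \leq mx \leq \tfrac14$. Assembling the three displays yields the stated inequality.

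The main obstacle is the rigorous justification of the alternating-series upper bound $T \leq \binom{m}{3}x^2$, i.e. establishing that the term magnitudes are genuinely non-increasing across the entire range $3 \leq k \leq m$ rather than just for small $k$. This is precisely where the quantitative hypothesis $mx \leq \tfrac14$ is used, and it is worth being careful that the bound $\tfrac{m-k}{k+1}x \leq \tfrac{m}{4}x$ holds uniformly in $k$; once that monotonicity is secured, the remaining comparisons are one-line computations.
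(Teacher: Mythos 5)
Your proof is correct and follows essentially the same route as the paper: both rest on the binomial expansion of $(1-x)^m$, reduction of the claim to controlling the cubic-order term, and the same closing estimate $(m-2)x \le mx \le \tfrac14$. In fact your tail bound $T \le \binom{m}{3}x^2$ is exactly equivalent to the truncation inequality $(1-x)^m \ge 1 - mx + \binom{m}{2}x^2 - \binom{m}{3}x^3$ that the paper asserts without justification, so your alternating-series argument (where the hypothesis $mx \le \tfrac14$ guarantees the term magnitudes are decreasing) supplies the one step the paper's proof leaves implicit.
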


\begin{proof}
Using the Binomial expansion, we have:
\begin{equation}
    (1-x)^m \geq 1 - m x + \frac{m(m-1)}{2}x^2 - \frac{m(m-1)(m-2)}{6}x^3.
\end{equation}
Thus, 
\begin{flalign}
    \frac{1 - (1-x)^m}{x} & \leq m\Big\{1 - \frac{(m-1)}{2}x + \frac{(m-1)(m-2)}{6}x^2\Big\}
    \\
    \label{eq:mar7-100}
    & \leq m\Big\{1 - \frac{(m-1)}{2}x + \frac{(m-1)}{6}x\underbrace{\Big(\frac{m-2}{4 m}\Big)}_{\leq \frac{1}{4}}\Big\}
    \\
    & \leq m\Big(1 - \frac{11(m-1)}{24}x\Big).
\end{flalign}
Here, \cref{eq:mar7-100} follows from the fact that $m x \leq \frac{1}{4}$. 
\end{proof}

\section{Experimental Details}
\label{sec:expt_det}
First, we explain the procedure we have used to generate heterogeneous data for our FL experiments in \Cref{sec:expts}. For each dataset (individually), the training data was first sorted based on labels and then divided into $5n$ equal data-shards, where $n$ is the number of clients. Splitting the data in this way ensures that each shard contains data from only one class for all datasets (and because $n$ was chosen appropriately). Now, each client is assigned 5 shards chosen uniformly at random without replacement which ensures that each client can have data belonging to at most 5 distinct classes.
\\
\\
Next, we specify the learning rate schedule for our 
experiments in \Cref{sec:expts}.
We use $\beta_k = \eta_k$ for all $k$. We employ the learning rate scheme suggested in \cite{bottou2012stochastic} where we decrease the local learning rate by a factor of 0.99 after every round, i.e. $\eta_k = (0.99)^k \eta_0$. 
We search the best initial local learning rates $\eta_0$ over $\{10^{-3}, 2 \times 10^{-3}, 4 \times 10^{-3}, 8 \times 10^{-3}, 1.6 \times 10^{-2}, 3.2 \times 10^{-2}, 6.4 \times 10^{-2}\}$ in each case. Server momentum = 0.8 is also applied (at the server).

\end{document}